\documentclass{article}
\pdfoutput=1
\usepackage{microtype}
\usepackage{graphicx}
\usepackage{booktabs}
\usepackage{bbm}
\usepackage{amsmath}
\usepackage{amssymb}
\usepackage{mathtools}
\usepackage{amsthm}
\usepackage{algorithm}   
\usepackage{algorithmic}
\usepackage{amsfonts}
\usepackage{multirow}
\usepackage{subcaption}

\usepackage{hyperref}
\usepackage[capitalize,noabbrev]{cleveref}
\usepackage[square,numbers,sort&compress]{natbib}
\usepackage[preprint]{icml2026}

\theoremstyle{plain}
\newtheorem{theorem}{Theorem}[section]

\newtheorem{corollary}[theorem]{Corollary}
\theoremstyle{definition}

\newtheorem{assumption}[theorem]{Assumption}
\theoremstyle{remark}
\newtheorem{remark}[theorem]{Remark}

\usepackage[T1]{fontenc}
\definecolor{myblue}{RGB}{70,130,180}
\definecolor{mygreen}{RGB}{60,179,113}
\definecolor{myorange}{RGB}{255,140,0}
\definecolor{mygray}{RGB}{100,100,100}
\definecolor{backpropcolor}{RGB}{220,20,60}

\usepackage[textsize=tiny]{todonotes}

\icmltitlerunning{From Simple to Complex: Curriculum-Guided Physics-Informed Neural Networks via Gaussian Mixture Models}

\begin{document}
	
	\twocolumn[
	\icmltitle{From Simple to Complex: Curriculum-Guided Physics-Informed Neural Networks via Gaussian Mixture Models}

	\begin{icmlauthorlist}
		\icmlauthor{Jianan Yang}{xjtu}
		\icmlauthor{Yiran Wang}{xjtu}
		\icmlauthor{Shuai Li}{xjtu}
		\icmlauthor{Fujun Cao}{zyu,shaoxinglab}
		\icmlauthor{Xuefei Yan}{ams}
		\icmlauthor{Junmin Liu}{xjtu}
	\end{icmlauthorlist}
	
	\icmlaffiliation{xjtu}{Xi'an Jiaotong University, Xi'an, China}
	\icmlaffiliation{zyu}{College of International Business, Zhejiang Yuexiu University, Shaoxing, China}
	\icmlaffiliation{shaoxinglab}{Shaoxing Key Laboratory of Intelligent Monitoring and Prevention of Smart Society, Shaoxing, China}
	\icmlaffiliation{ams}{Institute of Systems Engineering, AMS, Beijing, China}
	\icmlcorrespondingauthor{Junmin Liu}{junminliu@mail.xjtu.edu.cn}
	
	\icmlkeywords{Physics-Informed Neural Networks, Partial Differential Equations, Gaussian Mixture Models, Curriculum Learning}
	
	\vskip 0.3in
	]
	
	\printAffiliationsAndNotice{}
	
	\begin{abstract}
		Physics-informed neural networks (PINNs) offer a mesh-free framework for solving partial differential equations (PDEs), yet training often suffers from gradient pathologies, spectral bias, and poor convergence, especially for problems with strong nonlinearity, sharp gradients, or multiscale features. We propose the Curriculum-Guided Gaussian Mixture Physics-Informed Neural Network (CGMPINN), which integrates Gaussian mixture modeling with dynamic curriculum learning. Specifically, a GMM is periodically fitted to the PDE residual distribution to quantify spatially varying learning difficulty. A smooth curriculum schedule progressively shifts training focus from easy to harder regions, while precision-based variance modulation suppresses unreliable clusters during early optimization. This dual curriculum is governed by a shared curriculum parameter and can be combined with self-adaptive loss balancing. We further establish theoretical guarantees, including sublinear convergence of the gradient norm for the induced time-varying loss, uniform equivalence between the curriculum-weighted and standard PDE losses, and a generalization bound with an explicit weighting-induced bias characterization. Experiments on six benchmark PDEs spanning elliptic, parabolic, hyperbolic, advection-dominated, and nonlinear reaction-diffusion types show that CGMPINN consistently achieves the lowest relative $L_2$ and maximum absolute errors among all compared methods, reducing relative $L_2$ error by up to 97.8\% over the standard PINN at comparable cost. Our code is publicly available at \url{https://github.com/Mathematics-Yang/CGMPINN}.
	\end{abstract}
	
	\section{Introduction}
	
	Partial differential equations (PDEs) are the foundational mathematical language for modeling complex physical phenomena across scientific and engineering disciplines \cite{PINN_Raissi1,Karniadakis2}. Classical numerical methods such as the finite difference method (FDM), finite element method (FEM), and spectral methods \cite{Zienkiewicz3,Fornberg4} have achieved remarkable success, yet encounter inherent difficulties in high-dimensional problems, multi-scale systems, complex geometries, and inverse problems with sparse and noisy data \cite{Han5,Sirignano6}.
	
	Motivated by these challenges, physics-informed neural networks (PINNs), pioneered by Raissi et al. \cite{PINN_Raissi1}, have emerged as a promising mesh-free paradigm for PDE solving. PINNs encode governing PDEs and initial/boundary conditions (I/BCs) as soft regularization terms in the loss function of a neural network \cite{Lagaris7,Dissanayake8}. By minimizing a composite loss combining data misfit and PDE residual, PINNs learn continuous, mesh-free solution approximations without requiring large labeled datasets \cite{Yang9,Zhang10}, enabling diverse applications in fluid mechanics \cite{Raissi11}, biomedical engineering \cite{Sahli12}, geophysical inversion \cite{Ishitsuka13}, and nano-optics \cite{Chen14}, as reviewed in \cite{Karniadakis2,Cuomo15}.
	
	However, vanilla PINNs exhibit well-documented failure modes. Imbalanced gradients between the PDE residual loss and the I/BCs loss can lead to stiff gradient flow and convergence to suboptimal minima \cite{Wang16,Wang17}, particularly for PDEs with strong nonlinearity, convection dominance, or multi-scale features \cite{Krishnapriyan18,Fuks19}. In addition, the non-convex loss landscape, poor conditioning of the optimization problem, spectral bias of neural networks \cite{Rahaman21,Xu22}, and static uniform sampling of collocation points \cite{Daw23,Wu24} further degrade solution accuracy.
	
	Extensive research has been devoted to addressing these issues. Wang et al. \cite{Wang16} proposed gradient-statistics-based learning rate annealing for adaptive loss balancing, followed by other adaptive loss weighting \cite{Zhu25,Xiang26} and gradient normalization techniques \cite{Chen27}. Yu et al. \cite{Yu28} introduced gPINN, a gradient-enhanced formulation incorporating PDE derivative information into training. Tao et al. \cite{Tao29} developed LNN-PINN, a liquid residual gating architecture, to improve predictive accuracy. Dodge et al. \cite{Dodge30} developed STAR-PINN, a stacked adaptive residual architecture, to enhance convergence stability and computational efficiency. From the sampling perspective, Wu et al. \cite{Wu24} presented a comprehensive study of sampling strategies for physics-informed neural networks, covering both non-adaptive and residual-based adaptive schemes, while Daw et al. \cite{Daw23} proposed the retain-resample-release (R3) framework. Importance sampling \cite{Nabian31,Zhang32} and Gaussian mixture distribution-based methods \cite{Jiao33} have also been explored for targeted collocation point placement. Additionally, domain decomposition approaches \cite{Shukla34,Jagtap35} and causal training strategies \cite{Wang36} address optimization complexity and temporal causality, respectively, while Bayesian extensions \cite{Zhang37} and adversarial learning-based methods \cite{Yang38} have been explored for uncertainty quantification.
	
	Among these strategies, curriculum learning (CL) \cite{Bengio39} has emerged as a particularly effective paradigm that progressively increases problem difficulty during training. While Krishnapriyan et al. \cite{Krishnapriyan18} confirmed the lack of universal effectiveness of fixed-step curriculum learning across multiple benchmarks, recent studies have demonstrated that dynamic curriculum regularization, integrated with early stopping mechanisms, can significantly enhance PINN training \cite{Monaco40,Duffy41}. More recently, curriculum-enhanced adaptive sampling \cite{Cetinkaya42} and adaptive task decomposition strategies \cite{Yang43} have integrated progressive scheduling with residual-based refinement. However, most existing curriculum methods rely on manually predefined, fixed-step schedules requiring extensive problem-specific tuning \cite{Krishnapriyan18,Monaco40}, and even dynamic schemes \cite{Duffy41} primarily adjust global PDE parameters without quantifying the spatially varying difficulty. To effectively capture such localized and multi-modal error landscapes, Gaussian mixture models (GMMs) provide a principled way to identify regions with heterogeneous residual structure, uncertainty, and learning difficulty. Although GMMs have been successfully employed for adaptive sampling \cite{Daw23,Wu24,Nabian31,Zhang32,Jiao33}, their use as an explicit mechanism for curriculum learning in PINNs remains limited. This motivates a data-driven curriculum strategy in which probabilistic information extracted from the residual distribution is used to organize training from easier regions to more difficult ones.
	
	Based on this insight, we propose the Curriculum-Guided Gaussian Mixture Physics-Informed Neural Network (CGMPINN), a unified framework that integrates Gaussian mixture modeling with dynamic curriculum learning for PINN training. The main contributions of this work are summarized as follows:
	\begin{itemize}
		\item We propose CGMPINN, a unified framework that integrates Gaussian mixture modeling with dynamic curriculum learning for PINN training. A GMM is periodically fitted to the PDE residual distribution to quantify spatially varying learning difficulty, and a smooth curriculum schedule progressively shifts training focus from easy to hard regions. A precision-based variance modulation mechanism further suppresses unreliable clusters during early optimization. The entire curriculum is governed by a single shared parameter and can be seamlessly combined with self-adaptive loss balancing.
		\item We establish three formal theoretical guarantees for CGMPINN: (i)~uniform equivalence between the curriculum-weighted and standard PDE losses; (ii)~sublinear convergence of the gradient norm for the induced time-varying total loss under summable objective drift; and (iii)~a Rademacher-complexity-based generalization bound for the weighted empirical PDE loss, together with an explicit characterization of the weighting-induced bias.
		\item We conduct systematic experiments on six benchmark PDEs spanning elliptic (1D and 2D), parabolic, hyperbolic, advection-dominated, and nonlinear reaction-diffusion regimes. Under identical architectures and training settings, CGMPINN consistently outperforms the standard PINN \cite{PINN_Raissi1}, lbPINN \cite{Xiang26}, gPINN \cite{Yu28}, LNN-PINN \cite{Tao29}, and STAR-PINN \cite{Dodge30} across all error metrics, reducing the relative $L_2$ error by up to 97.8\% at comparable computational cost. An ablation study further confirms the complementary roles of the GMM-based difficulty quantification and the curriculum scheduling components.
	\end{itemize}
	
	The remainder of this paper is organized as follows. Section~2 presents the proposed CGMPINN framework, including the curriculum-guided Gaussian mixture module, self-adaptive loss balancing, and the theoretical analysis of convergence, loss equivalence, and generalization. Section~3 details the numerical experiments and comparative results. Section~4 concludes the paper and discusses future directions.
	
	\section{CGMPINN Framework}
	\label{sec:cgmpinn}
	
	In this section, we present the CGMPINN framework. Following the problem formulation and a brief review of standard PINNs, we detail the two core components of our proposed approach: a \emph{curriculum-guided Gaussian mixture} (CGM) module for difficulty-aware residual weighting, and an optional Relative Loss Balancing with Random Lookback (ReLoBRaLo)-based self-adaptive loss balancing mechanism. 
	The CGM module jointly exploits GMM posterior responsibilities,
	component-level precision, and a smooth easy-to-hard curriculum
	schedule.
	The end-to-end architecture is illustrated in
	Figure~\ref{fig:cgmpinn_architecture}.
	
	\begin{figure}[htbp]
		\centering
		\includegraphics[width=0.5\textwidth]{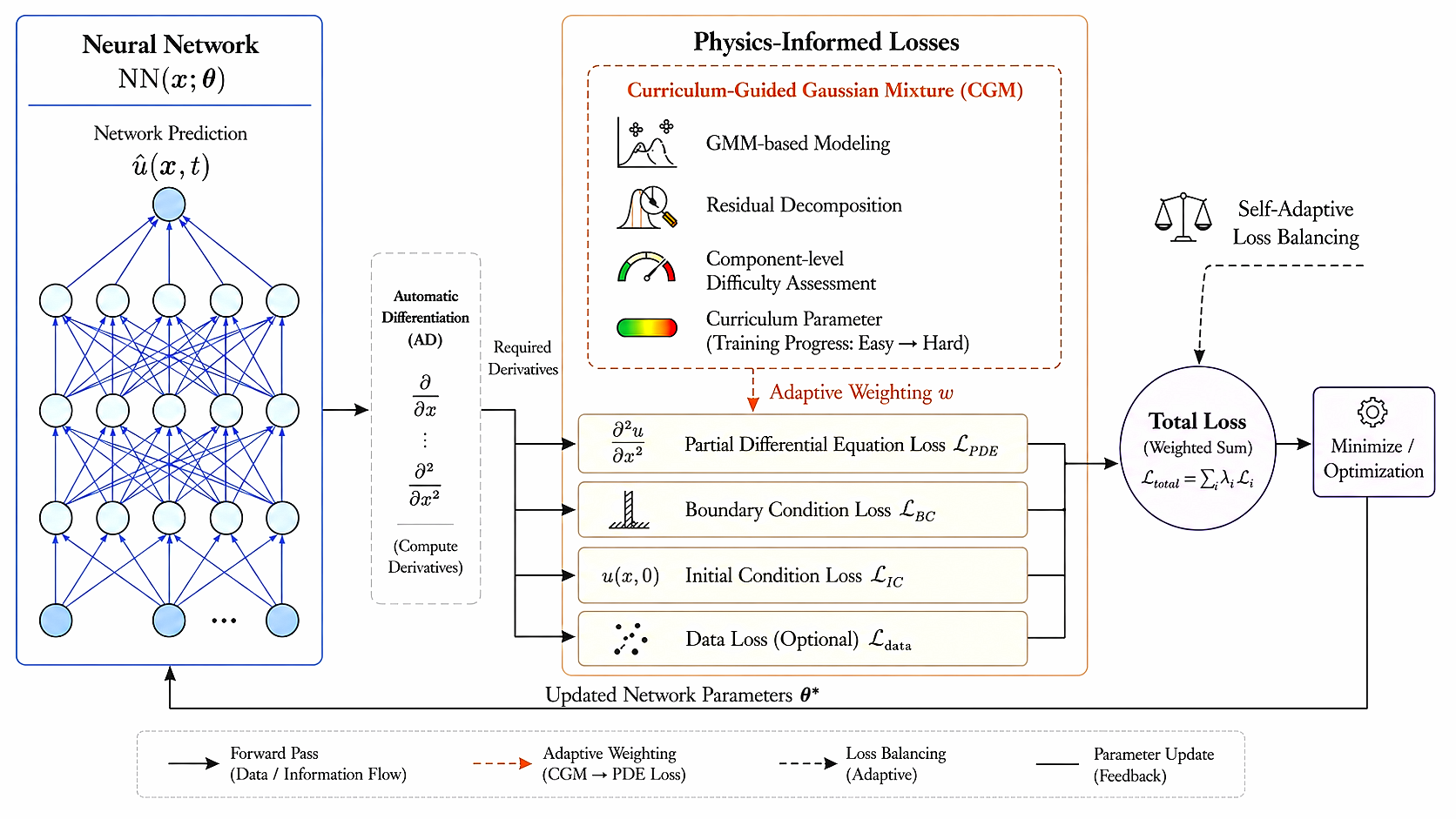}
		\caption{Architecture of the CGMPINN framework:
			a neural network approximator, automatic differentiation
			for PDE-compliant derivatives, the CGM module
			(GMM fitting, precision modulation, and curriculum scheduling),
			and an optional self-adaptive loss balancing mechanism.}
		\label{fig:cgmpinn_architecture}
	\end{figure}
	
	\subsection{Problem Formulation}
	
	We consider a general initial-boundary value problem on a bounded
	Lipschitz domain $\Omega\subset\mathbb{R}^d$ ($d$ is the spatial dimension) over $[0,T]$:
	\begin{equation}
		\begin{cases}
			\mathcal{D}[u](\boldsymbol{x},t)=f(\boldsymbol{x},t),
			& (\boldsymbol{x},t)\in\Omega\times[0,T],\\[2pt]
			\mathcal{B}[u](\boldsymbol{x},t)=g(\boldsymbol{x},t),
			& (\boldsymbol{x},t)\in\partial\Omega\times[0,T],\\[2pt]
			u(\boldsymbol{x},0)=u_0(\boldsymbol{x}),
			& \boldsymbol{x}\in\Omega,
		\end{cases}
		\label{eq:pde_ibvp}
	\end{equation}
	where $\boldsymbol{x}\in\Omega$ denotes the spatial coordinate,
	$t\in[0,T]$ is the time variable with final time $T>0$,
	$u:\Omega\times[0,T]\to\mathbb{R}$ is the unknown scalar field,
	$\mathcal{D}$ is a possibly nonlinear differential operator,
	$\mathcal{B}$ encodes boundary conditions
	(Dirichlet, Neumann, or Robin),
	$\partial\Omega$ denotes the boundary of $\Omega$,
	$f$ is the source term, $g$ is the boundary data,
	and $u_0$ is the initial condition.
	For stationary problems the temporal dependence and initial condition
	are omitted.
	
	\subsection{Standard PINN Formulation and Limitations}
	
	In the PINN framework~\cite{PINN_Raissi1}, the solution is approximated by
	$\hat{u}(\boldsymbol{x},t)=\mathrm{NN}(\boldsymbol{x},t;\,\theta)$,
	where $\mathrm{NN}(\cdot;\theta)$ denotes a neural network
	(e.g., a fully-connected feedforward network in our experiments)
	with trainable parameters $\theta\in\mathbb{R}^p$.
	Automatic differentiation \cite{Baydin44} provides the fundamental mechanism for computing exact derivatives in the residual-based optimization of physics-informed neural networks.
	Given $N_\Omega$ interior collocation points
	$\{(\boldsymbol{x}_i,t_i)\}_{i=1}^{N_\Omega}$,
	$N_{\partial\Omega}$ boundary points
	$\{(\boldsymbol{x}_j,t_j)\}_{j=1}^{N_{\partial\Omega}}$,
	and $N_{\mathrm{IC}}$ initial-condition points
	$\{\boldsymbol{x}_l\}_{l=1}^{N_{\mathrm{IC}}}$,
	the pointwise residuals are defined as
	\begin{align}
		r_{\mathrm{PDE}}(\boldsymbol{x}_i,t_i;\,\theta)
		&= \mathcal{D}[\hat{u}](\boldsymbol{x}_i,t_i)
		- f(\boldsymbol{x}_i,t_i),
		\label{eq:res_pde}\\
		r_{\mathrm{BC}}(\boldsymbol{x}_j,t_j;\,\theta)
		&= \mathcal{B}[\hat{u}](\boldsymbol{x}_j,t_j)
		- g(\boldsymbol{x}_j,t_j),
		\label{eq:res_bc}\\
		r_{\mathrm{IC}}(\boldsymbol{x}_l;\,\theta)
		&= \hat{u}(\boldsymbol{x}_l,0) - u_0(\boldsymbol{x}_l).
		\label{eq:res_ic}
	\end{align}
	For notational brevity, we write
	$r_{\mathrm{PDE},i} = r_{\mathrm{PDE}}(\boldsymbol{x}_i,t_i;\,\theta)$,
	and similarly $r_{\mathrm{BC},j}$ and $r_{\mathrm{IC},l}$ for the
	boundary and initial-condition residuals at their respective points.
	The standard training objective is the composite MSE loss
	\begin{equation}
		\begin{split}
			\mathcal{L}_{\mathrm{total}}(\theta)
			=\;&\frac{1}{N_\Omega}\sum_{i=1}^{N_\Omega}r_{\mathrm{PDE},i}^{2}
			+\lambda_{\mathrm{BC}}
			\frac{1}{N_{\partial\Omega}}
			\sum_{j=1}^{N_{\partial\Omega}}r_{\mathrm{BC},j}^{2}\\
			&+\lambda_{\mathrm{IC}}
			\frac{1}{N_{\mathrm{IC}}}
			\sum_{l=1}^{N_{\mathrm{IC}}}r_{\mathrm{IC},l}^{2},
		\end{split}
		\label{eq:standard_pinn_loss}
	\end{equation}
	where $\lambda_{\mathrm{BC}},\lambda_{\mathrm{IC}}$ are fixed penalty
	weights. Despite its elegance, this formulation has well-documented
	limitations: fixed weights cannot adapt to evolving gradient magnitudes,
	leading to gradient pathologies~\cite{Wang16}; moreover, all collocation
	points are treated uniformly regardless of learning difficulty, causing
	the optimizer to be overwhelmed by hard residuals early in training and
	often converging to suboptimal local minima~\cite{Krishnapriyan18}.
	CGMPINN addresses both challenges through the mechanisms detailed below.
	
	\subsection{Curriculum-Guided Gaussian Mixture Module}
	
	The CGM module is the core of CGMPINN. It fits a Gaussian mixture model
	(GMM) to the PDE residuals, extracting three complementary information
	channels: (i)~\emph{posterior responsibilities} (latent variables) that
	softly assign each collocation point to mixture components;
	(ii)~\emph{component-level difficulty} derived from
	responsibility-weighted residual magnitudes;
	and (iii)~\emph{component-level precision} (inverse variance)
	that quantifies the reliability of each cluster.
	A smooth curriculum schedule then combines these signals to
	transition training progressively from easy, well-conditioned residual
	regions to hard, uncertain ones.
	
	\subsubsection{Residual Distribution Modeling via GMM}
	
	At training iteration $k$, the PDE residuals at all $N_\Omega$ interior
	collocation points, evaluated at the network parameters
	$\theta_{k-1}$ obtained from the previous iteration, are collected as
	\begin{equation}
		\mathcal{R}^{(k)}
		= \bigl\{r_i \triangleq
		r_{\mathrm{PDE}}(\boldsymbol{x}_i,t_i;\,\theta_{k-1})
		\bigr\}_{i=1}^{N_\Omega}.
		\label{eq:residual_set}
	\end{equation}
	Their distribution is modeled by a $K$-component Gaussian mixture
	\begin{equation}
		p(r) = \sum_{m=1}^{K}\pi_m\,
		\mathcal{N}(r\mid\mu_m,\sigma_m^2),
		\label{eq:gmm_pdf}
	\end{equation}
	where $\pi_m\!\geqslant\!0$, $\sum_m\pi_m\!=\!1$ are mixing coefficients and
	$(\mu_m,\sigma_m^2)$ are the mean and variance of the $m$-th component.
	The parameters
	$\Theta_{\mathrm{GMM}}=\{\pi_m,\mu_m,\sigma_m^2\}_{m=1}^{K}$
	are estimated via the Expectation-Maximization (EM) algorithm.
	Upon convergence, the posterior responsibility of component~$m$
	for sample $r_i$ is
	\begin{equation}
		\gamma_{im}
		= \frac{\pi_m\,\mathcal{N}(r_i\mid\mu_m,\sigma_m^2)}
		{\sum_{l=1}^{K}\pi_l\,
			\mathcal{N}(r_i\mid\mu_l,\sigma_l^2)}.
		\label{eq:gmm_posterior}
	\end{equation}
	These posterior responsibilities serve as \emph{soft latent assignments},
	indicating the degree to which each collocation point belongs to each
	mixture component, and form the bridge connecting component-level
	information to sample-level weights.
	
	\subsubsection{Difficulty Quantification, Precision Modulation,
		and Curriculum Scheduling}
	
	\textbf{Component-level difficulty.}\;
	The difficulty of the $m$-th Gaussian component is quantified as the
	posterior-weighted mean squared residual:
	\begin{equation}
		d_m = \frac{\sum_{i=1}^{N_\Omega}\gamma_{im}\,r_i^2}
		{\sum_{i=1}^{N_\Omega}\gamma_{im}+\epsilon},
		\label{eq:component_difficulty}
	\end{equation}
	where $\epsilon>0$ is a small positive constant introduced
	throughout this section to prevent division by zero in normalization
	and weighting steps.
	Components with large $d_m$ correspond to regions where the current
	network produces large PDE residuals and are thus deemed more difficult.
	To enable comparable scaling, the scores are normalized to the unit
	interval:
	\begin{equation}
		\tilde{d}_m
		= \frac{d_m - d_{\min}}
		{d_{\max} - d_{\min} + \epsilon},
		\label{eq:difficulty_normalized}
	\end{equation}
	where $d_{\min}=\min_m d_m$ and $d_{\max}=\max_m d_m$.
	
	\textbf{Curriculum parameter.}\;
	A curriculum parameter $\tau(k)\in[0,1]$ increases monotonically with
	training iteration $k$, governing the smooth transition from easy-sample
	focus to hard-sample focus:
	\begin{equation}
		\tau(k)
		= \min\!\Bigl(\frac{k}{k_{\max}\cdot c_{\mathrm{sat}}},\;1\Bigr),
		\label{eq:tau}
	\end{equation}
	where $k_{\max}$ is the total number of iterations and
	$c_{\mathrm{sat}}\in(0,1]$ is the saturation fraction that
	specifies the proportion of training at which $\tau$ reaches unity.
	For notational convenience, we suppress the iteration index and write
	$\tau$ in place of $\tau(k)$ in subsequent expressions.
	
	\textbf{Curriculum-controlled component weights.}\;
	For each component~$m$, an easy-favoring weight and a hard-favoring
	weight are defined via exponential mappings:
	\begin{equation}
		w_m^{\mathrm{easy}} = \exp(-\beta\tilde{d}_m),\;
		w_m^{\mathrm{hard}} = \exp(-\beta(1 - \tilde{d}_m)),
		\label{eq:easy_hard_weights}
	\end{equation}
	where $\beta>0$ controls the intensity of discrimination between easy
	and hard components. The CL weight for component~$m$ is the
	$\tau$-controlled convex combination:
	\begin{equation}
		w_m^{\mathrm{CL}}(\tau)
		= (1-\tau)\,w_m^{\mathrm{easy}}
		+ \tau\,w_m^{\mathrm{hard}}.
		\label{eq:curriculum_weight}
	\end{equation}
	
	\textbf{Precision-based variance modulation.}\;
	Beyond the difficulty scores derived from residual magnitudes, the GMM
	provides component-level variance estimates $\sigma_m^2$ that encode the
	\emph{spread} of residuals within each cluster. We exploit the
	inverse variance (precision) as a complementary reliability indicator.
	The normalized precision factor for component~$m$ is
	\begin{equation}
		v_m
		= \frac{(\sigma_m^2+\epsilon)^{-1}}
		{\max_{1\leqslant l\leqslant K}(\sigma_l^2+\epsilon)^{-1}}
		\;\in\;(0,1].
		\label{eq:precision_factor}
	\end{equation}
	Components with small variance yield $v_m\!\approx\!1$,
	indicating tightly concentrated, reliably learnable residual clusters;
	components with large variance yield small $v_m$, reflecting dispersed,
	uncertain residuals that may destabilize early-stage optimization.
	To maintain curriculum compatibility, the precision factor is itself
	modulated by the curriculum parameter~$\tau$:
	\begin{equation}
		\tilde{v}_m(\tau) = (1-\tau)\,v_m + \tau.
		\label{eq:effective_precision}
	\end{equation}
	During early training ($\tau\!\approx\!0$), the full precision weighting
	suppresses high-variance components, prioritizing stable learning
	regions. As training progresses ($\tau\!\to\!1$), the modulation
	smoothly fades to unity, allowing all components---including
	previously suppressed uncertain regions---to contribute equally once
	the network has established a reliable baseline.
	
	The final component-level (comp) weight integrates both the curriculum
	difficulty weight and the precision modulation:
	\begin{equation}
		w_m^{\mathrm{comp}}(\tau)
		= w_m^{\mathrm{CL}}(\tau)\cdot\tilde{v}_m(\tau).
		\label{eq:combined_weight}
	\end{equation}
	This design achieves a \emph{dual curriculum}: along the difficulty
	dimension, the network progresses from easy to hard residual regions;
	along the reliability dimension, it progresses from well-conditioned
	(low-variance) to uncertain (high-variance) clusters. Both transitions
	are synchronized through the shared curriculum parameter~$\tau$.
	
	\textbf{Sample-level weights.}\;
	The per-sample weight for the $i$-th collocation point is obtained by
	aggregating the component-level weights through the posterior
	responsibilities:
	\begin{equation}
		w_i = \sum_{m=1}^{K}\gamma_{im}\,w_m^{\mathrm{comp}}(\tau),
		\label{eq:sample_weight}
	\end{equation}
	and is subsequently normalized to unit mean to preserve the overall
	loss magnitude:
	\begin{equation}
		w_i \leftarrow
		\frac{N_\Omega\cdot w_i}
		{\sum_{j=1}^{N_\Omega}w_j + \epsilon}.
		\label{eq:weight_normalize}
	\end{equation}
	The curriculum-weighted PDE loss evaluated at the current parameters $\theta$ is then
	\begin{equation}
		\mathcal{L}_{\mathrm{PDE}}^{w}(\theta)
		= \frac{1}{N_\Omega}\sum_{i=1}^{N_\Omega}
		w_i\,r_{\mathrm{PDE},i}^{2},
		\label{eq:weighted_pde_loss}
	\end{equation}
	where $r_{\mathrm{PDE},i}=r_{\mathrm{PDE}}(\boldsymbol{x}_i,t_i;\theta)$
	denotes the residual at the current parameters $\theta$
	(over which the gradient is taken), and the weights $\{w_i\}$ are
	computed from the snapshot residuals $\{r_i\}$ at the previous
	parameters $\theta_{k-1}$ and held fixed during the current gradient step.
	In practice, the GMM parameters and sample weights are re-estimated
	every $k_{\mathrm{upd}}$ iterations to adapt to the evolving residual
	landscape.
	
	\subsection{Self-Adaptive Loss Balancing via ReLoBRaLo}
	
	To mitigate gradient imbalance among different loss components, the
	CGMPINN framework optionally integrates the Relative Loss Balancing
	with Random Lookback (ReLoBRaLo) mechanism~\cite{Xiang26}.
	Suppose the total loss consists of $C$ components
	(e.g., $C=3$ for the PDE, boundary, and initial-condition terms),
	and let $L_c(k)$ denote the value of the $c$-th loss component
	($\mathcal{L}_{\mathrm{PDE}}^{w}$, $\mathcal{L}_{\mathrm{BC}}$,
	$\mathcal{L}_{\mathrm{IC}}$, respectively) at iteration~$k$.
	ReLoBRaLo maintains an exponential moving average (EMA)
	\begin{equation}
		\bar{L}_c(k)
		= \alpha\,\bar{L}_c(k\!-\!1)+(1\!-\!\alpha)\,L_c(k),
		\label{eq:relobralo_ema}
	\end{equation}
	with smoothing coefficient $\alpha\in(0,1)$.
	At each iteration, a reference loss $L_c^{\mathrm{ref}}(k)$ is defined as
	\begin{equation}
		L_c^{\mathrm{ref}}(k)=
		\begin{cases}
			\bar{L}_c(k-1), & \text{with probability } \rho,\\
			L_c(k'), & \text{otherwise},
		\end{cases}
		\label{eq:relobralo_ref}
	\end{equation}
	where $\rho\in(0,1)$ and $k'< k$ is a uniformly sampled
	historical iteration (random lookback).
	The adaptive weight is computed via a $\kappa$-scaled softmax over the
	relative ratios
	$\varrho_c(k)=L_c(k)/(L_c^{\mathrm{ref}}(k)+\epsilon)$:
	\begin{equation}
		\lambda_c(k)
		= C\cdot
		\frac{\exp\!\bigl(\varrho_c(k)/\kappa\bigr)}
		{\sum_{c'=1}^{C}
			\exp\!\bigl(\varrho_{c'}(k)/\kappa\bigr)},
		\label{eq:relobralo_weight}
	\end{equation}
	where $\kappa>0$ is the softmax temperature parameter. This mechanism
	automatically up-weights loss terms that decrease slowly relative to
	their historical trend, promoting balanced optimization without manual
	tuning.
	The resulting adaptive weights $\{\lambda_c(k)\}$ replace the fixed
	penalty weights $\lambda_{\mathrm{BC}},\lambda_{\mathrm{IC}}$ introduced
	in Eq.~\eqref{eq:standard_pinn_loss}. When ReLoBRaLo is not activated,
	uniform weights $\lambda_c=1$ are used.
	
	\subsection{Total Loss}
	
	Combining the curriculum-weighted PDE loss with the boundary and initial
	condition losses, the total training objective at iteration~$k$ is
	\begin{equation}
		\begin{split}
			\mathcal{L}_{\mathrm{total}}(\theta;k)
			=\;&\lambda_{\mathrm{PDE}}(k)\,
			\mathcal{L}_{\mathrm{PDE}}^{w}(\theta)
			+\lambda_{\mathrm{BC}}(k)\,
			\mathcal{L}_{\mathrm{BC}}(\theta)\\
			&+\lambda_{\mathrm{IC}}(k)\,
			\mathcal{L}_{\mathrm{IC}}(\theta),
		\end{split}
		\label{eq:total_loss}
	\end{equation}
	where $\mathcal{L}_{\mathrm{BC}}$ and $\mathcal{L}_{\mathrm{IC}}$
	are the standard MSE boundary and initial-condition losses:
	\begin{equation}
		\mathcal{L}_{\mathrm{BC}}(\theta)
		=\frac{1}{N_{\partial\Omega}}\sum_{j=1}^{N_{\partial\Omega}}r_{\mathrm{BC},j}^{2},\;
		\mathcal{L}_{\mathrm{IC}}(\theta)
		=\frac{1}{N_{\mathrm{IC}}}\sum_{l=1}^{N_{\mathrm{IC}}}r_{\mathrm{IC},l}^{2},
		\label{eq:bc_ic_loss}
	\end{equation}
	and $\lambda_{\mathrm{PDE}}(k)$, $\lambda_{\mathrm{BC}}(k)$,
	$\lambda_{\mathrm{IC}}(k)$ are the ReLoBRaLo adaptive weights
	(or set to unity when ReLoBRaLo is not activated).
	At each iteration, the network evaluates PDE, boundary, and initial
	residuals via automatic differentiation; every $k_{\mathrm{upd}}$
	iterations, the GMM is re-fitted to the current residuals and the
	sample weights $\{w_i\}$ are updated according to the curriculum
	parameter~$\tau(k)$.
	The parameters are then updated by a gradient-based optimizer
	(e.g., Adam followed by L-BFGS \cite{Haghighat45}).
	The framework is non-intrusive, requiring no modification to the PDE
	residual formulation or network architecture, and is applicable to both
	forward and inverse problems involving linear and nonlinear operators.

\subsection{Theoretical Properties of CGMPINN}
\label{sec:theory}

We now establish the main theoretical properties of CGMPINN.
The analysis targets the time-varying objective
$\mathcal{L}_{\mathrm{total}}(\theta;k)$ under idealized full-batch
gradient descent, so as to isolate the effect of the curriculum-guided
GMM reweighting and the induced objective drift.
The practical Adam$\to$L-BFGS strategy is regarded as an
effective solver for the same objective; a complete iterate-level theory
for this hybrid optimizer is left for future work.

For the following results, we denote by
$\mathcal{L}_{\mathrm{PDE}}(\theta)
=\frac{1}{N_\Omega}\sum_{i=1}^{N_\Omega}r_{\mathrm{PDE},i}^{2}$
the standard (unweighted) empirical PDE loss.
In the convergence statement (Theorem~\ref{thm:main_nonconvex}),
$K$ denotes the iteration horizon of the gradient descent analysis
(not the number of GMM components).
In the generalization bound (Theorem~\ref{thm:main_generalization}),
$\rho$ denotes the data-generating distribution over
$\Omega\times[0,T]$ (not the ReLoBRaLo parameter from
Section~\ref{sec:cgmpinn}).

\begin{theorem}[Uniform equivalence between weighted and standard PDE losses]
	\label{thm:main_equiv}
	Assume that the GMM variances satisfy
	$0<\underline{\sigma}^2\leqslant \sigma_m^2\leqslant \overline{\sigma}^2<\infty$
	for all mixture components.
	Then there exist constants $0<c_-\leqslant c_+<\infty$, depending only on
	$\beta$, $\epsilon$, $N_\Omega$, $\underline{\sigma}^2$, and
	$\overline{\sigma}^2$, such that for every $\theta$,
	\begin{equation}
		c_-\,\mathcal{L}_{\mathrm{PDE}}(\theta)
		\leqslant
		\mathcal{L}_{\mathrm{PDE}}^{w}(\theta)
		\leqslant
		c_+\,\mathcal{L}_{\mathrm{PDE}}(\theta).
		\label{eq:main_loss_equiv}
	\end{equation}
	Hence, the curriculum-weighted PDE loss is uniformly equivalent to the
	standard empirical PDE residual loss, and optimization of
	$\mathcal{L}_{\mathrm{PDE}}^{w}(\theta)$ remains faithful to the
	original PDE objective up to uniform constants.
\end{theorem}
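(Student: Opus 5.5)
The argument reduces the claim to uniform upper and lower bounds on the normalized sample weights $w_i$. Write $\hat w_i$ for the weight after \eqref{eq:weight_normalize}. If we can show $w_{\min}\leqslant \hat w_i\leqslant w_{\max}$ for all $i$, with constants depending only on the allowed quantities, then \eqref{eq:main_loss_equiv} follows at once with $c_-=w_{\min}$ and $c_+=w_{\max}$. Indeed, $\mathcal{L}^w_{\mathrm{PDE}}(\theta)=\frac1{N_\Omega}\sum_i \hat w_i r_{\mathrm{PDE},i}^2$ has nonnegative summands. The weights are frozen snapshots computed at $\theta_{k-1}$, so the bounds hold uniformly in the current $\theta$.

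\textbf{Component-level bounds.} We bound $w_m^{\mathrm{comp}}(\tau)$ uniformly in $m$ and $\tau\in[0,1]$.
\begin{itemize}
\item Since $\tilde d_m\in[0,1)$, both $w_m^{\mathrm{easy}}$ and $w_m^{\mathrm{hard}}$ lie in $[e^{-\beta},1]$. Their convex combination $w_m^{\mathrm{CL}}(\tau)$ therefore also lies in $[e^{-\beta},1]$.
\item For the precision factor, the variance assumption gives
\[
v_m\geqslant \frac{(\overline{\sigma}^2+\epsilon)^{-1}}{(\underline{\sigma}^2+\epsilon)^{-1}}=\frac{\underline{\sigma}^2+\epsilon}{\overline{\sigma}^2+\epsilon}=:v_\ast\in(0,1].
\]
\item Hence $\tilde v_m(\tau)=(1-\tau)v_m+\tau\in[v_\ast,1]$.
\end{itemize}
Combining these, $w_m^{\mathrm{comp}}\in[e^{-\beta}v_\ast,\,1]$.

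\textbf{Sample-level bounds.} The responsibilities satisfy $\gamma_{im}\geqslant0$ and $\sum_m\gamma_{im}=1$. The unnormalized weight $w_i=\sum_m\gamma_{im}w_m^{\mathrm{comp}}$ is therefore a convex combination and lies in $[e^{-\beta}v_\ast,1]$. Normalizing gives
\[
\hat w_i=\frac{N_\Omega w_i}{\sum_j w_j+\epsilon}.
\]
For the upper bound, use $\sum_j w_j\geqslant N_\Omega e^{-\beta}v_\ast$ and $w_i\leqslant 1$:
\[
\hat w_i\leqslant \frac{N_\Omega}{N_\Omega e^{-\beta}v_\ast+\epsilon}\leqslant \frac{e^{\beta}}{v_\ast}.
\]
For the lower bound, use $\sum_j w_j\leqslant N_\Omega$ and $w_i\geqslant e^{-\beta}v_\ast$:
\[
\hat w_i\geqslant \frac{N_\Omega e^{-\beta}v_\ast}{N_\Omega+\epsilon}.
\]
This yields
\[
c_-=\frac{N_\Omega\, e^{-\beta}}{N_\Omega+\epsilon}\cdot\frac{\underline{\sigma}^2+\epsilon}{\overline{\sigma}^2+\epsilon},\qquad c_+=e^{\beta}\,\frac{\overline{\sigma}^2+\epsilon}{\underline{\sigma}^2+\epsilon}.
\]
These depend only on $\beta$, $\epsilon$, $N_\Omega$, $\underline{\sigma}^2$ and $\overline{\sigma}^2$, as the theorem requires. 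They also satisfy $c_-\leqslant 1\leqslant c_+$, so $0<c_-\leqslant c_+<\infty$.

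\textbf{Main obstacle.} The delicate step is the precision lower bound. Without the variance assumption, $v_m$ can be arbitrarily close to $0$ when one cluster is much more dispersed than another. In that case the lower constant $c_-$ degenerates at $\tau\approx0$, and this is exactly why the hypothesis is needed.

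A secondary point is that $\tilde d_m$ must lie in $[0,1]$. This holds because $0\leqslant d_m-d_{\min}\leqslant d_{\max}-d_{\min}<d_{\max}-d_{\min}+\epsilon$. It also holds in the degenerate case where all $d_m$ coincide, where $\tilde d_m=0$.

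Everything else is convexity and monotonicity. The argument does not depend on the GMM's EM convergence or on the particular residual values, so the bounds hold uniformly in both $\theta$ and the iteration $k$.
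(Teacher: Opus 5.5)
Your proposal is correct and follows the paper's proof essentially step for step. It uses the same $[e^{-\beta},1]$ bounds on the easy/hard weights and their convex combination, the same precision floor $v_\ast=(\underline{\sigma}^2+\epsilon)/(\overline{\sigma}^2+\epsilon)$, the same convex-combination argument through the responsibilities, and the same normalization bounds, so it arrives at the same $c_-=N_\Omega e^{-\beta}v_\ast/(N_\Omega+\epsilon)$.

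The only difference is cosmetic. The paper keeps $c_+=N_\Omega/(N_\Omega e^{-\beta}v_\ast+\epsilon)$, while you relax it to the slightly looser, $N_\Omega$-independent $e^{\beta}/v_\ast$, which makes $c_-\leqslant 1\leqslant c_+$ immediate.
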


\begin{remark}
	Theorem~\ref{thm:main_equiv} ensures that minimizing the curriculum-weighted objective $\mathcal{L}_{\mathrm{PDE}}^{w}(\theta)$ does not deviate from the original PDE residual minimization problem: any minimizer of the weighted loss is also an approximate minimizer of the standard loss, and vice versa.
\end{remark}

\begin{theorem}[Sublinear convergence of the time-varying total loss]
	\label{thm:main_nonconvex}
	Assume that, for each iteration $k$, the total loss
	$\mathcal{L}_{\mathrm{total}}(\theta;k)$ is $L$-smooth (i.e., its
	gradient is $L$-Lipschitz with Lipschitz constant $L>0$) and uniformly
	bounded from below, and that the inter-iteration drift induced by the
	curriculum update, GMM re-fitting, sample reweighting, and adaptive
	loss balancing is summable.
	Then, for the full-batch gradient descent iteration
	\begin{equation}
		\theta_{k+1}
		=
		\theta_k
		-
		\eta \nabla_{\theta}\mathcal{L}_{\mathrm{total}}(\theta_k;k),
		\quad
		\eta\in(0,1/L],
		\label{eq:main_gd_update}
	\end{equation}
	where $\eta>0$ denotes the step size,
	there exists a constant $C>0$, independent of $K$, such that
	\begin{equation}
		\min_{0\leqslant t\leqslant K}
		\left\|
		\nabla_{\theta}\mathcal{L}_{\mathrm{total}}(\theta_t;t)
		\right\|^2
		\leqslant
		\frac{C}{K+1}.
		\label{eq:main_sublinear_rate}
	\end{equation}
	In particular,
	\begin{equation}
		\left\|
		\nabla_{\theta}\mathcal{L}_{\mathrm{total}}(\theta_k;k)
		\right\|\to 0
		\quad \text{as } k\to\infty.
		\label{eq:main_stationarity}
	\end{equation}
\end{theorem}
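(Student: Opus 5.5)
The plan is the standard descent-lemma argument for nonconvex gradient descent, adapted to a time-varying objective. I would first make the two informal hypotheses precise. Write $F_k(\theta) := \mathcal{L}_{\mathrm{total}}(\theta;k)$ and let $F_{\inf}$ be the uniform lower bound, so $F_k(\theta)\geqslant F_{\inf}$ for all $k,\theta$. I read \emph{summable drift} along the trajectory as
\[
\delta_k := F_{k+1}(\theta_{k+1}) - F_k(\theta_{k+1}),
\qquad
\textstyle\sum_{k\geqslant 0}\max(\delta_k,0) =: D < \infty .
\]
Only the positive part is needed, and a uniform version $\sup_\theta |F_{k+1}(\theta)-F_k(\theta)|\leqslant \delta_k$ with $\sum_k \delta_k<\infty$ implies this. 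Between GMM re-fits ($k_{\mathrm{upd}}$ iterations) the PDE weights are frozen, so only $\tau(k)$ and the ReLoBRaLo factors change. After $k\geqslant c_{\mathrm{sat}} k_{\max}$ we have $\tau\equiv 1$, so in practice this drift is supported on finitely many re-fit steps. For the theorem I would simply take summability as the stated assumption.

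\textbf{Step 1 (one-step descent).} Apply $L$-smoothness of $F_k$ to the update $\theta_{k+1}=\theta_k-\eta\nabla F_k(\theta_k)$ with $\eta\leqslant 1/L$. This gives
\[
F_k(\theta_{k+1})\leqslant F_k(\theta_k)-\eta\bigl(1-\tfrac{L\eta}{2}\bigr)\|\nabla F_k(\theta_k)\|^2\leqslant F_k(\theta_k)-\tfrac{\eta}{2}\|\nabla F_k(\theta_k)\|^2 .
\]

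\textbf{Step 2 (insert the drift).} Add $\delta_k$ to both sides to obtain
\[
F_{k+1}(\theta_{k+1})\leqslant F_k(\theta_k)-\tfrac{\eta}{2}\|\nabla F_k(\theta_k)\|^2+\max(\delta_k,0).
\]

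\textbf{Step 3 (telescope).} Sum this inequality for $k=0,\dots,K$ and use the lower bound $F_{K+1}(\theta_{K+1})\geqslant F_{\inf}$. This yields
\[
\tfrac{\eta}{2}\sum_{t=0}^{K}\|\nabla F_t(\theta_t)\|^2\leqslant F_0(\theta_0)-F_{\inf}+D .
\]
Bound the minimum by the average to get the claim with $C=2\bigl(F_0(\theta_0)-F_{\inf}+D\bigr)/\eta$. This constant is independent of $K$ because $D$ bounds the full infinite series.

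\textbf{Step 4 (stationarity).} Letting $K\to\infty$ in the Step 3 bound shows $\sum_{t}\|\nabla F_t(\theta_t)\|^2<\infty$. Hence the terms tend to zero, which gives $\|\nabla F_k(\theta_k)\|\to 0$. This is stronger than the min-bound alone and directly gives the ``in particular'' statement.

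The main obstacle is not the calculation but justifying the hypotheses in the CGMPINN setting. Three points need care. First, a single $L$ must hold uniformly in $k$. This requires that $\lambda_c(k)\leqslant C_{\mathrm{cls}}$, where $C_{\mathrm{cls}}=C$ is the number of loss components (ReLoBRaLo's softmax gives this automatically). It also requires the PDE weights $w_i$ to be uniformly bounded, which is what the weight bounds behind Theorem~\ref{thm:main_equiv} provide. Third, the network residuals must have Lipschitz gradients on the region visited by the iterates. Since that is genuinely an assumption, I would state it as such rather than try to derive it.

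A further subtlety is that ReLoBRaLo is random. Its drift is therefore a random sequence, and I would state the result pathwise, i.e., on every realization for which the drift is summable.
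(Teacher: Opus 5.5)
Your proof is correct and matches the paper's argument: the descent lemma with $\eta\le 1/L$, adding the one-sided drift (your $\max(\delta_k,0)$ plays the role of the paper's nonnegative $\xi_k$ in Assumption~\ref{ass:summable_drift}), telescoping against the uniform lower bound, and bounding the minimum by the average, which gives the same constant $C=2(\mathcal{L}_{\mathrm{total}}(\theta_0;0)-\mathcal{L}_{\inf}+\Xi)/\eta$. As in the paper, stationarity follows from summability of $\sum_k\|\nabla_\theta\mathcal{L}_{\mathrm{total}}(\theta_k;k)\|^2$; your side remark that the drift is supported on finitely many re-fit steps is not accurate, since GMM re-fits and ReLoBRaLo updates continue after $\tau$ saturates, but this does not affect the proof because you take summability as a hypothesis.
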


\begin{remark}
	Theorem~\ref{thm:main_nonconvex} guarantees that, despite the time-varying nature of the CGMPINN objective (due to periodic GMM re-fitting, curriculum advancement, and adaptive loss balancing), gradient descent still converges to a stationary point at the standard $O(1/K)$ rate, provided the cumulative objective drift remains controlled.
\end{remark}

\begin{theorem}[Generalization bound for the weighted empirical PDE loss]
	\label{thm:main_generalization}
	Under the proof-oriented split-sample construction described in
	Appendix~\ref{app:theory}, assume that the PDE residual is uniformly
	bounded and that the induced weight function is bounded.
	Then, at a fixed refresh step and conditioned on the corresponding
	frozen weighting rule, with probability at least $1-\delta$, the
	population weighted PDE risk satisfies
	\begin{equation}
		\begin{aligned}
			\mathcal{E}_{\mathrm{PDE}}^{w}(\theta)
			\leqslant\;&
			\mathcal{L}_{\mathrm{PDE}}^{w}(\theta)
			+
			2W\,\mathfrak{R}_{N_\Omega}(\mathcal{F}) \\
			&+
			3BW\sqrt{\frac{\log(2/\delta)}{2N_\Omega}},
		\end{aligned}
		\label{eq:main_weighted_generalization}
	\end{equation}
	where $\mathcal{F}$ denotes the squared-residual function class,
	$B$ is the uniform residual bound,
	$W\geqslant 1$ is the uniform weight bound (i.e., $0\leqslant w_i\leqslant W$),
	and $\mathfrak{R}_{N_\Omega}(\mathcal{F})$ is the Rademacher complexity
	of $\mathcal{F}$.
	Moreover, the original unweighted population risk obeys
	\begin{equation}
		\begin{aligned}
			\mathcal{E}_{\mathrm{PDE}}(\theta)
			\leqslant\;&
			\mathcal{L}_{\mathrm{PDE}}^{w}(\theta)
			+
			2W\,\mathfrak{R}_{N_\Omega}(\mathcal{F}) \\
			&+
			3BW\sqrt{\frac{\log(2/\delta)}{2N_\Omega}}
			+
			B\|w-1\|_{L^1(\rho)}.
		\end{aligned}
		\label{eq:main_unweighted_generalization}
	\end{equation}
	Therefore, at a fixed refresh step, curriculum-based reweighting
	preserves statistical control of the induced weighted objective up to an
	explicit weighting-induced bias term relative to the original
	unweighted PDE risk.
\end{theorem}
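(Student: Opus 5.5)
The statement to prove is Theorem~\ref{thm:main_generalization}. I will condition on the frozen weighting rule at the refresh step. Under the split-sample construction, the GMM parameters, the responsibilities, and the normalization constant are computed from an independent auxiliary sample. Given that sample, the weight becomes a fixed deterministic function $w:\Omega\times[0,T]\to[0,W]$, and the $N_\Omega$ collocation points used in $\mathcal{L}_{\mathrm{PDE}}^{w}$ are i.i.d.\ from $\rho$, independent of $w$. This decoupling is the essential use of the split-sample hypothesis. Without it, the weights in Eq.~\eqref{eq:weight_normalize} depend on the same sample through the GMM fit and the unit-mean normalization, and the i.i.d.\ structure needed for concentration is lost.

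Conditionally on $w$, I will study the weighted class $\mathcal{G}_w=\{z\mapsto w(z)f(z): f\in\mathcal{F}\}$ with $f(z)=r_{\mathrm{PDE}}(z;\theta)^2\in[0,B]$. Here I read $B$ as a bound on the squared residual, as the stated constants require. Each $g\in\mathcal{G}_w$ then takes values in $[0,BW]$.

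The standard symmetrization plus McDiarmid argument gives, with probability at least $1-\delta$,
\[
\sup_{g\in\mathcal{G}_w}\Bigl(\mathbb{E}_\rho g-\tfrac{1}{N_\Omega}\textstyle\sum_i g(z_i)\Bigr)\leqslant 2\mathfrak{R}_{N_\Omega}(\mathcal{G}_w)+3BW\sqrt{\tfrac{\log(2/\delta)}{2N_\Omega}}.
\]
The factor $3$ and the $\log(2/\delta)$ come from replacing the expected Rademacher complexity by its empirical version, at the cost of a second McDiarmid application. Each application uses bounded differences $BW/N_\Omega$.

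It remains to show $\mathfrak{R}(\mathcal{G}_w)\leqslant W\,\mathfrak{R}(\mathcal{F})$. Since $w(z_i)$ is a fixed scalar in $[0,W]$, multiplying coordinate $i$ by $w(z_i)$ is a $W$-Lipschitz map of $f(z_i)$, and the Ledoux--Talagrand contraction inequality applies coordinatewise. Equivalently, one can note that $\sigma_i w(z_i)/W$ is a contraction of the Rademacher signs. Substituting into the previous display yields Eq.~\eqref{eq:main_weighted_generalization}.

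For Eq.~\eqref{eq:main_unweighted_generalization} I will write
\[
\mathcal{E}_{\mathrm{PDE}}(\theta)=\mathcal{E}_{\mathrm{PDE}}^{w}(\theta)+\mathbb{E}_\rho\bigl[(1-w)f\bigr].
\]
The second term is at most $\|f\|_\infty\,\|w-1\|_{L^1(\rho)}\leqslant B\|w-1\|_{L^1(\rho)}$ by H\"older's inequality. Combining this with the first bound gives the claim on the same probability event.

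The main obstacle is making the conditioning rigorous. I have to specify the split-sample construction so that the frozen rule $w$ is measurable with respect to the auxiliary sample only, and then apply the concentration bound conditionally and integrate over the auxiliary sample. A second point to check is the uniform bound $W$. It follows from Theorem~\ref{thm:main_equiv}-type bounds: every $w_m^{\mathrm{comp}}$ lies in $[e^{-\beta}\underline{\sigma}^2/(\overline{\sigma}^2+\epsilon)\cdot\text{const},\,1]$, so after normalization $w_i\leqslant N_\Omega\cdot\max/\sum\min$. To obtain a $W$ independent of the sample, I would normalize by the population mean on the auxiliary split rather than by the sample sum.
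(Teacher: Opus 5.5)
Your proposal is correct and follows essentially the same route as the paper: condition on the reference sample so that $w$ is a fixed function with values in $[0,W]$, apply the standard Rademacher uniform-convergence bound to the weighted class with range $[0,BW]$, use the coordinatewise contraction $\widehat{\mathfrak{R}}(w\mathcal{F})\leqslant W\,\widehat{\mathfrak{R}}(\mathcal{F})$, and transfer to the unweighted risk via $\mathbb{E}_\rho[(1-w)\,r_{\mathrm{PDE}}^2]\leqslant B\|w-1\|_{L^1(\rho)}$. One minor point: with the expected complexity $\mathfrak{R}_{N_\Omega}$, a single McDiarmid step already yields the sharper term $BW\sqrt{\log(1/\delta)/(2N_\Omega)}$, whereas the $3\sqrt{\log(2/\delta)}$ constant is what passing to the empirical $\widehat{\mathfrak{R}}$ costs; stated with $\mathfrak{R}_{N_\Omega}$, as both you and the paper do, that constant is valid but loose, so your account of where it comes from does not match your display.
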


\begin{remark}
	Theorem~\ref{thm:main_generalization} shows that the curriculum-weighted empirical loss generalizes to the population level at the standard rate, and the additional bias $B\|w-1\|_{L^1(\rho)}$ explicitly quantifies the cost of optimizing a reweighted surrogate instead of the original unweighted PDE risk.
\end{remark}

\begin{remark}[PL-based refinement]
	\label{rem:main_PL}
	Under an additional uniform Polyak--\L{}ojasiewicz condition, the
	time-varying total loss admits a stronger PL-based convergence theory:
	the homogeneous part decays linearly, a drift-neighborhood estimate
	holds under uniformly bounded drift, and exact convergence to the
	common reference level follows when the objective drift is summable.
\end{remark}

\begin{table*}[t]
	\centering
	\caption{Performance Comparison of Different Optimizers for Solving 1D Poisson Equation (with $\alpha_1 = 5.0$, $\alpha_2 = 3.0$, $s = 20.0$)}
	\label{tab:poisson_optimizer_performance}
	\begin{tabular}{lccccc}
		\toprule
		Optimizer & $e_{\text{Loss}}$ & $e_2$ & $\text{Relative } e_2$ & $e_\infty$ & CPU (s) \\
		\midrule
		Adam               & $2.30e\text{+}1$ & $1.70e\text{+}0$ & $1.57e\text{+}0$ & $2.93e\text{+}0$ & $199.0$   \\
		L-BFGS             & $\mathbf{8.24e\text{-}4}$ & $3.15e\text{-}4$ & $2.91e\text{-}4$ & $4.90e\text{-}4$ & $1099.4$  \\
		Adam$\to$L-BFGS    & $9.60e\text{-}4$ & $\mathbf{1.96e\text{-}4}$ & $\mathbf{1.81e\text{-}4}$ & $\mathbf{3.74e\text{-}4}$ & $895.9$   \\
		\bottomrule
	\end{tabular}
\end{table*}

\begin{table*}[t]
	\centering
	\caption{Performance Comparison of Different Methods for Solving 1D Poisson Equation (with $\alpha_1 = 5.0$, $\alpha_2 = 3.0$, $s = 20.0$)}
	\label{tab:1d_poisson_pinn_methods_performance}
	\begin{tabular}{lccccc}
		\toprule
		Method & $e_{\text{Loss}}$ & $e_2$ & $\text{Relative } e_2$ & $e_\infty$ & CPU (s) \\
		\midrule
		PINN               & $1.90e\text{-}3$ & $8.83e\text{-}3$ & $8.15e\text{-}3$ & $9.19e\text{-}3$ & $1018.2$   \\
		lbPINN             & $2.95e\text{+}0$ & $4.57e\text{+}0$ & $4.22e\text{+}0$ & $7.88e\text{+}0$ & $1205.6$   \\
		gPINN              & $1.02e\text{-}1$ & $1.88e\text{-}2$ & $1.74e\text{-}2$ & $2.47e\text{-}2$ & $5711.8$   \\
		LNN-PINN           & $2.31e\text{-}3$ & $4.50e\text{-}4$ & $4.15e\text{-}4$ & $7.89e\text{-}4$ & $1422.0$   \\
		STAR-PINN          & $6.87e\text{-}2$ & $1.23e\text{-}3$ & $1.14e\text{-}3$ & $2.05e\text{-}3$ & $1332.4$   \\
		CGMPINN            & $\mathbf{9.60e\text{-}4}$ & $\mathbf{1.96e\text{-}4}$ & $\mathbf{1.81e\text{-}4}$ & $\mathbf{3.74e\text{-}4}$ & $895.9$   \\
		\bottomrule
	\end{tabular}
\end{table*}

Detailed assumptions, explicit constants, and complete proofs of
Theorems~\ref{thm:main_equiv}--\ref{thm:main_generalization} and
Remark~\ref{rem:main_PL} are deferred to Appendix~\ref{app:theory}.

\section{Numerical Experiments and Comparisons}
\label{sec:numerical_experiments}

This section systematically evaluates the proposed CGMPINN framework on a suite of benchmark PDEs spanning different types: elliptic (1D and 2D Poisson), parabolic (heat equation), hyperbolic (damped wave equation), advection-dominated (advection-diffusion equation), and nonlinear reaction-diffusion (Fisher-KPP equation). These problems are selected to cover a range of difficulties commonly encountered in practice, including high-gradient solutions, steep wavefronts, damped oscillatory dynamics, and nonlinear traveling waves.

For each benchmark, we first conduct an optimizer study comparing Adam, L-BFGS, and a two-stage Adam$\to$L-BFGS strategy \cite{Haghighat45} within the CGMPINN framework, in order to identify the most effective training configuration. We then compare CGMPINN against five representative PINN variants: the canonical PINN \cite{PINN_Raissi1}, lbPINN \cite{Xiang26}, gPINN \cite{Yu28}, LNN-PINN \cite{Tao29}, and STAR-PINN \cite{Dodge30}. All methods employ the same fully-connected neural network architecture as the solution approximator and are implemented in PyTorch to ensure a fair comparison.
Note that in the PDE formulations below, problem-specific parameters (e.g., $\gamma$, $\nu$, $r$) refer to physical coefficients of the respective equations and should not be confused with the method parameters defined in Section~\ref{sec:cgmpinn}.

In all benchmarks, the source term $f$, boundary data $g$, and initial conditions are determined by substituting the exact solution into the governing equation and the corresponding boundary/initial conditions.

The evaluation metrics are defined as follows: the final training loss $e_{\text{Loss}} = \mathcal{L}_{\mathrm{total}}(\theta^*)$, the absolute $L_2$ error $e_2 = \|u - \hat{u}\|_2$, the relative $L_2$ error $e_2/\|u\|_2$, and the maximum absolute error $e_\infty = \|u - \hat{u}\|_\infty$, where $\theta^*$ denotes the trained network parameters, and $u$ and $\hat{u}$ denote the exact and predicted solutions evaluated on the test points, respectively. Additionally, an ablation study comparing GMMPINN (GMM-based weighting only), CLPINN (curriculum learning only), and the full CGMPINN is conducted to disentangle the contributions of the individual components; the results are reported in Appendix~\ref{app:ablation}.

\begin{table*}[t]
	\centering
	\caption{Performance Comparison of Different Optimizers for Solving 2D Poisson Equation (with $\beta_1 = 3.0$, $\beta_2 = 2.0$)}
	\label{tab:2d_poisson_beta_optimizer_performance}
	\begin{tabular}{lccccc}
		\toprule
		Optimizer & $e_{\text{Loss}}$ & $e_2$ & $\text{Relative } e_2$ & $e_\infty$ & CPU (s) \\
		\midrule
		Adam               & $8.23e\text{-}2$ & $3.97e\text{-}2$ & $4.99e\text{-}2$ & $2.46e\text{-}1$ & $441.4$   \\
		L-BFGS             & $3.93e\text{-}3$ & $3.64e\text{-}3$ & $4.56e\text{-}3$ & $3.54e\text{-}2$ & $1914.7$  \\
		Adam$\to$L-BFGS    & $\mathbf{7.78e\text{-}5}$ & $\mathbf{4.65e\text{-}4}$ & $\mathbf{5.83e\text{-}4}$ & $\mathbf{2.95e\text{-}3}$ & $1453.2$  \\
		\bottomrule
	\end{tabular}
\end{table*}

\begin{table*}[t]
	\centering
	\caption{Performance Comparison of Different Methods for Solving 2D Poisson Equation (with $\beta_1 = 3.0$, $\beta_2 = 2.0$)}
	\label{tab:2d_poisson_pinn_methods_performance}
	\begin{tabular}{lccccc}
		\toprule
		Method & $e_{\text{Loss}}$ & $e_2$ & $\text{Relative } e_2$ & $e_\infty$ & CPU (s) \\
		\midrule
		PINN               & $1.06e\text{-}4$ & $1.08e\text{-}3$ & $1.35e\text{-}3$ & $7.28e\text{-}3$ & $1376.5$   \\
		lbPINN             & $2.03e\text{-}2$ & $4.07e\text{-}3$ & $5.11e\text{-}3$ & $2.45e\text{-}2$ & $1509.5$   \\
		gPINN              & $2.41e\text{-}4$ & $1.60e\text{-}3$ & $2.01e\text{-}3$ & $9.93e\text{-}3$ & $6876.9$   \\
		LNN-PINN           & $7.93e\text{-}5$ & $8.12e\text{-}4$ & $1.02e\text{-}3$ & $9.36e\text{-}3$ & $2775.1$   \\
		STAR-PINN          & $\mathbf{7.30e\text{-}5}$ & $5.19e\text{-}4$ & $6.51e\text{-}4$ & $5.37e\text{-}3$ & $2339.2$   \\
		CGMPINN            & $7.78e\text{-}5$ & $\mathbf{4.65e\text{-}4}$ & $\mathbf{5.83e\text{-}4}$ & $\mathbf{2.95e\text{-}3}$ & $1453.2$   \\
		\bottomrule
	\end{tabular}
\end{table*}

\subsection{1D Poisson Equation}
\label{subsec:1d_poisson}

We begin with a 1D Poisson equation featuring a high-gradient analytical solution, which serves as a standard test for capturing sharp spatial variations. The problem is defined as
\begin{equation}
	\begin{cases}
		u_{xx}(x) = f(x),\quad x \in \Omega, \\
		u(x) = g(x),\quad x \in \partial\Omega,
	\end{cases}
	\label{eq:1d_poisson}
\end{equation}
with the analytical solution
\begin{equation}
	u(x) = \sin(\alpha_1 \pi x)\cos(\alpha_2 \pi x) + \tanh(sx),
	\label{eq:1d_poisson_solution}
\end{equation}
where $\Omega=[0,1]$, $\alpha_1 = 5.0$, $\alpha_2 = 3.0$, and $s = 20.0$ is the steepness parameter. The combination of multi-frequency sinusoidal components and a steep hyperbolic tangent transition makes this problem particularly challenging for neural network-based solvers. All methods in this subsection employ a four-hidden-layer network with 50 neurons per layer, trained on 1{,}500 randomly sampled interior collocation points and evaluated on 200 uniformly distributed test points.

Table~\ref{tab:poisson_optimizer_performance} compares the three optimization strategies. The vanilla Adam optimizer fails to converge to an acceptable solution, producing errors several orders of magnitude larger than the two L-BFGS-based strategies. While pure L-BFGS achieves the lowest training loss ($8.24\times10^{-4}$), the Adam$\to$L-BFGS strategy yields the best generalization performance, reducing $e_2$ by 37.8\% and $e_\infty$ by 23.7\% compared to pure L-BFGS. Moreover, the two-stage strategy is 18.5\% faster than pure L-BFGS, indicating that Adam pre-training provides a favorable initialization that accelerates subsequent L-BFGS convergence. The corresponding training loss curves and prediction results are shown in Figure~\ref{fig:1d_poisson_optimizer}. Based on these consistent findings, the Adam$\to$L-BFGS strategy is adopted for all subsequent experiments.

\begin{figure}[htbp]
	\centering
	\includegraphics[width=0.5\textwidth]{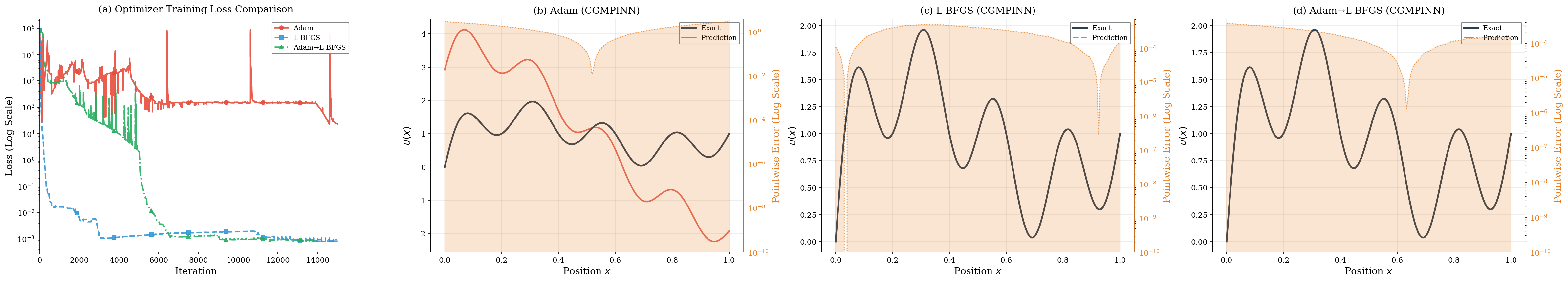}
	\caption{Optimizer comparison for 1D Poisson equation: (a) Training loss curves with different optimizers; (b)--(d) Predictions of CGMPINN with Adam, L-BFGS, and Adam$\to$L-BFGS optimizers respectively, compared with the exact solution (the pointwise error in log scale is shown as the auxiliary y-axis).}
	\label{fig:1d_poisson_optimizer}
\end{figure}

Table~\ref{tab:1d_poisson_pinn_methods_performance} presents the comparison of CGMPINN against five baseline methods. CGMPINN achieves the best performance across all error metrics. In terms of $e_2$, CGMPINN attains $1.96\times10^{-4}$, representing a reduction of 97.8\% relative to the canonical PINN ($8.83\times10^{-3}$), 56.4\% relative to LNN-PINN ($4.50\times10^{-4}$), and 84.1\% relative to STAR-PINN ($1.23\times10^{-3}$). The $e_\infty$ of CGMPINN ($3.74\times10^{-4}$) is also the lowest among all methods, indicating uniformly accurate predictions across the domain. It is worth noting that lbPINN fails to converge for this problem, with errors exceeding those of the canonical PINN by orders of magnitude, and gPINN also underperforms the standard PINN despite incurring a 5.6$\times$ higher computational cost. In contrast, CGMPINN achieves the best accuracy while maintaining the lowest CPU time among all methods, demonstrating that the curriculum-guided weighting mechanism improves accuracy without introducing significant computational overhead. The training loss curves and error distributions are further illustrated in Figure~\ref{fig:1d_poisson_methods}.

\begin{figure}[htbp]
	\centering
	\includegraphics[width=0.5\textwidth]{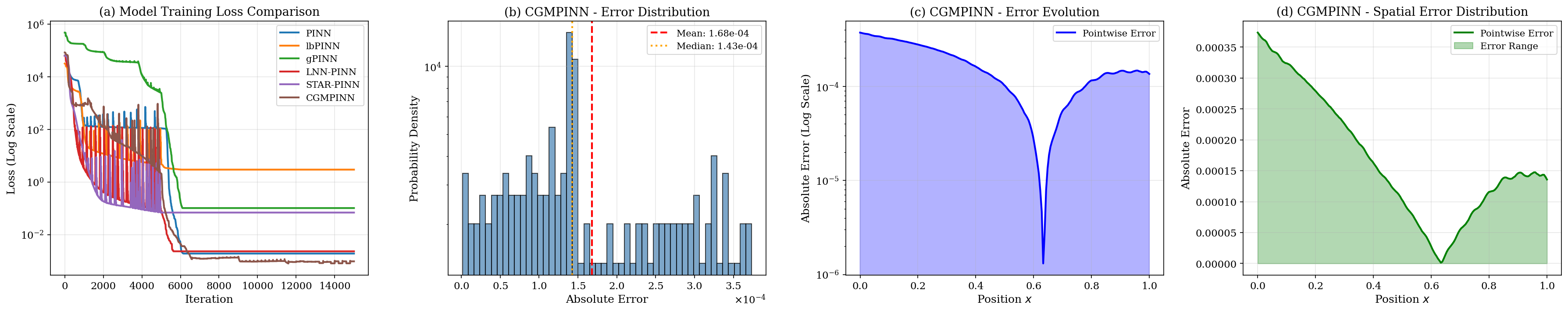}
	\caption{Performance comparison of different PINN variants for 1D Poisson equation: (a) Training loss curves of different models; (b) Absolute error probability density distribution of CGMPINN; (c) Pointwise error (log scale) of CGMPINN in the spatial domain; (d) Spatial absolute error distribution and error range of CGMPINN.}
	\label{fig:1d_poisson_methods}
\end{figure}

\subsection{2D Poisson Equation}
\label{subsec:2d_poisson}

To assess scalability to higher spatial dimensions, we consider a 2D Poisson equation on a rectangular domain with Dirichlet boundary conditions:
\begin{equation}
	\begin{cases}
		u_{xx}(x,y) + u_{yy}(x,y) = f(x,y),\quad (x,y)\in\Omega, \\
		u(x,y)=g(x,y),\quad (x,y)\in\partial\Omega,
	\end{cases}
	\label{eq:2d_poisson}
\end{equation}
with the analytical solution
\begin{equation}
	u(x, y) = \sin(\beta_1 \pi x)\sin(\beta_2 \pi y) + e^{-x^2 - y^2},
	\label{eq:2d_poisson_solution}
\end{equation}
where $\Omega=[0,1]^{2}$, $\beta_1 = 3.0$ and $\beta_2 = 2.0$. The multi-frequency oscillations coupled with a localized Gaussian component introduce anisotropic gradients in both spatial directions. All methods in this subsection use a four-hidden-layer network with 50 neurons per layer, trained on 2{,}000 randomly sampled interior collocation points and 250 boundary points, and evaluated on 10{,}000 uniformly distributed test points.

Table~\ref{tab:2d_poisson_beta_optimizer_performance} reports the optimizer comparison results. Consistent with the 1D case, the Adam$\to$L-BFGS strategy achieves the best performance across all metrics. Compared to pure L-BFGS, it reduces $e_{\text{Loss}}$ by 98.0\%, $e_2$ by 87.2\%, and $e_\infty$ by 91.7\%, while also reducing the training time by 24.1\%. The Adam optimizer again shows limited convergence capability, with errors approximately one order of magnitude higher than the L-BFGS-based strategies. The training loss curves and prediction results are shown in Figure~\ref{fig:2d_poisson_optimizer}.

\begin{figure}[htbp]
	\centering
	\includegraphics[width=0.5\textwidth]{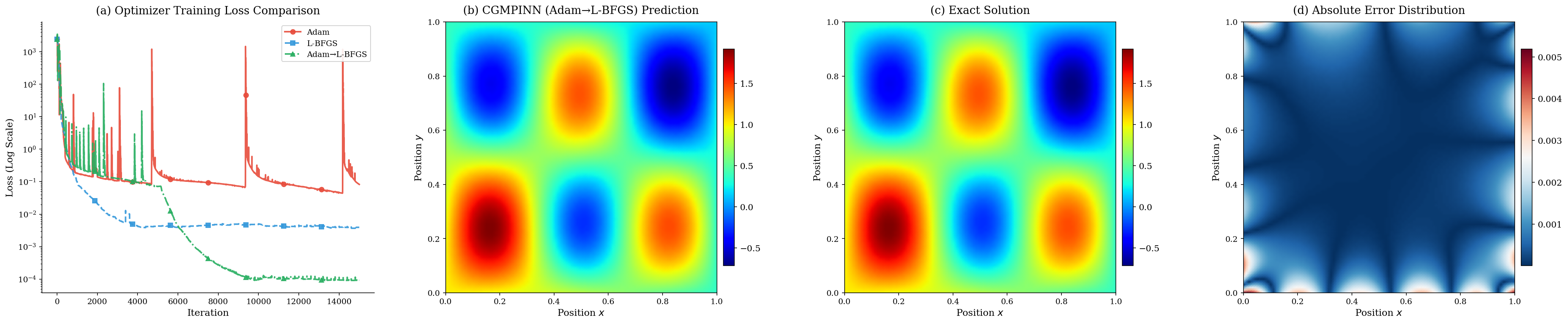}
	\caption{Optimizer comparison for 2D Poisson equation: (a) Training loss curves of different optimizers; (b) Prediction result of CGMPINN with Adam$\to$L-BFGS optimizer; (c) Exact solution of the equation; (d) Absolute error distribution of the prediction.}
	\label{fig:2d_poisson_optimizer}
\end{figure}

\begin{table*}[t]
	\centering
	\caption{Performance Comparison of Different Optimizers for Solving Heat Equation (with $\alpha_1 = 1.0$, $\alpha_2 = 2.0$, $s = 10.0$)}
	\label{tab:heat_equation_optimizer_performance}
	\begin{tabular}{lccccc}
		\toprule
		Optimizer & $e_{\text{Loss}}$ & $e_2$ & $\text{Relative } e_2$ & $e_\infty$ & CPU (s) \\
		\midrule
		Adam               & $1.38e\text{-}2$ & $1.63e\text{-}2$ & $1.43e\text{-}2$ & $1.21e\text{-}1$ & $292.9$   \\
		L-BFGS             & $8.69e\text{-}5$ & $1.18e\text{-}3$ & $1.03e\text{-}3$ & $7.63e\text{-}3$ & $1517.3$  \\
		Adam$\to$L-BFGS    & $\mathbf{1.53e\text{-}5}$ & $\mathbf{4.04e\text{-}4}$ & $\mathbf{3.54e\text{-}4}$ & $\mathbf{2.86e\text{-}3}$ & $1430.0$    \\
		\bottomrule
	\end{tabular}
\end{table*}

\begin{table*}[t]
	\centering
	\caption{Performance Comparison of Different Methods for Solving Heat Equation (with $\alpha_1 = 1.0$, $\alpha_2 = 2.0$, $s = 10.0$)}
	\label{tab:heat_equation_pinn_methods_performance}
	\begin{tabular}{lccccc}
		\toprule
		Method & $e_{\text{Loss}}$ & $e_2$ & $\text{Relative } e_2$ & $e_\infty$ & CPU (s) \\
		\midrule
		PINN               & $3.59e\text{-}5$ & $1.46e\text{-}3$ & $1.28e\text{-}3$ & $1.93e\text{-}2$ & $1342.8$   \\
		lbPINN             & $1.58e\text{-}2$ & $3.22e\text{-}3$ & $2.83e\text{-}3$ & $3.44e\text{-}2$ & $1134.1$   \\
		gPINN              & $1.56e\text{-}4$ & $3.11e\text{-}3$ & $2.73e\text{-}3$ & $3.41e\text{-}2$ & $5331.7$   \\
		LNN-PINN           & $3.09e\text{-}5$ & $9.57e\text{-}4$ & $8.39e\text{-}4$ & $1.08e\text{-}2$ & $2333.2$   \\
		STAR-PINN          & $9.96e\text{-}5$ & $1.63e\text{-}3$ & $1.43e\text{-}3$ & $1.88e\text{-}2$ & $1776.6$   \\
		CGMPINN            & $\mathbf{1.53e\text{-}5}$ & $\mathbf{4.04e\text{-}4}$ & $\mathbf{3.54e\text{-}4}$ & $\mathbf{2.86e\text{-}3}$ & $1430.0$   \\
		\bottomrule
	\end{tabular}
\end{table*}

Table~\ref{tab:2d_poisson_pinn_methods_performance} compares CGMPINN with five baseline methods on the 2D Poisson equation. From the perspective of computational efficiency, CGMPINN achieves a favorable trade-off between runtime and solution quality. Its training cost is 1453.2\,s, which is close to that of the standard PINN (1376.5\,s) and substantially lower than those of LNN-PINN (2775.1\,s), STAR-PINN (2339.2\,s), and especially gPINN (6876.9\,s). Despite this relatively low computational cost, CGMPINN still attains the best accuracy among all compared methods, yielding the lowest values of $e_2$ ($4.65\times10^{-4}$), relative $e_2$ ($5.83\times10^{-4}$), and $e_\infty$ ($2.95\times10^{-3}$). In particular, compared with the standard PINN, CGMPINN reduces $e_2$ by 56.9\% and $e_\infty$ by 59.5\%; compared with STAR-PINN, it further improves $e_2$ and $e_\infty$ by 10.4\% and 45.1\%, respectively. Although CGMPINN does not achieve the minimum training loss, its $e_{\text{Loss}}$ remains comparable to that of STAR-PINN. Overall, these results suggest that CGMPINN achieves a more favorable balance between computational efficiency and predictive accuracy on this benchmark, while also demonstrating its effectiveness in multi-dimensional settings. The training loss curves and error distributions are further illustrated in Figure~\ref{fig:2d_poisson_methods}.

\begin{figure}[htbp]
	\centering
	\includegraphics[width=0.5\textwidth]{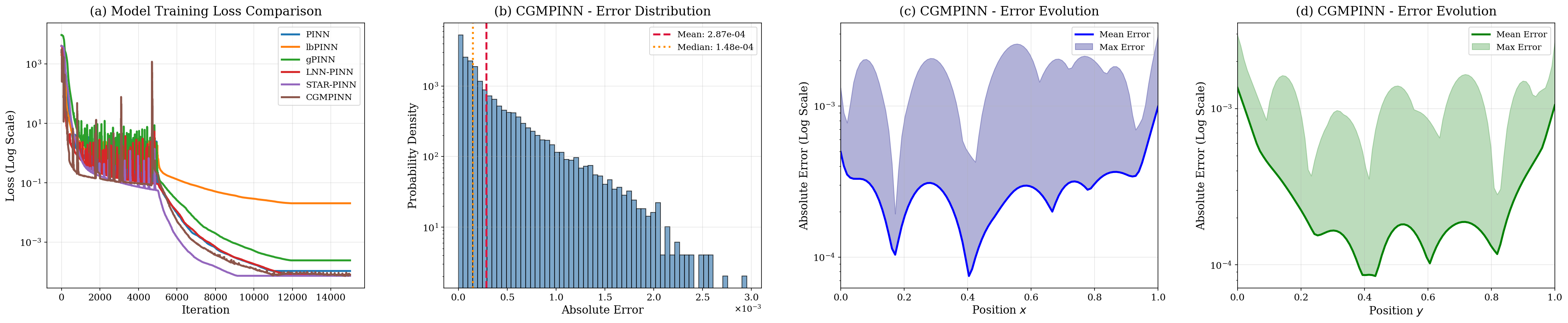}
	\caption{Performance comparison of different PINN variants for 2D Poisson equation: (a) Training loss curves of different models; (b) Absolute error probability density distribution of CGMPINN; (c) Mean and max absolute error of CGMPINN in the x spatial direction; (d) Mean and max absolute error of CGMPINN in the y spatial direction.}
	\label{fig:2d_poisson_methods}
\end{figure}

\begin{table*}[t]
	\centering
	\caption{Performance Comparison of Different Optimizers for Solving Damped Wave Equation (with $\alpha_1 = 1.0$, $\alpha_2 = 1.0$, $\gamma = 0.1$)}
	\label{tab:1d_damped_wave_optimizer_performance}
	\begin{tabular}{lccccc}
		\toprule
		Optimizer & $e_{\text{Loss}}$ & $e_2$ & $\text{Relative } e_2$ & $e_\infty$ & CPU (s) \\
		\midrule
		Adam               & $1.81e\text{-}4$ & $4.59e\text{-}3$ & $9.64e\text{-}3$ & $1.50e\text{-}2$ & $535.7$   \\
		L-BFGS             & $1.33e\text{-}5$ & $1.18e\text{-}3$ & $2.49e\text{-}3$ & $6.11e\text{-}3$ & $2486.0$  \\
		Adam$\to$L-BFGS    & $\mathbf{1.66e\text{-}6}$ & $\mathbf{3.37e\text{-}4}$ & $\mathbf{7.07e\text{-}4}$ & $\mathbf{2.20e\text{-}3}$ & $678.0$   \\
		\bottomrule
	\end{tabular}
\end{table*}

\begin{table*}[t]
	\centering
	\caption{Performance Comparison of Different Methods for Solving Damped Wave Equation (with $\alpha_1 = 1.0$, $\alpha_2 = 1.0$, $\gamma = 0.1$)}
	\label{tab:1d_damped_wave_pinn_methods_performance}
	\begin{tabular}{lccccc}
		\toprule
		Method & $e_{\text{Loss}}$ & $e_2$ & $\text{Relative } e_2$ & $e_\infty$ & CPU (s) \\
		\midrule
		PINN               & $1.17e\text{-}5$ & $1.14e\text{-}3$ & $2.40e\text{-}3$ & $3.81e\text{-}3$ & $718.2$   \\
		lbPINN             & $1.62e\text{-}3$ & $4.65e\text{-}4$ & $9.76e\text{-}4$ & $3.48e\text{-}3$ & $1881.3$   \\
		gPINN              & $7.74e\text{-}6$ & $9.48e\text{-}4$ & $1.99e\text{-}3$ & $3.00e\text{-}3$ & $2971.8$   \\
		LNN-PINN           & $\mathbf{1.54e\text{-}6}$ & $4.66e\text{-}4$ & $9.79e\text{-}4$ & $2.30e\text{-}3$ & $1164.5$   \\
		STAR-PINN          & $3.22e\text{-}5$ & $2.53e\text{-}3$ & $5.31e\text{-}3$ & $7.14e\text{-}3$ & $916.4$   \\
		CGMPINN            & $1.66e\text{-}6$ & $\mathbf{3.37e\text{-}4}$ & $\mathbf{7.07e\text{-}4}$ & $\mathbf{2.20e\text{-}3}$ & $678.0$   \\
		\bottomrule
	\end{tabular}
\end{table*}

\subsection{Heat Equation}
\label{subsec:heat_equation}

We next consider a time-dependent parabolic problem: the 1D heat equation with a localized steep gradient induced by a hyperbolic tangent term. The problem reads
\begin{equation}
	\begin{cases}
		u_t(x,t) = u_{xx}(x,t) + f(x,t),\quad (x,t) \in \Omega \times [0,1], \\
		u(x,0) = u_0(x),\quad x \in \Omega, \\
		u(x,t) = g(x,t),\quad (x,t) \in \partial\Omega \times [0,1],
	\end{cases}
	\label{eq:heat_equation}
\end{equation}
with the analytical solution
\begin{equation}
	u(x,t) = \left( \sin(\alpha_1 \pi x) + \tanh(sx) \right) \sin(\alpha_2 \pi t),
	\label{eq:heat_equation_solution}
\end{equation}
where $\Omega=[0,1]$, $\alpha_1 = 1.0$, $\alpha_2 = 2.0$, and $s = 10.0$. The temporal oscillation modulated by a spatially localized steep gradient tests the ability of neural solvers to resolve both temporal dynamics and spatial fine structure simultaneously. All methods in this subsection employ a four-hidden-layer network with 50 neurons per layer, trained on 1{,}500 randomly sampled interior collocation points, 300 boundary points, and 300 initial condition points, and evaluated on 10{,}000 uniformly distributed test points.

Table~\ref{tab:heat_equation_optimizer_performance} reports the optimizer comparison. The Adam$\to$L-BFGS strategy again achieves the best results across all metrics, reducing $e_{\text{Loss}}$ by 82.4\% and $e_2$ by 65.8\% compared to pure L-BFGS, while requiring 5.8\% less training time. The Adam optimizer yields errors more than an order of magnitude higher than the L-BFGS-based strategies, consistent with observations on the Poisson problems. The training loss curves and prediction results are shown in Figure~\ref{fig:heat_equation_optimizer}.

\begin{figure}[htbp]
	\centering
	\includegraphics[width=0.5\textwidth]{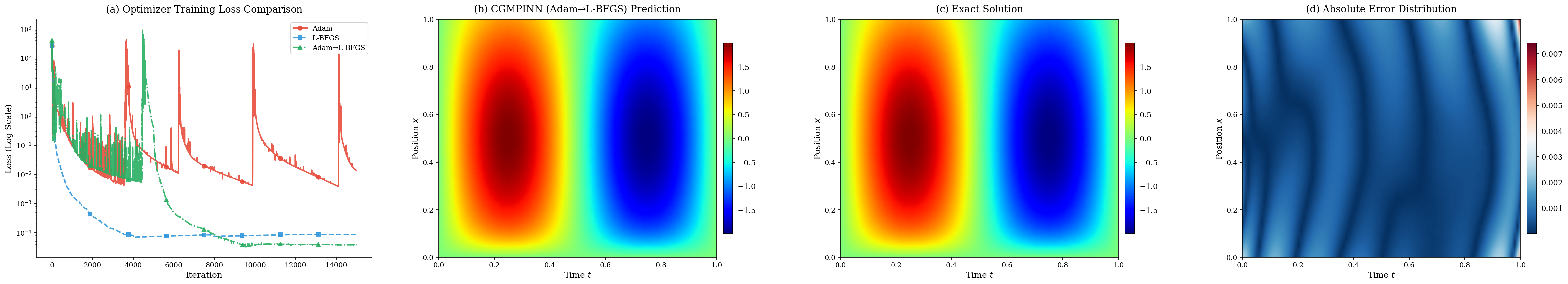}
	\caption{Optimizer comparison for 1D heat equation: (a) Training loss curves of different optimizers; (b) Prediction result of CGMPINN with Adam$\to$L-BFGS optimizer; (c) Exact solution of the equation; (d) Absolute error distribution of the prediction.}
	\label{fig:heat_equation_optimizer}
\end{figure}

Table~\ref{tab:heat_equation_pinn_methods_performance} compares CGMPINN with five baseline methods on the heat equation. CGMPINN achieves the best accuracy among all compared methods, attaining the lowest values of $e_2$, relative $e_2$, and $e_\infty$. In particular, its $e_2$ is $4.04\times10^{-4}$, corresponding to reductions of 72.3\% and 57.8\% relative to the standard PINN and LNN-PINN, respectively. The smallest $e_\infty$ ($2.86\times10^{-3}$) further indicates that CGMPINN provides more uniformly accurate predictions, with reductions of 85.2\% and 73.5\% compared with the standard PINN and LNN-PINN, respectively. In addition, CGMPINN also attains the lowest training loss ($1.53\times10^{-5}$), suggesting more effective optimization on this benchmark. Although its CPU time (1430.0\,s) is slightly higher than that of the standard PINN (1342.8\,s), it remains substantially lower than those of LNN-PINN (2333.2\,s), STAR-PINN (1776.6\,s), and gPINN (5331.7\,s). Overall, these results indicate that CGMPINN achieves a favorable balance between predictive accuracy and computational cost on this benchmark. The training loss curves and error distributions are further illustrated in Figure~\ref{fig:heat_equation_methods}.

\begin{figure}[htbp]
	\centering
	\includegraphics[width=0.5\textwidth]{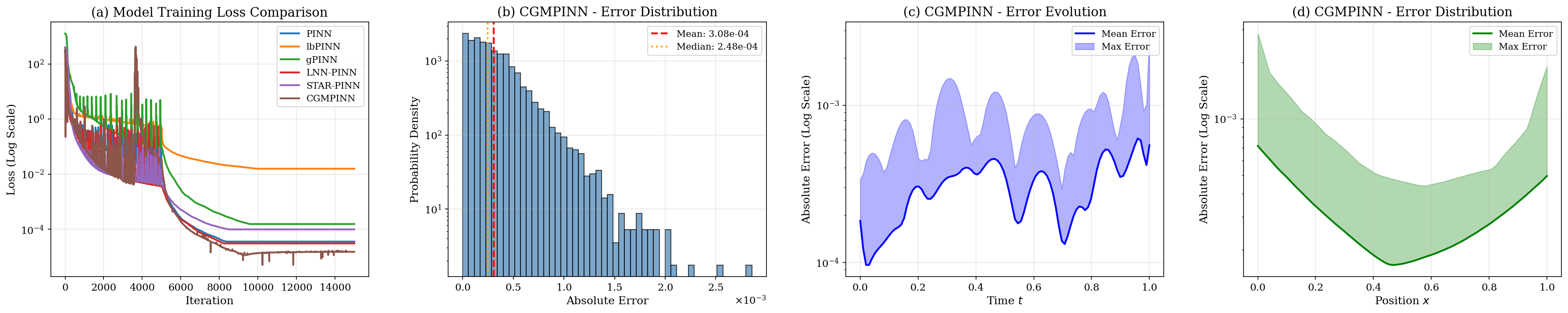}
	\caption{Performance comparison of different PINN variants for 1D heat equation: (a) Training loss curves of different models; (b) Absolute error probability density distribution of CGMPINN; (c) Temporal evolution of the mean and max absolute error of CGMPINN; (d) Mean and max absolute error of CGMPINN in the x spatial direction.}
	\label{fig:heat_equation_methods}
\end{figure}

\begin{table*}[t]
	\centering
	\caption{Performance Comparison of Different Optimizers for Solving Advection-Diffusion Equation (with $a = 1.0$, $\nu = 10^{-2}$)}
	\label{tab:advection_diffusion_optimizer_performance}
	\begin{tabular}{lccccc}
		\toprule
		Optimizer & $e_{\text{Loss}}$ & $e_2$ & $\text{Relative } e_2$ & $e_\infty$ & CPU (s) \\
		\midrule
		Adam               & $2.44e\text{-}4$ & $7.35e\text{-}3$ & $1.09e\text{-}2$ & $1.33e\text{-}2$ & $434.7$   \\
		L-BFGS             & $1.84e\text{-}6$ & $5.27e\text{-}4$ & $7.82e\text{-}4$ & $1.33e\text{-}3$ & $512.2$  \\
		Adam$\to$L-BFGS    & $\mathbf{1.54e\text{-}6}$ & $\mathbf{2.57e\text{-}4}$ & $\mathbf{3.82e\text{-}4}$ & $\mathbf{7.14e\text{-}4}$ & $532.4$   \\
		\bottomrule
	\end{tabular}
\end{table*}

\begin{table*}[t]
	\centering
	\caption{Performance Comparison of Different Methods for Solving Advection-Diffusion Equation (with $a = 1.0$, $\nu = 10^{-2}$)}
	\label{tab:advection_diffusion_pinn_methods_performance}
	\begin{tabular}{lccccc}
		\toprule
		Method & $e_{\text{Loss}}$ & $e_2$ & $\text{Relative } e_2$ & $e_\infty$ & CPU (s) \\
		\midrule
		PINN               & $4.28e\text{-}6$ & $7.14e\text{-}4$ & $1.06e\text{-}3$ & $2.07e\text{-}3$ & $446.6$   \\
		lbPINN             & $2.76e\text{-}3$ & $4.44e\text{-}4$ & $6.60e\text{-}4$ & $1.16e\text{-}3$ & $2018.2$   \\
		gPINN              & $5.48e\text{-}6$ & $5.69e\text{-}4$ & $8.44e\text{-}4$ & $1.35e\text{-}3$ & $1885.6$   \\
		LNN-PINN           & $1.66e\text{-}5$ & $7.97e\text{-}4$ & $1.18e\text{-}3$ & $2.79e\text{-}3$ & $678.6$   \\
		STAR-PINN          & $3.56e\text{-}6$ & $4.41e\text{-}4$ & $6.55e\text{-}4$ & $1.57e\text{-}3$ & $623.6$   \\
		CGMPINN            & $\mathbf{1.54e\text{-}6}$ & $\mathbf{2.57e\text{-}4}$ & $\mathbf{3.82e\text{-}4}$ & $\mathbf{7.14e\text{-}4}$ & $532.4$   \\
		\bottomrule
	\end{tabular}
\end{table*}

\subsection{Damped Wave Equation}
\label{subsec:damped_wave_equation}

We proceed to a second-order hyperbolic problem: the 1D damped wave equation, which models wave propagation with energy dissipation. The governing equation is
\begin{equation}
	\begin{cases}
		u_{tt} + 2\gamma u_t - c^2 u_{xx} = 0,\quad (x,t) \in \Omega \times [0,1], \\
		u(x,0) = u_0(x),\quad x \in \Omega, \\
		u_t(x,0) = v_0(x),\quad x \in \Omega, \\
		u(x,t) = g(x,t),\quad (x,t) \in \partial\Omega \times [0,1],
	\end{cases}
	\label{eq:damped_wave_equation}
\end{equation}
where $\Omega=[0,1]$, $\gamma>0$ is the damping coefficient, $c$ is the wave speed, and $u_0(x)$ and $v_0(x)$ denote the initial displacement and velocity profiles, respectively. The analytical solution is
\begin{equation}
	u(x,t) = e^{-\gamma t} \sin(\alpha_1 \pi x) \cos(\alpha_2 \pi t),
	\label{eq:damped_wave_solution}
\end{equation}
with $c = \sqrt{\gamma^2 + (\alpha_2 \pi)^2}/(\alpha_1 \pi)$, and parameters $\alpha_1 = 1.0$, $\alpha_2 = 1.0$, $\gamma = 0.1$. The exponentially decaying oscillatory behavior requires the solver to accurately resolve both the spatial modal structure and the temporal amplitude decay. All methods in this subsection employ a four-hidden-layer network with 50 neurons per layer, trained on 2{,}000 randomly sampled interior collocation points, 300 boundary points, and 300 initial condition points, and evaluated on 10{,}000 uniformly distributed test points.

Table~\ref{tab:1d_damped_wave_optimizer_performance} shows the optimizer comparison. The Adam$\to$L-BFGS strategy delivers the most pronounced advantage on this problem, reducing $e_{\text{Loss}}$ by 87.5\%, $e_2$ by 71.4\%, and $e_\infty$ by 64.0\% relative to pure L-BFGS, while requiring only 27.3\% of the L-BFGS training time. This significant efficiency gain suggests that for second-order time-dependent PDEs, Adam pre-training is especially effective at navigating the loss landscape toward a basin of attraction that L-BFGS can efficiently exploit. The corresponding training loss curves and prediction results are shown in Figure~\ref{fig:damped_wave_optimizer}.

\begin{figure}[htbp]
	\centering
	\includegraphics[width=0.5\textwidth]{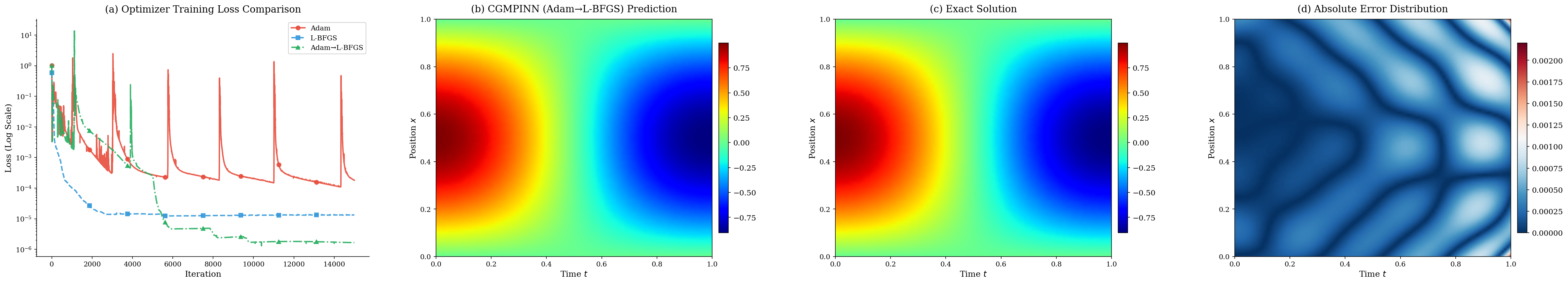}
	\caption{Optimizer comparison for 1D damped wave equation: (a) Training loss curves of different optimizers; (b) Prediction result of CGMPINN with Adam$\to$L-BFGS optimizer; (c) Exact solution of the equation; (d) Absolute error distribution of the prediction.}
	\label{fig:damped_wave_optimizer}
\end{figure}

Table~\ref{tab:1d_damped_wave_pinn_methods_performance} compares CGMPINN with five baseline methods on the damped wave equation. From the perspective of computational efficiency, CGMPINN is the most cost-effective method, requiring only 678.0\,s of CPU time, which is the lowest among all compared methods. This corresponds to reductions of 5.6\%, 41.8\%, and 77.2\% relative to the standard PINN, LNN-PINN, and gPINN, respectively. Despite this lower runtime, CGMPINN still achieves the best accuracy, attaining the lowest values of $e_2$, relative $e_2$, and $e_\infty$. In particular, compared with the standard PINN, its $e_2$ is reduced by 70.4\%; compared with the best-performing baseline methods in terms of $e_2$, namely lbPINN and LNN-PINN, the reductions are 27.5\% and 27.7\%, respectively. Its training loss is also very close to the minimum achieved by LNN-PINN. These results suggest that CGMPINN achieves a highly favorable balance between computational efficiency and predictive accuracy on this benchmark. The training loss curves, spatial-temporal prediction profiles, and pointwise error analysis are further illustrated in Figure~\ref{fig:damped_wave_methods}.

\begin{figure}[htbp]
	\centering
	\includegraphics[width=0.5\textwidth]{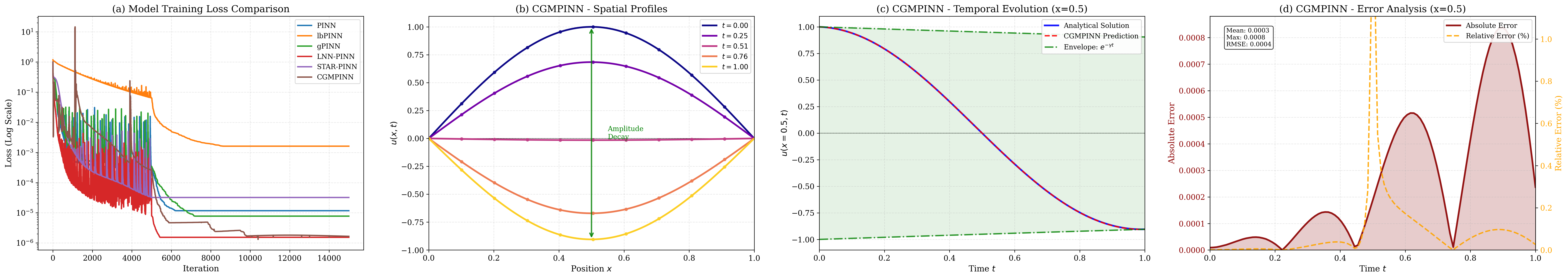}
	\caption{Performance comparison of different PINN variants for 1D damped wave equation: (a) Training loss curves of different models; (b) Spatial profiles of CGMPINN prediction at different time steps; (c) Temporal evolution of CGMPINN prediction at $x=0.5$ compared with the analytical solution; (d) Absolute and relative error of CGMPINN at $x=0.5$.}
	\label{fig:damped_wave_methods}
\end{figure}

\begin{table*}[t]
	\centering
	\caption{Performance Comparison of Different Optimizers for Solving Fisher-KPP Equation (with $D = 0.25$, $r = 4.0$)}
	\label{tab:fisher_kpp_optimizer_performance}
	\begin{tabular}{lccccc}
		\toprule
		Optimizer & $e_{\text{Loss}}$ & $e_2$ & $\text{Relative } e_2$ & $e_\infty$ & CPU (s) \\
		\midrule
		Adam               & $4.11e\text{-}5$ & $1.40e\text{-}3$ & $1.82e\text{-}3$ & $5.38e\text{-}3$ & $1319.9$   \\
		L-BFGS             & $9.33e\text{-}7$ & $1.62e\text{-}3$ & $2.11e\text{-}3$ & $5.87e\text{-}3$ & $1526.9$  \\
		Adam$\to$L-BFGS    & $\mathbf{3.62e\text{-}7}$ & $\mathbf{7.64e\text{-}4}$ & $\mathbf{9.94e\text{-}4}$ & $\mathbf{4.00e\text{-}3}$ & $1722.9$   \\
		\bottomrule
	\end{tabular}
\end{table*}

\begin{table*}[t]
	\centering
	\caption{Performance Comparison of Different Methods for Solving Fisher-KPP Equation (with $D = 0.25$, $r = 4.0$)}
	\label{tab:fisher_kpp_pinn_methods_performance}
	\begin{tabular}{lccccc}
		\toprule
		Method & $e_{\text{Loss}}$ & $e_2$ & $\text{Relative } e_2$ & $e_\infty$ & CPU (s) \\
		\midrule
		PINN               & $3.52e\text{-}6$ & $1.65e\text{-}3$ & $2.14e\text{-}3$ & $6.46e\text{-}3$ & $1527.0$   \\
		lbPINN             & $3.57e\text{-}4$ & $1.14e\text{-}3$ & $1.48e\text{-}3$ & $5.81e\text{-}3$ & $5711.2$   \\
		gPINN              & $3.45e\text{-}6$ & $1.72e\text{-}3$ & $2.24e\text{-}3$ & $7.03e\text{-}3$ & $7647.0$   \\
		LNN-PINN           & $1.62e\text{-}6$ & $4.66e\text{-}3$ & $6.06e\text{-}3$ & $2.14e\text{-}2$ & $2823.3$   \\
		STAR-PINN          & $7.23e\text{-}6$ & $1.10e\text{-}3$ & $1.43e\text{-}3$ & $7.38e\text{-}3$ & $1698.6$   \\
		CGMPINN            & $\mathbf{3.62e\text{-}7}$ & $\mathbf{7.64e\text{-}4}$ & $\mathbf{9.94e\text{-}4}$ & $\mathbf{4.00e\text{-}3}$ & $1722.9$   \\
		\bottomrule
	\end{tabular}
\end{table*}

\subsection{Advection-Diffusion Equation}
\label{subsec:advection_diffusion}

We consider a 1D advection-diffusion equation in the advection-dominated regime, a well-known challenging benchmark for PINNs due to the steep traveling wavefront that induces spectral bias and optimization instability. The governing equation with periodic boundary conditions is
\begin{equation}
	\begin{cases}
		u_t + a\, u_x - \nu u_{xx} = 0,\quad (x,t) \in \Omega \times [0,1], \\
		u(x, 0) = \sin(\pi x),\quad x \in \Omega, \\
		u(-1, t) = u(1, t),\;
		u_x(-1, t) = u_x(1, t),\quad t \in [0,1],
	\end{cases}
	\label{eq:advection_diffusion}
\end{equation}
with the exact solution
\begin{equation}
	u(x, t) = e^{-\nu \pi^2 t} \sin(\pi (x - a t)),
	\label{eq:advection_diffusion_solution}
\end{equation}
where $\Omega=[-1,1]$, $a = 1.0$ is the advection coefficient, and $\nu = 10^{-2}$ is the diffusion coefficient. All methods in this subsection employ a four-hidden-layer network with 50 neurons per layer, trained on 3{,}000 randomly sampled interior collocation points, 300 boundary points, and 300 initial condition points, and evaluated on 10{,}000 uniformly distributed test points.

The optimizer comparison in Table~\ref{tab:advection_diffusion_optimizer_performance} shows that the Adam$\to$L-BFGS strategy outperforms the alternatives, reducing $e_2$ by 51.2\% and $e_\infty$ by 46.3\% relative to pure L-BFGS, with a comparable training time. The performance gap between L-BFGS and Adam$\to$L-BFGS is narrower for this problem than for the previous benchmarks, suggesting that L-BFGS alone is relatively effective for the advection-diffusion dynamics, though Adam pre-training still provides a measurable benefit. The training loss curves and prediction results are shown in Figure~\ref{fig:advection_diffusion_optimizer}.

\begin{figure}[htbp]
	\centering
	\includegraphics[width=0.5\textwidth]{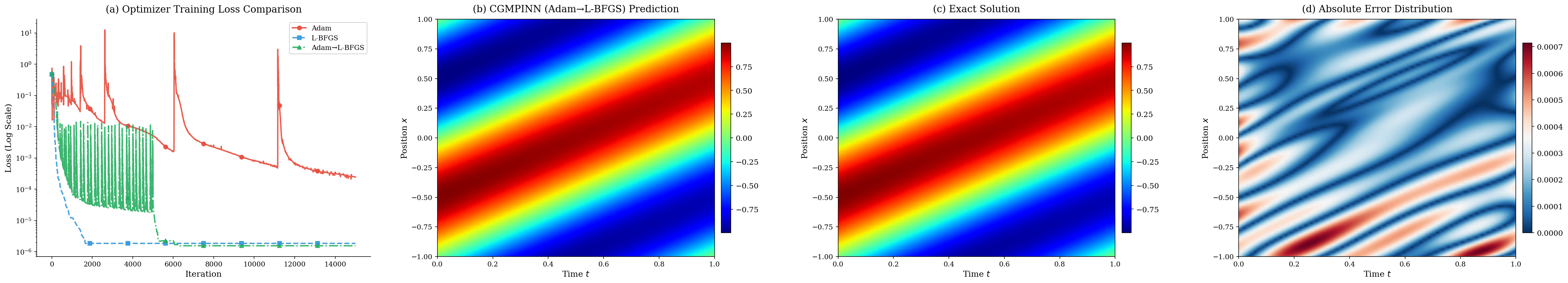}
	\caption{Optimizer comparison for 1D advection-diffusion equation: (a) Training loss curves of different optimizers; (b) Prediction result of CGMPINN with Adam$\to$L-BFGS optimizer; (c) Exact solution of the equation; (d) Absolute error distribution of the prediction.}
	\label{fig:advection_diffusion_optimizer}
\end{figure}

Table~\ref{tab:advection_diffusion_pinn_methods_performance} compares the performance of CGMPINN with five baseline methods on the advection-diffusion equation. CGMPINN delivers the best accuracy, achieving the lowest values of $e_2$, relative $e_2$, and $e_\infty$ among all methods. Specifically, its $e_2$ is $2.57\times10^{-4}$, corresponding to reductions of 64.0\%, 42.1\%, and 41.7\% relative to the standard PINN, lbPINN, and STAR-PINN, respectively. Its $e_\infty$ is also the smallest at $7.14\times10^{-4}$, indicating more uniformly accurate predictions across the domain and a 65.5\% reduction compared with the standard PINN. Moreover, CGMPINN attains the lowest training loss ($1.54\times10^{-6}$), suggesting effective optimization on this benchmark. Although its CPU time (532.4\,s) is slightly higher than that of the standard PINN (446.6\,s), it remains substantially lower than those of lbPINN (2018.2\,s) and gPINN (1885.6\,s), demonstrating that the accuracy gains are obtained with only modest additional computational cost. The training loss curves and error distributions are further illustrated in Figure~\ref{fig:advection_diffusion_methods}.

\begin{figure}[htbp]
	\centering
	\includegraphics[width=0.5\textwidth]{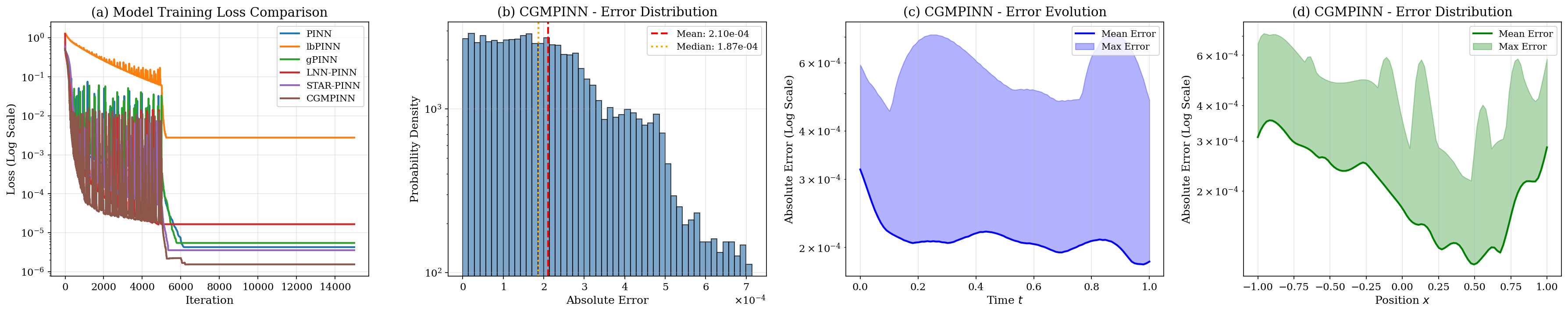}
	\caption{Performance comparison of different PINN variants for 1D advection-diffusion equation: (a) Training loss curves of different models; (b) Absolute error probability density distribution of CGMPINN; (c) Temporal evolution of the mean and max absolute error of CGMPINN; (d) Mean and max absolute error of CGMPINN in the x spatial direction.}
	\label{fig:advection_diffusion_methods}
\end{figure}

As illustrated in Figure~\ref{fig:advection_diffusion_term_balance},
the temporal derivative $u_t$ is predominantly balanced by the
advection term $-a u_x$ across the spatial domain at $t = 0.25$,
$0.5$, and $0.75$, while the diffusion contribution $\nu u_{xx}$
remains an order of magnitude smaller, confirming the
advection-dominated regime of this problem. The PDE residual is
consistently bounded at the $\mathcal{O}(10^{-3})$ level over all
examined time instants, indicating that the learned solution
satisfies the governing equation with high fidelity throughout the
temporal domain.

\begin{figure}[htbp]
	\centering
	\includegraphics[width=0.5\textwidth]{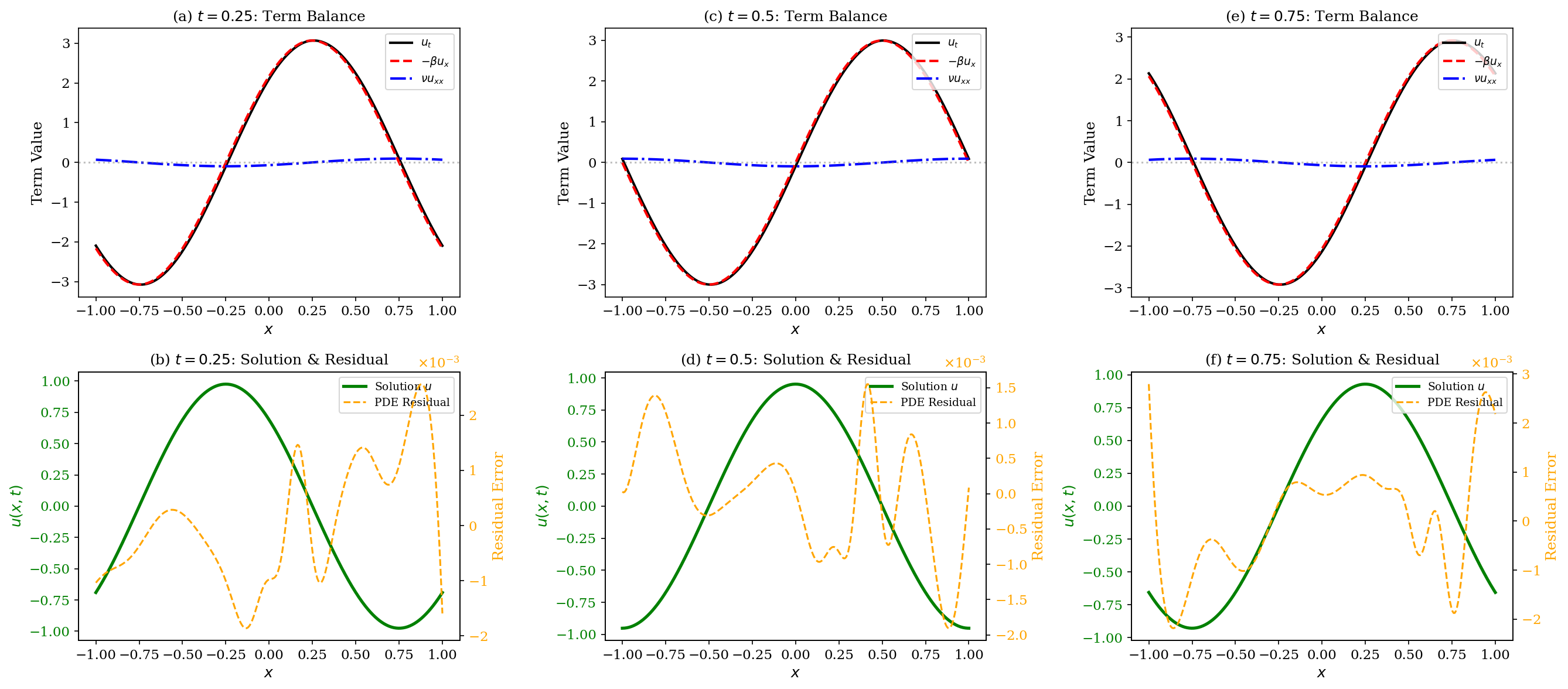}
	\caption{Term balance, solution and residual visualization for 1D advection-diffusion equation: (a)(c)(e) Balance of temporal derivative, advection and diffusion terms at $t=0.25$, $t=0.5$ and $t=0.75$; (b)(d)(f) Predicted solution and PDE residual distribution at the corresponding time steps.}
	\label{fig:advection_diffusion_term_balance}
\end{figure}

\subsection{Fisher-KPP Equation}
\label{subsec:fisher_kpp}

Finally, we evaluate CGMPINN on the Fisher-KPP equation, a nonlinear reaction-diffusion PDE that admits traveling wave solutions with steep fronts. The strong coupling between the nonlinear reaction and diffusion terms makes this problem a demanding test case. The governing equation is
\begin{equation}
	\begin{cases}
		u_t = D\, u_{xx} + r\, u(1 - u), \quad (x,t) \in \Omega \times [0,2], \\
		u(x,0) = u_0(x), \quad x \in \Omega, \\
		u(x,t) = g(x,t), \quad (x,t) \in \partial\Omega \times [0,2],
	\end{cases}
	\label{eq:fisher_kpp}
\end{equation}
with the Ablowitz--Zeppetella exact traveling wave solution
\begin{equation}
	u(x,t) = \frac{1}{\left(1 + \exp\left(\lambda (x - ct)\right)\right)^2},
	\label{eq:fisher_kpp_solution}
\end{equation}
where the waveform steepness parameter and wave speed are determined by
\begin{equation}
	\lambda = \sqrt{\frac{r}{6D}}, \quad c = 5\sqrt{\frac{Dr}{6}}.
	\label{eq:fisher_kpp_parameters}
\end{equation}
We set $\Omega=[-5,5]$, diffusion coefficient $D = 0.25$, and growth rate $r = 4.0$. Owing to the stronger nonlinearity of this equation compared with the previous cases, all methods in this subsection adopt a moderately enlarged network with four hidden layers of 80 neurons each, trained on 8{,}000 randomly sampled interior collocation points, 400 boundary points, and 400 initial condition points, and evaluated on 10{,}000 uniformly distributed test points.

Table~\ref{tab:fisher_kpp_optimizer_performance} presents the optimizer comparison. The Adam$\to$L-BFGS strategy yields the best performance, reducing $e_{\text{Loss}}$ by 61.2\% and $e_2$ by 52.8\% compared to pure L-BFGS. An interesting observation is that the pure L-BFGS optimizer produces a higher $e_2$ ($1.62\times10^{-3}$) than the Adam optimizer ($1.40\times10^{-3}$) despite achieving a lower training loss, suggesting that L-BFGS converges to a sharper minimum that does not generalize as well on this nonlinear problem. The two-stage strategy mitigates this issue by leveraging the broader exploration of Adam before the local refinement of L-BFGS. The training loss curves and prediction results are shown in Figure~\ref{fig:fisher_kpp_optimizer}.

\begin{figure}[htbp]
	\centering
	\includegraphics[width=0.5\textwidth]{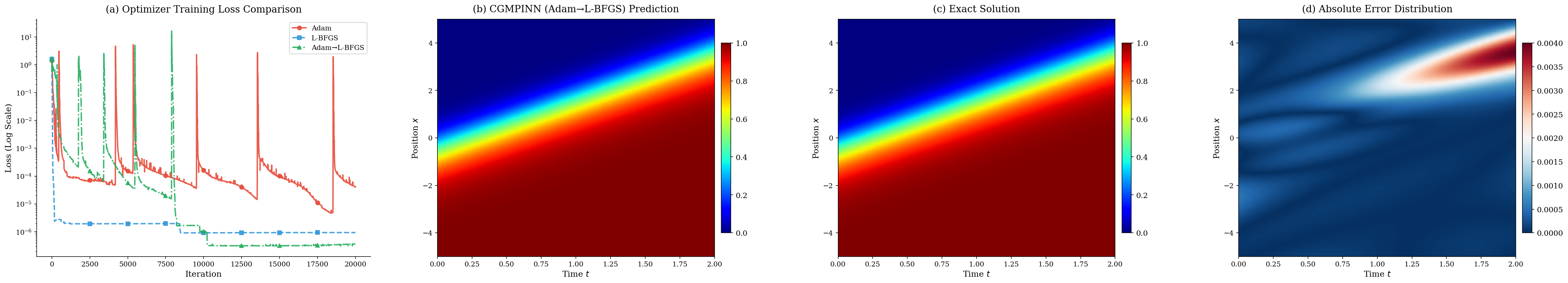}
	\caption{Optimizer comparison for 1D Fisher-KPP equation: (a) Training loss curves of different optimizers; (b) Prediction result of CGMPINN with Adam$\to$L-BFGS optimizer; (c) Exact solution of the equation; (d) Absolute error distribution of the prediction.}
	\label{fig:fisher_kpp_optimizer}
\end{figure}

Table~\ref{tab:fisher_kpp_pinn_methods_performance} summarizes the performance comparison among the considered methods. CGMPINN achieves the best results across all reported metrics, with $e_2 = 7.64\times10^{-4}$ and $e_\infty = 4.00\times10^{-3}$. Compared with the canonical PINN ($e_2 = 1.65\times10^{-3}$), CGMPINN reduces $e_2$ by 53.7\%. The second-best method in terms of $e_2$ is STAR-PINN ($1.10\times10^{-3}$), which is still outperformed by CGMPINN by 30.5\%. Notably, LNN-PINN ($e_2 = 4.66\times10^{-3}$) fails to accurately capture the traveling-wave solution for this nonlinear problem, yielding an error nearly three times larger than that of the canonical PINN despite attaining a lower training loss. This discrepancy between training loss and test accuracy reflects the challenge of optimizing PINNs for nonlinear PDEs with steep fronts. Among the remaining baselines, lbPINN and gPINN achieve only modest improvements in $e_2$ over the canonical PINN, at the expense of 3.7$\times$ and 5.0$\times$ higher training costs, respectively. By contrast, CGMPINN requires only 1722.9\,s, representing a 12.8\% increase over the standard PINN, while delivering the best overall accuracy and robustly capturing both the wave speed and the front profile. The training loss curves, wavefront tracking, and wave speed estimation results are further illustrated in Figure~\ref{fig:fisher_kpp_methods}.

\begin{figure}[htbp]
	\centering
	\includegraphics[width=0.5\textwidth]{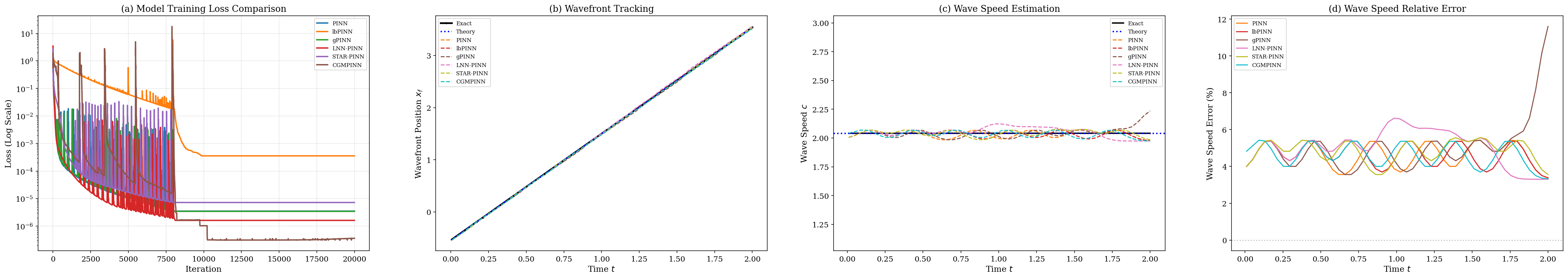}
	\caption{Performance comparison of different PINN variants for 1D Fisher-KPP equation: (a) Training loss curves of different models; (b) Wavefront tracking results of different PINN models; (c) Wave speed estimation results of different PINN models; (d) Temporal evolution of wave speed relative error for different models.}
	\label{fig:fisher_kpp_methods}
\end{figure}

As shown in Figure~\ref{fig:fisher_kpp_term_balance}, the temporal
derivative $u_t$ is nearly entirely balanced by the reaction term
$ru(1-u)$ across the spatial domain at $t = 0.5$, $1.0$, and $1.5$,
whereas the diffusion contribution $D u_{xx}$ is comparatively
negligible, confirming the reaction-dominated nature of the
Fisher-KPP dynamics during front propagation. The PDE residual
remains at the $\mathcal{O}(10^{-3})$--$\mathcal{O}(10^{-4})$ level
over all examined time instants and decreases monotonically as time
progresses, demonstrating that the predicted solution satisfies the
governing equation with high accuracy throughout the temporal
evolution.

\begin{figure}[htbp]
	\centering
	\includegraphics[width=0.5\textwidth]{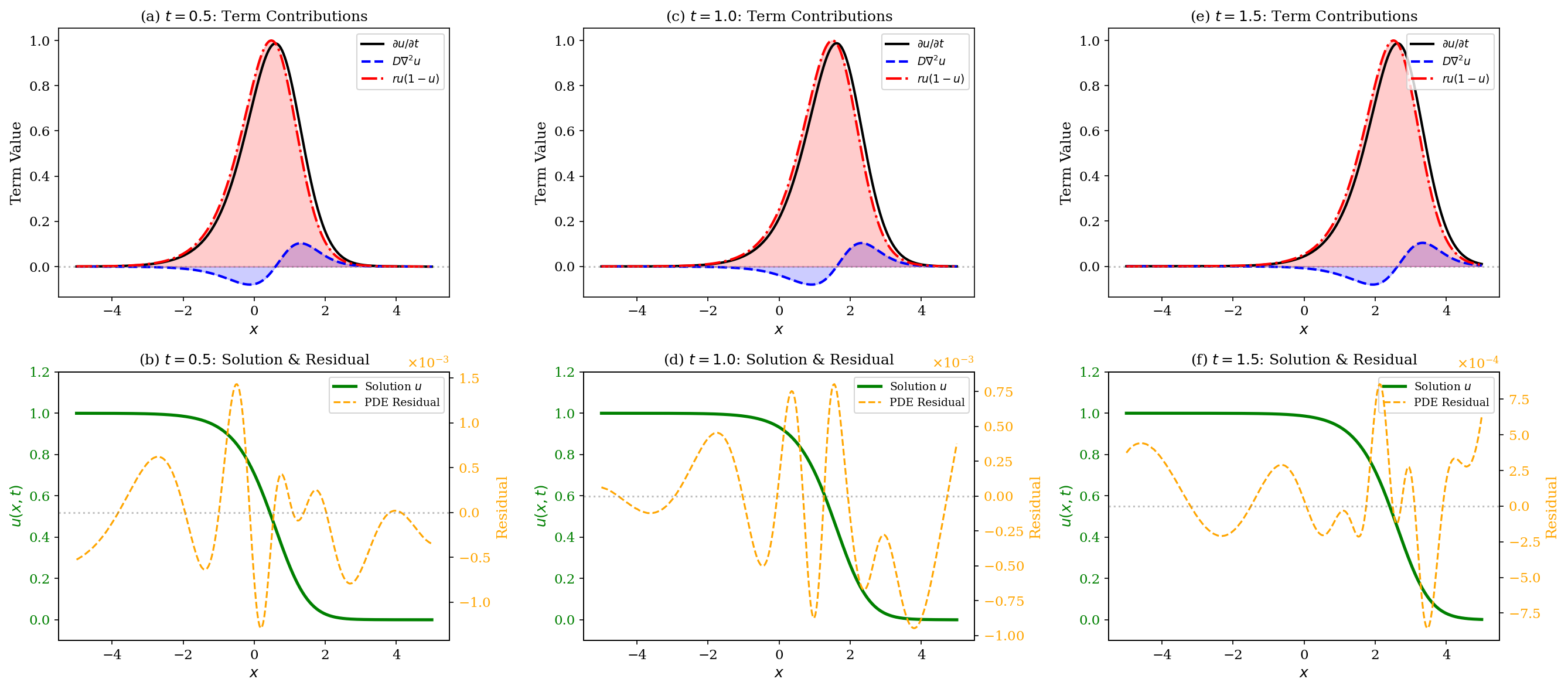}
	\caption{Term contribution, solution and residual visualization for 1D Fisher-KPP equation: (a)(c)(e) PDE term contributions at $t=0.5$, $t=1.0$ and $t=1.5$; (b)(d)(f) Predicted solution and PDE residual distribution at the corresponding time steps.}
	\label{fig:fisher_kpp_term_balance}
\end{figure}

	\section{Conclusion and Future Work}
	\label{sec:conclusion}
	
	In this paper, we proposed CGMPINN, a curriculum-guided physics-informed neural network framework that integrates Gaussian mixture modeling with dynamic curriculum learning for solving partial differential equations. The core idea is to employ a GMM to statistically characterize the distribution of PDE residuals, thereby providing a principled, data-driven quantification of local learning difficulty. A smooth curriculum schedule then progressively transitions training focus from easy, well-conditioned regions to hard, uncertain ones, while a precision-based variance modulation mechanism further prioritizes reliable clusters during early training. This dual curriculum is unified through a shared curriculum parameter and optionally coupled with ReLoBRaLo-based self-adaptive loss balancing. Systematic evaluations on six benchmark PDEs demonstrate that CGMPINN consistently achieves the lowest $e_2$ and $e_\infty$ errors among all compared methods, with reductions of up to 97.8\% in $e_2$ relative to the canonical PINN. Notably, these accuracy gains incur no significant computational overhead. On the theoretical front, we established formal guarantees for the proposed framework, including uniform equivalence between the curriculum-weighted and standard PDE losses, sublinear convergence of the gradient norm, and a generalization bound with an explicit weighting-induced bias characterization, thereby providing principled justification for the curriculum-guided reweighting strategy. An ablation study further confirms that the two core components---GMM-based difficulty quantification and curriculum scheduling---are complementary: neither component alone matches the performance of their combination, and omitting the curriculum schedule can lead to optimization failure on challenging problems.
	
	Despite the encouraging results, several directions merit further investigation. The current experiments are restricted to one- and two-dimensional domains; extending CGMPINN to high-dimensional PDEs and assessing the scalability of GMM-based difficulty quantification in such settings is an important next step. Developing principled strategies for automatic selection of the number of mixture components $K$ and the update frequency $k_{\mathrm{upd}}$, for instance via information-theoretic criteria, would further reduce manual tuning. Combining the curriculum-guided mechanism with domain decomposition approaches and extending the framework to inverse problems with sparse, noisy observations are also natural generalizations. On the theoretical side, the present analysis establishes convergence, loss equivalence, and generalization guarantees under idealized full-batch gradient dynamics; extending these results to a complete iterate-level convergence theory for the practical Adam$\to$L-BFGS training scheme and establishing tighter bounds that formally certify the advantage of dynamic curriculum reweighting over standard PINNs remain valuable open directions.

	\bibliographystyle{unsrt}
	\bibliography{ref}
	
	
\clearpage
\appendix
\numberwithin{equation}{section}
\allowdisplaybreaks[2]

\section{Theoretical Analysis and Proofs for CGMPINN}
\label{app:theory}

We analyze the optimization geometry induced by CGMPINN and a
generalization bound for the weighted PDE objective when the weighting rule
is fixed after a refresh step.
Unless stated otherwise, notation follows Section~\ref{sec:cgmpinn}; the
GMM--curriculum construction is as in
Eqs.~\eqref{eq:gmm_posterior}--\eqref{eq:weighted_pde_loss} and the total
loss is Eq.~\eqref{eq:total_loss}.

Convergence statements below concern \emph{idealized} full-batch gradient
descent on the time-varying total loss $\mathcal{L}_{\mathrm{total}}(\cdot;k)$,
which isolates GMM-based reweighting and objective drift.
The hybrid Adam$\to$L-BFGS procedure in experiments targets the same
objective but is not analyzed iterate-wise here; what we establish are
structural properties of the objective and of fixed-rule statistical risk
(Remark~\ref{rem:theory_limitations}).

\subsection{Auxiliary Assumptions}
\label{app:assumptions}

We first state the assumptions used throughout the analysis.

\begin{assumption}[Smoothness of the time-varying total loss]
	\label{ass:smooth_total}
	For every iteration $k$, the function
	$\mathcal{L}_{\mathrm{total}}(\cdot;k):\mathbb{R}^p\to\mathbb{R}$
	(where $p$ is the parameter dimension introduced in
	Section~\ref{sec:cgmpinn})
	is continuously differentiable and $L$-smooth, namely
	\begin{equation}
		\begin{split}
			\bigl\|
			\nabla_{\theta}\mathcal{L}_{\mathrm{total}}(\theta_1;k)
			-
			\nabla_{\theta}\mathcal{L}_{\mathrm{total}}(\theta_2;k)
			\bigr\|
			&\leqslant
			L\|\theta_1-\theta_2\|, \\
			&\quad \forall \theta_1,\theta_2\in\mathbb{R}^p.
		\end{split}
		\label{eq:app_smoothness}
	\end{equation}
	Moreover, $\mathcal{L}_{\mathrm{total}}(\theta;k)$ admits a finite
	uniform lower bound: there exists $\mathcal{L}_{\inf}\in\mathbb{R}$
	such that
	\begin{equation}
		\mathcal{L}_{\mathrm{total}}(\theta;k)\geqslant \mathcal{L}_{\inf},
		\quad
		\forall \theta\in\mathbb{R}^p,\, k\geqslant 0.
		\label{eq:app_lower_bound}
	\end{equation}
\end{assumption}

\begin{remark}[Discussion of Assumption~\ref{ass:smooth_total}]
	\label{rem:disc_smooth}
	Both conditions are mild and standard in first-order optimization
	analysis of neural networks.
	The $L$-smoothness holds globally for networks with smooth activations
	(e.g., $\tanh$, sigmoid, GELU, Swish) on any bounded parameter region,
	and almost everywhere for ReLU-type networks; in either case, the
	gradient remains Lipschitz on the compact iterate set encountered
	during training.
	The uniform lower bound is automatic for the composite MSE objective,
	since $\mathcal{L}_{\mathrm{total}}(\theta;k)\geqslant 0$
	for every $\theta$ and $k$, so we may take
	$\mathcal{L}_{\inf}=0$.
\end{remark}

\begin{assumption}[Bounded GMM variances]
	\label{ass:bounded_variances}
	There exist constants
	$0<\underline{\sigma}^{2}\leqslant \overline{\sigma}^{2}<\infty$
	such that, at every GMM refresh step and for every Gaussian component
	$m$,
	\begin{equation}
		\underline{\sigma}^{2}
		\leqslant
		\sigma_m^{2}
		\leqslant
		\overline{\sigma}^{2}.
		\label{eq:app_variance_bound}
	\end{equation}
\end{assumption}

\begin{remark}[Discussion of Assumption~\ref{ass:bounded_variances}]
	\label{rem:disc_variances}
	The two-sided bound on $\sigma_m^2$ is non-restrictive in practice.
	The upper bound $\overline{\sigma}^2$ follows from the boundedness of
	PDE residuals on the compact training domain.
	The strict positive lower bound $\underline{\sigma}^2>0$ is routinely
	enforced in EM-based GMM solvers via covariance regularization
	(e.g., the \texttt{reg\_covar} parameter of
	\texttt{scikit-learn}'s \texttt{GaussianMixture}), which adds a small
	floor to each component variance and rules out degenerate clusters.
\end{remark}

\begin{assumption}[Summable drift of the time-varying objective]
	\label{ass:summable_drift}
	Consider the full-batch gradient descent iteration
	\begin{equation}
		\theta_{k+1}
		=
		\theta_k
		-
		\eta \nabla_{\theta}\mathcal{L}_{\mathrm{total}}(\theta_k;k),
		\label{eq:app_gd_update}
	\end{equation}
	with the step size $\eta>0$ as introduced in
	Theorem~\ref{thm:main_nonconvex}.
	Assume that there exists a nonnegative sequence $\{\xi_k\}_{k\geqslant 0}$
	such that
	\begin{equation}
		\sum_{k=0}^{\infty}\xi_k<\infty
		\label{eq:app_summable_xi}
	\end{equation}
	and
	\begin{equation}
		\mathcal{L}_{\mathrm{total}}(\theta_{k+1};k+1)
		-
		\mathcal{L}_{\mathrm{total}}(\theta_{k+1};k)
		\leqslant
		\xi_k,
		\quad \forall k.
		\label{eq:app_drift_condition}
	\end{equation}
\end{assumption}

\begin{remark}[Discussion of Assumption~\ref{ass:summable_drift}]
	\label{rem:disc_drift}
	The objective drift in CGMPINN originates from three coupled sources:
	the curriculum parameter $\tau(k)$, the periodic GMM re-fitting that
	updates the sample weights $\{w_i\}$, and the ReLoBRaLo adaptive loss
	coefficients $\{\lambda_c(k)\}$.
	Summability of $\xi_k$ is consistent with the design of CGMPINN.
	By Eq.~\eqref{eq:tau}, the curriculum parameter saturates at
	$\tau(k)\equiv 1$ once $k\geqslant k_{\max}\,c_{\mathrm{sat}}$, after
	which the curriculum-induced drift vanishes identically.
	The GMM is refreshed only every $k_{\mathrm{upd}}$ iterations, and
	both its parameters $\Theta_{\mathrm{GMM}}$ and the resulting weights
	$\{w_i\}$ stabilize as the residual distribution settles; an analogous
	stabilization occurs for the ReLoBRaLo weights through the EMA in
	Eq.~\eqref{eq:relobralo_ema}.
	Together, these mechanisms ensure that $\xi_k$ decays sufficiently
	fast to be summable, which validates the use of
	Assumption~\ref{ass:summable_drift} for the convergence analysis below.
\end{remark}

\subsection{Convergence Analysis}
\label{app:convergence}

\subsubsection{Equivalence between weighted and standard PDE losses}

\begin{theorem}[Uniform equivalence between weighted and standard PDE losses]
	\label{thm:weight_equiv}
	Under Assumption~\ref{ass:bounded_variances}, define
	\begin{equation}
		\underline{v}
		:=
		\frac{\underline{\sigma}^{2}+\epsilon}
		{\overline{\sigma}^{2}+\epsilon}
		\in(0,1].
		\label{eq:app_v_under}
	\end{equation}
	Then, for every Gaussian component $m$ and every curriculum parameter
	$\tau\in[0,1]$, the component-level weight
	\begin{equation}
		w_m^{\mathrm{comp}}(\tau)
		=
		w_m^{\mathrm{CL}}(\tau)\,\tilde v_m(\tau)
		\label{eq:app_component_weight}
	\end{equation}
	satisfies
	\begin{equation}
		e^{-\beta}\underline{v}
		\leqslant
		w_m^{\mathrm{comp}}(\tau)
		\leqslant
		1.
		\label{eq:app_component_weight_bound}
	\end{equation}
	Define the pre-normalized sample weight by
	\begin{equation}
		w_i^{\mathrm{raw}}(\tau)
		:=
		\sum_{m=1}^{K}\gamma_{im}\,w_m^{\mathrm{comp}}(\tau).
		\label{eq:app_raw_weight}
	\end{equation}
	Then
	\begin{equation}
		e^{-\beta}\underline{v}
		\leqslant
		w_i^{\mathrm{raw}}(\tau)
		\leqslant
		1.
		\label{eq:app_raw_weight_bound}
	\end{equation}
	Let
	\begin{equation}
		\begin{aligned}
			c_- &:= \frac{N_{\Omega}e^{-\beta}\underline{v}}{N_{\Omega}+\epsilon},\\
			c_+ &:= \frac{N_{\Omega}}{N_{\Omega}e^{-\beta}\underline{v}+\epsilon}.
		\end{aligned}
		\label{eq:app_weight_constants}
	\end{equation}
	After the normalization in Eq.~\eqref{eq:weight_normalize}, the final
	sample weight $w_i$ satisfies
	\begin{equation}
		c_- \leqslant w_i \leqslant c_+.
		\label{eq:app_final_weight_bound}
	\end{equation}
	Consequently, for every $\theta$,
	\begin{equation}
		c_-\,
		\mathcal{L}_{\mathrm{PDE}}(\theta)
		\leqslant
		\mathcal{L}_{\mathrm{PDE}}^{w}(\theta)
		\leqslant
		c_+\,
		\mathcal{L}_{\mathrm{PDE}}(\theta).
		\label{eq:app_loss_equivalence}
	\end{equation}
\end{theorem}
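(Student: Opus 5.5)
The plan is a chain of elementary bounds, propagated from the component level to the sample level and then to the loss.

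\textbf{Component-level bounds.} First I bound each factor of $w_m^{\mathrm{comp}}(\tau)$ separately.

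Since the normalized difficulty $\tilde d_m$ from Eq.~\eqref{eq:difficulty_normalized} lies in $[0,1)$, both $\tilde d_m$ and $1-\tilde d_m$ lie in $[0,1]$. Hence $w_m^{\mathrm{easy}}, w_m^{\mathrm{hard}} \in [e^{-\beta},1]$. The convex combination $w_m^{\mathrm{CL}}(\tau)$ in Eq.~\eqref{eq:curriculum_weight} therefore also lies in $[e^{-\beta},1]$ for every $\tau\in[0,1]$.

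For the precision factor, Assumption~\ref{ass:bounded_variances} gives
\[
v_m=\frac{\max_l(\sigma_l^2+\epsilon)}{\sigma_m^2+\epsilon}\geqslant\frac{\underline\sigma^2+\epsilon}{\overline\sigma^2+\epsilon}=\underline v .
\]
Indeed, $v_m$ equals $\min_l(\sigma_l^2+\epsilon)/(\sigma_m^2+\epsilon)$; the numerator is at least $\underline\sigma^2+\epsilon$ and the denominator at most $\overline\sigma^2+\epsilon$. Also $v_m\le 1$ by definition. Then $\tilde v_m(\tau)=(1-\tau)v_m+\tau$ is a convex combination of $v_m$ and $1$, so it lies in $[\underline v,1]$.

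Multiplying the two factors gives Eq.~\eqref{eq:app_component_weight_bound}.

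\textbf{Raw sample weights.} The responsibilities $\gamma_{im}$ in Eq.~\eqref{eq:gmm_posterior} are nonnegative and sum to one over $m$. So $w_i^{\mathrm{raw}}$ is a convex combination of the component weights and inherits the same bounds, giving Eq.~\eqref{eq:app_raw_weight_bound}.

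\textbf{Normalized weights.} The normalization in Eq.~\eqref{eq:weight_normalize} gives $w_i=N_\Omega w_i^{\mathrm{raw}}/(S+\epsilon)$ with $S=\sum_j w_j^{\mathrm{raw}}$. By the previous step, $N_\Omega e^{-\beta}\underline v\leqslant S\leqslant N_\Omega$.
\begin{itemize}
\item For the lower bound, take the numerator at its minimum and the denominator at its maximum: $w_i\geqslant N_\Omega e^{-\beta}\underline v/(N_\Omega+\epsilon)=c_-$.
\item For the upper bound, take the numerator at most $N_\Omega\cdot 1$ and the denominator at least $N_\Omega e^{-\beta}\underline v+\epsilon$: $w_i\leqslant c_+$.
\end{itemize}
This is Eq.~\eqref{eq:app_final_weight_bound}.

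\textbf{Loss equivalence.} The weights are frozen snapshots, independent of the current $\theta$. Each term $r_{\mathrm{PDE},i}^2$ is nonnegative, so multiplying the bounds $c_-\leqslant w_i\leqslant c_+$ by $r_{\mathrm{PDE},i}^2/N_\Omega$ and summing over $i$ gives Eq.~\eqref{eq:app_loss_equivalence}. I also check $c_-\leqslant c_+$: it holds because $c_-\leqslant 1$ (as $e^{-\beta}\underline v\leqslant 1$) and $c_+\geqslant 1$ (as $N_\Omega e^{-\beta}\underline v+\epsilon\leqslant N_\Omega$ for small $\epsilon$). This is not strictly needed for the stated inequalities.

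\textbf{Main obstacle.} There is no real difficulty here. The only points that need care are the precision lower bound, which requires recognizing that the maximum of the inverse variances is attained at the smallest variance, and the fact that $\tilde d_m\in[0,1]$ holds even in the degenerate case $d_{\max}=d_{\min}$. In that case $\tilde d_m=0$, which still lies in the interval.
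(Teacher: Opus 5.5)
Your proposal is correct and follows the paper's proof essentially step for step: separate bounds $w_m^{\mathrm{CL}}(\tau)\in[e^{-\beta},1]$ and $\tilde v_m(\tau)\in[\underline{v},1]$, convexity in the responsibilities $\gamma_{im}$ for $w_i^{\mathrm{raw}}$, the sandwich $N_\Omega e^{-\beta}\underline{v}\leqslant\sum_j w_j^{\mathrm{raw}}\leqslant N_\Omega$ for the normalization, and termwise multiplication by the nonnegative $r_{\mathrm{PDE},i}^2$. Two cosmetic points: your first display for $v_m$ should have $\min_l(\sigma_l^2+\epsilon)$ rather than $\max_l$ in the numerator, as your next sentence correctly states; and $c_-\leqslant c_+$ holds for every $\epsilon>0$, not only small $\epsilon$, since with $a=e^{-\beta}\underline{v}\leqslant 1$ it reduces to $N_\Omega a^2+a\epsilon\leqslant N_\Omega+\epsilon$.
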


\begin{proof}
	By Eq.~\eqref{eq:easy_hard_weights},
	\[
	w_m^{\mathrm{easy}}=\exp(-\beta\tilde d_m),
	\quad
	w_m^{\mathrm{hard}}=\exp(-\beta(1-\tilde d_m)).
	\]
	Since $\tilde d_m\in[0,1]$, one has
	\begin{equation}
		e^{-\beta}\leqslant w_m^{\mathrm{easy}}\leqslant 1,
		\quad
		e^{-\beta}\leqslant w_m^{\mathrm{hard}}\leqslant 1.
		\label{eq:app_easy_hard_bound}
	\end{equation}
	By Eq.~\eqref{eq:curriculum_weight},
	\[
	w_m^{\mathrm{CL}}(\tau)
	=
	(1-\tau)w_m^{\mathrm{easy}}
	+
	\tau w_m^{\mathrm{hard}},
	\]
	which is a convex combination of
	$w_m^{\mathrm{easy}}$ and $w_m^{\mathrm{hard}}$.
	Hence,
	\begin{equation}
		e^{-\beta}\leqslant w_m^{\mathrm{CL}}(\tau)\leqslant 1.
		\label{eq:app_cl_bound}
	\end{equation}
	
	Next, by Assumption~\ref{ass:bounded_variances},
	\begin{equation}
		\frac{1}{\overline{\sigma}^{2}+\epsilon}
		\leqslant
		(\sigma_m^{2}+\epsilon)^{-1}
		\leqslant
		\frac{1}{\underline{\sigma}^{2}+\epsilon}.
		\label{eq:app_inverse_variance_bound}
	\end{equation}
	Using Eq.~\eqref{eq:precision_factor}, we have
	\begin{align}
		v_m
		&=
		\frac{(\sigma_m^{2}+\epsilon)^{-1}}
		{\max_{1\leqslant l\leqslant K}(\sigma_l^{2}+\epsilon)^{-1}}
		\geqslant
		\frac{(\overline{\sigma}^{2}+\epsilon)^{-1}}
		{(\underline{\sigma}^{2}+\epsilon)^{-1}}
		=
		\frac{\underline{\sigma}^{2}+\epsilon}
		{\overline{\sigma}^{2}+\epsilon}
		=
		\underline{v}.
		\label{eq:app_vm_lower}
	\end{align}
	Trivially, $v_m\leqslant 1$. Then by Eq.~\eqref{eq:effective_precision},
	\begin{equation}
		\underline{v}\leqslant \tilde v_m(\tau)\leqslant 1.
		\label{eq:app_vtilde_bound}
	\end{equation}
	Combining \eqref{eq:app_cl_bound},
	\eqref{eq:app_vtilde_bound}, and
	Eq.~\eqref{eq:combined_weight}, we obtain
	\[
	e^{-\beta}\underline{v}
	\leqslant
	w_m^{\mathrm{comp}}(\tau)
	=
	w_m^{\mathrm{CL}}(\tau)\tilde v_m(\tau)
	\leqslant
	1,
	\]
	which proves \eqref{eq:app_component_weight_bound}.
	
	Now consider the pre-normalized sample weight
	\[
	w_i^{\mathrm{raw}}(\tau)
	=
	\sum_{m=1}^{K}\gamma_{im}\,w_m^{\mathrm{comp}}(\tau).
	\]
	Since $\gamma_{im}\geqslant 0$ and
	$\sum_{m=1}^{K}\gamma_{im}=1$ by Eq.~\eqref{eq:gmm_posterior},
	$w_i^{\mathrm{raw}}(\tau)$ is a convex combination of
	$\{w_m^{\mathrm{comp}}(\tau)\}_{m=1}^{K}$.
	Therefore,
	\[
	e^{-\beta}\underline{v}
	\leqslant
	w_i^{\mathrm{raw}}(\tau)
	\leqslant
	1,
	\]
	which proves \eqref{eq:app_raw_weight_bound}.
	
	By Eq.~\eqref{eq:weight_normalize}, the final sample weight is
	\begin{equation}
		w_i
		=
		\frac{N_{\Omega}\,w_i^{\mathrm{raw}}(\tau)}
		{\sum_{j=1}^{N_{\Omega}}w_j^{\mathrm{raw}}(\tau)+\epsilon}.
		\label{eq:app_exact_normalization}
	\end{equation}
	Using \eqref{eq:app_raw_weight_bound}, we have
	\[
	\sum_{j=1}^{N_{\Omega}}w_j^{\mathrm{raw}}(\tau)
	\leqslant
	N_{\Omega},
	\quad
	\sum_{j=1}^{N_{\Omega}}w_j^{\mathrm{raw}}(\tau)
	\geqslant
	N_{\Omega}e^{-\beta}\underline{v}.
	\]
	Substituting these bounds into \eqref{eq:app_exact_normalization} yields
	\[
	w_i
	\geqslant
	\frac{N_{\Omega}e^{-\beta}\underline{v}}{N_{\Omega}+\epsilon}
	=
	c_-,
	\quad
	w_i
	\leqslant
	\frac{N_{\Omega}}{N_{\Omega}e^{-\beta}\underline{v}+\epsilon}
	=
	c_+,
	\]
	which proves \eqref{eq:app_final_weight_bound}.
	
	Finally, since
	$r_{\mathrm{PDE},i}^{2}\geqslant 0$ for all $i$,
	\begin{align}
		\mathcal{L}_{\mathrm{PDE}}^{w}(\theta)
		&=
		\frac{1}{N_{\Omega}}
		\sum_{i=1}^{N_{\Omega}}
		w_i\,r_{\mathrm{PDE},i}^{2}
		\nonumber\\
		&\geqslant
		\frac{1}{N_{\Omega}}
		\sum_{i=1}^{N_{\Omega}}
		c_-\,r_{\mathrm{PDE},i}^{2}
		\nonumber\\
		&=
		c_-\mathcal{L}_{\mathrm{PDE}}(\theta),
		\label{eq:app_loss_lower}
	\end{align}
	and similarly,
	\begin{align}
		\mathcal{L}_{\mathrm{PDE}}^{w}(\theta)
		\leqslant
		c_+\mathcal{L}_{\mathrm{PDE}}(\theta).
		\label{eq:app_loss_upper}
	\end{align}
	Combining \eqref{eq:app_loss_lower} and \eqref{eq:app_loss_upper}
	proves \eqref{eq:app_loss_equivalence}.
\end{proof}

\begin{corollary}[Consistency with the standard PDE objective]
	\label{cor:loss_consistency}
	Under the assumptions of Theorem~\ref{thm:weight_equiv}, if a sequence
	$\{\theta_n\}$ satisfies
	\begin{equation}
		\mathcal{L}_{\mathrm{PDE}}^{w}(\theta_n)\to 0,
		\label{eq:app_weighted_loss_zero}
	\end{equation}
	then necessarily
	\begin{equation}
		\mathcal{L}_{\mathrm{PDE}}(\theta_n)\to 0.
		\label{eq:app_unweighted_loss_zero}
	\end{equation}
\end{corollary}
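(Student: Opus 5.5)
The plan is to invoke the lower half of the sandwich inequality \eqref{eq:app_loss_equivalence} from Theorem~\ref{thm:weight_equiv} and divide through by the constant $c_-$. A subtlety is that the weights $\{w_i\}$ depend on the refresh step, and hence possibly on $n$ in the sequence $\{\theta_n\}$. The first thing to note is that the constants
\[
c_-=\frac{N_\Omega e^{-\beta}\underline{v}}{N_\Omega+\epsilon}
\]
and $c_+$ from \eqref{eq:app_weight_constants} depend only on $\beta$, $\epsilon$, $N_\Omega$, $\underline{\sigma}^2$, and $\overline{\sigma}^2$. They do not depend on $\tau$, on the GMM parameters, on the responsibilities $\gamma_{im}$, or on $\theta$. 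By Assumption~\ref{ass:bounded_variances}, the variance bounds hold at every refresh step. So the lower bound $w_i\geqslant c_-$ holds uniformly for every weighting rule the algorithm can produce, and in particular for whichever rule defines $\mathcal{L}^{w}_{\mathrm{PDE}}(\theta_n)$.

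The key steps, in order, are as follows.

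\begin{enumerate}
\item Check that $c_->0$. This follows because $N_\Omega\geqslant 1$, $e^{-\beta}>0$, and $\underline{v}=(\underline{\sigma}^2+\epsilon)/(\overline{\sigma}^2+\epsilon)>0$ since $\underline{\sigma}^2>0$.
\item Apply \eqref{eq:app_loss_lower} at each $\theta_n$. This gives
\[
0\leqslant \mathcal{L}_{\mathrm{PDE}}(\theta_n)\leqslant c_-^{-1}\,\mathcal{L}^{w}_{\mathrm{PDE}}(\theta_n).
\]
The left inequality holds because the loss is a mean of squares.
\item Conclude by the squeeze theorem: the right-hand side tends to $0$ by \eqref{eq:app_weighted_loss_zero}, so $\mathcal{L}_{\mathrm{PDE}}(\theta_n)\to 0$.
\end{enumerate}

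There is no genuine obstacle here. The only point needing care is the uniformity just discussed: $c_-$ must be independent of $n$ even when the weights vary along the sequence, and I will state this explicitly. As an optional remark, I may also note the converse via $c_+$, namely that $\mathcal{L}_{\mathrm{PDE}}(\theta_n)\to 0$ implies $\mathcal{L}^{w}_{\mathrm{PDE}}(\theta_n)\to 0$.
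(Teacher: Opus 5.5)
Your proposal is correct and matches the paper's proof, which is exactly the squeeze $0\leqslant \mathcal{L}_{\mathrm{PDE}}(\theta_n)\leqslant c_-^{-1}\mathcal{L}^{w}_{\mathrm{PDE}}(\theta_n)\to 0$ obtained from the lower half of the sandwich inequality. Your extra remark is a correct clarification that the paper leaves implicit: $c_-$ depends only on $\beta,\epsilon,N_\Omega,\underline{\sigma}^2,\overline{\sigma}^2$, so the argument still works when the weights change along the sequence.
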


\begin{proof}
	By \eqref{eq:app_loss_equivalence},
	\begin{equation}
		0
		\leqslant
		\mathcal{L}_{\mathrm{PDE}}(\theta_n)
		\leqslant
		c_-^{-1}\mathcal{L}_{\mathrm{PDE}}^{w}(\theta_n)\to 0.
		\label{eq:app_loss_consistency_proof}
	\end{equation}
\end{proof}

\subsubsection{Sublinear convergence for the time-varying total loss}

\begin{theorem}[Sublinear convergence of the time-varying total loss]
	\label{thm:nonconvex_convergence}
	Suppose Assumptions~\ref{ass:smooth_total} and
	\ref{ass:summable_drift} hold, and choose the step size
	$\eta\in(0,1/L]$.
	Then the iterates generated by \eqref{eq:app_gd_update} satisfy
	\begin{equation}
		\sum_{k=0}^{\infty}
		\bigl\|
		\nabla_{\theta}\mathcal{L}_{\mathrm{total}}(\theta_k;k)
		\bigr\|^2
		<\infty,
		\label{eq:app_grad_square_sum}
	\end{equation}
	where $\Xi:=\sum_{k=0}^{\infty}\xi_k$.
	Moreover, for every integer $K\geqslant 0$,
	\begin{equation}
		\begin{split}
			\min_{0\leqslant t\leqslant K}
			\bigl\|
			\nabla_{\theta}\mathcal{L}_{\mathrm{total}}(\theta_t;t)
			\bigr\|^2
			&\leqslant
			\frac{2}{\eta(K+1)} \\
			&\quad \times
			\Bigl(
			\mathcal{L}_{\mathrm{total}}(\theta_0;0)-\mathcal{L}_{\inf}+\Xi
			\Bigr).
		\end{split}
		\label{eq:app_sublinear_rate}
	\end{equation}
	Consequently,
	\begin{equation}
		\begin{aligned}
			\min_{0\leqslant t\leqslant K}
			\bigl\|
			\nabla_{\theta}\mathcal{L}_{\mathrm{total}}(\theta_t;t)
			\bigr\|^2
			& =
			O(K^{-1}),\\
			& \bigl\|
			\nabla_{\theta}\mathcal{L}_{\mathrm{total}}(\theta_k;k)
			\bigr\|\to 0.
			\label{eq:app_grad_vanish}
		\end{aligned}
	\end{equation}
\end{theorem}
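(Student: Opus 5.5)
The plan is the standard descent-lemma argument for gradient descent, with one change: a telescoping step that absorbs the objective drift of Assumption~\ref{ass:summable_drift}. Write $F_k(\theta):=\mathcal{L}_{\mathrm{total}}(\theta;k)$ and $g_k:=\nabla_\theta F_k(\theta_k)$.

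\textbf{One-step decrease.} By the $L$-smoothness of $F_k$ in Assumption~\ref{ass:smooth_total}, the quadratic upper bound gives
\[
F_k(\theta_{k+1})\leqslant F_k(\theta_k)-\eta\|g_k\|^2+\tfrac{L\eta^2}{2}\|g_k\|^2 .
\]
The update \eqref{eq:app_gd_update} gives $\theta_{k+1}-\theta_k=-\eta g_k$, which is what we substituted. Since $\eta\leqslant 1/L$, we have $1-L\eta/2\geqslant 1/2$, and therefore
\[
F_k(\theta_{k+1})\leqslant F_k(\theta_k)-\tfrac{\eta}{2}\|g_k\|^2 .
\]

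\textbf{Absorbing the drift.} The drift condition \eqref{eq:app_drift_condition} moves us to the next objective at the new iterate, $F_{k+1}(\theta_{k+1})\leqslant F_k(\theta_{k+1})+\xi_k$. Combined with the previous step this yields
\[
\tfrac{\eta}{2}\|g_k\|^2\leqslant F_k(\theta_k)-F_{k+1}(\theta_{k+1})+\xi_k .
\]

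\textbf{Telescoping.} Summing from $k=0$ to $K$, the left side becomes $\tfrac{\eta}{2}\sum_{k=0}^{K}\|g_k\|^2$. The differences telescope to $F_0(\theta_0)-F_{K+1}(\theta_{K+1})$. The uniform lower bound \eqref{eq:app_lower_bound} gives $-F_{K+1}(\theta_{K+1})\leqslant-\mathcal{L}_{\inf}$, and the drift terms satisfy $\sum_{k\leqslant K}\xi_k\leqslant\Xi$ because the $\xi_k$ are nonnegative. Hence
\[
\sum_{k=0}^{K}\|g_k\|^2\leqslant \tfrac{2}{\eta}\bigl(F_0(\theta_0)-\mathcal{L}_{\inf}+\Xi\bigr)
\]
for every $K$.

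\textbf{Conclusions.} Letting $K\to\infty$ gives \eqref{eq:app_grad_square_sum}. Bounding the minimum by the average, $(K+1)\min_{0\leqslant t\leqslant K}\|g_t\|^2\leqslant\sum_{t=0}^{K}\|g_t\|^2$, which gives \eqref{eq:app_sublinear_rate}. The constant is $C=2(F_0(\theta_0)-\mathcal{L}_{\inf}+\Xi)/\eta$, and it does not depend on $K$. The claim $\|g_k\|\to0$ in \eqref{eq:app_grad_vanish} is the elementary fact that the terms of a convergent series of nonnegative numbers tend to zero. The stated $O(K^{-1})$ rate is immediate from the bound on the minimum.

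\textbf{Main difficulty.} The only delicate point is the order of operations in the drift step. The descent lemma is applied to the frozen objective $F_k$, evaluated at both $\theta_k$ and $\theta_{k+1}$. The drift bound is then used exactly once, to switch from $F_k$ to $F_{k+1}$ at the single point $\theta_{k+1}$, which is precisely the form in which \eqref{eq:app_drift_condition} is stated. Done this way, no uniform-in-$\theta$ control of the drift is needed, and the telescoping goes through cleanly. The rest is routine.
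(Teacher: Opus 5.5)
Your proof is correct and follows essentially the same route as the paper. Both apply the descent lemma to the frozen objective with $\eta\leqslant 1/L$ to get a decrease of $\tfrac{\eta}{2}\|g_k\|^2$, invoke the drift bound once at $\theta_{k+1}$, telescope against $\mathcal{L}_{\inf}$ and $\Xi$, bound the minimum by the average, and use that the terms of a convergent nonnegative series tend to zero, giving the constant $C=2(\mathcal{L}_{\mathrm{total}}(\theta_0;0)-\mathcal{L}_{\inf}+\Xi)/\eta$.
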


\begin{proof}
	Since $\mathcal{L}_{\mathrm{total}}(\cdot;k)$ is $L$-smooth for each
	fixed $k$, the descent lemma yields
	\begin{align}
		\mathcal{L}_{\mathrm{total}}(\theta_{k+1};k)
		&\leqslant
		\mathcal{L}_{\mathrm{total}}(\theta_k;k)
		+ 
		\frac{L}{2}\|\theta_{k+1}-\theta_k\|^2 \nonumber\\
		& \quad
		+ \Bigl\langle
		\nabla_{\theta}\mathcal{L}_{\mathrm{total}}(\theta_k;k),
		\theta_{k+1}-\theta_k
		\Bigr\rangle
		\label{eq:app_descent_lemma}
	\end{align}
	Substituting the update rule \eqref{eq:app_gd_update}, we obtain
	\begin{align}
		\mathcal{L}_{\mathrm{total}}(\theta_{k+1};k)
		&\leqslant
		\mathcal{L}_{\mathrm{total}}(\theta_k;k)
		-
		\eta
		\bigl\|
		\nabla_{\theta}\mathcal{L}_{\mathrm{total}}(\theta_k;k)
		\bigr\|^2
		\nonumber\\
		&\quad
		+
		\frac{L\eta^2}{2}
		\bigl\|
		\nabla_{\theta}\mathcal{L}_{\mathrm{total}}(\theta_k;k)
		\bigr\|^2.
		\label{eq:app_after_substitution}
	\end{align}
	Since $\eta\leqslant 1/L$, one has
	$1-\frac{L\eta}{2}\geqslant \frac12$, and therefore
	\begin{equation}
		\mathcal{L}_{\mathrm{total}}(\theta_{k+1};k)
		\leqslant
		\mathcal{L}_{\mathrm{total}}(\theta_k;k)
		-
		\frac{\eta}{2}
		\bigl\|
		\nabla_{\theta}\mathcal{L}_{\mathrm{total}}(\theta_k;k)
		\bigr\|^2.
		\label{eq:app_basic_descent}
	\end{equation}
	
	By Assumption~\ref{ass:summable_drift},
	\begin{equation}
		\mathcal{L}_{\mathrm{total}}(\theta_{k+1};k+1)
		-
		\mathcal{L}_{\mathrm{total}}(\theta_{k+1};k)
		\leqslant
		\xi_k.
		\label{eq:app_drift_again}
	\end{equation}
	Combining \eqref{eq:app_basic_descent} and
	\eqref{eq:app_drift_again}, we obtain
	\begin{equation}
		\mathcal{L}_{\mathrm{total}}(\theta_{k+1};k+1)
		\leqslant
		\mathcal{L}_{\mathrm{total}}(\theta_k;k)
		-
		\frac{\eta}{2}
		\bigl\|
		\nabla_{\theta}\mathcal{L}_{\mathrm{total}}(\theta_k;k)
		\bigr\|^2
		+
		\xi_k.
		\label{eq:app_descent_with_drift}
	\end{equation}
	
	Summing \eqref{eq:app_descent_with_drift} from $k=0$ to $K$ yields
	\begin{equation}
		\begin{split}
			\mathcal{L}_{\mathrm{total}}(\theta_{K+1};K+1)
			&\leqslant
			\mathcal{L}_{\mathrm{total}}(\theta_0;0)
			+
			\sum_{k=0}^{K}\xi_k \\
			&\quad
			-
			\frac{\eta}{2}
			\sum_{k=0}^{K}
			\bigl\|
			\nabla_{\theta}\mathcal{L}_{\mathrm{total}}(\theta_k;k)
			\bigr\|^2.
		\end{split}
		\label{eq:app_telescoping}
	\end{equation}
	Using the lower bound
	$\mathcal{L}_{\mathrm{total}}(\theta_{K+1};K+1)\geqslant \mathcal{L}_{\inf}$,
	we obtain
	\begin{equation}
		\begin{split}
			\frac{\eta}{2}
			\sum_{k=0}^{K}
			\bigl\|
			\nabla_{\theta}\mathcal{L}_{\mathrm{total}}(\theta_k;k)
			\bigr\|^2
			&\leqslant
			\mathcal{L}_{\mathrm{total}}(\theta_0;0)-\mathcal{L}_{\inf}
			+\sum_{k=0}^{K}\xi_k \\
			&\leqslant
			\mathcal{L}_{\mathrm{total}}(\theta_0;0)-\mathcal{L}_{\inf}+\Xi.
		\end{split}
		\label{eq:app_sum_bound}
	\end{equation}
	Letting $K\to\infty$ proves \eqref{eq:app_grad_square_sum}.
	
	Moreover,
	\begin{equation}
		(K+1)\min_{0\leqslant t\leqslant K}
		\bigl\|
		\nabla_{\theta}\mathcal{L}_{\mathrm{total}}(\theta_t;t)
		\bigr\|^2
		\leqslant
		\sum_{k=0}^{K}
		\bigl\|
		\nabla_{\theta}\mathcal{L}_{\mathrm{total}}(\theta_k;k)
		\bigr\|^2.
		\label{eq:app_min_bound}
	\end{equation}
	Combining \eqref{eq:app_sum_bound} and \eqref{eq:app_min_bound} gives
	\eqref{eq:app_sublinear_rate}.
	
	Finally, since the nonnegative series in
	\eqref{eq:app_grad_square_sum} converges, its general term must
	converge to zero, which proves \eqref{eq:app_grad_vanish}.
\end{proof}

\begin{corollary}[Stationarity of accumulation points]
	\label{cor:stationary_points}
	Assume, in addition to
	Theorem~\ref{thm:nonconvex_convergence}, that the sequence
	$\{\theta_k\}$ is bounded and that there exists a continuously
	differentiable limit objective $\mathcal{L}_{\infty}(\theta)$ such that
	\begin{equation}
		\begin{aligned}
			\sup_{\theta\in \mathcal K}
			\bigl\|
			\nabla_{\theta}\mathcal{L}_{\mathrm{total}}(\theta;k)
			-
			\nabla_{\theta}\mathcal{L}_{\infty}(\theta)
			\bigr\|
			\to 0, \\
			\text{for every compact } \mathcal K\subset\mathbb{R}^p.
		\end{aligned}
		\label{eq:app_uniform_gradient_convergence}
	\end{equation}
	Then every accumulation point of $\{\theta_k\}$ is a stationary point of
	$\mathcal{L}_{\infty}(\theta)$.
\end{corollary}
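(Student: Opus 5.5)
Let $\bar\theta$ be an accumulation point of $\{\theta_k\}$, with a subsequence $\theta_{k_j}\to\bar\theta$. The goal is to show $\nabla_\theta\mathcal{L}_\infty(\bar\theta)=0$. The plan has three parts: use the vanishing of the gradients of the time-varying objective along the whole trajectory (already established), swap the time-varying gradient for the limit gradient using the uniform convergence hypothesis, and pass to the limit using continuity of $\nabla_\theta\mathcal{L}_\infty$.

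\textbf{Step 1: a compact set.} Since $\{\theta_k\}$ is bounded, there is a closed ball $\mathcal K\subset\mathbb{R}^p$ that contains every iterate and also $\bar\theta$. This $\mathcal K$ is compact, so the uniform gradient convergence hypothesis \eqref{eq:app_uniform_gradient_convergence} applies on it.

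\textbf{Step 2: triangle inequality.} For each $j$,
\begin{align*}
\bigl\|\nabla_\theta\mathcal{L}_\infty(\bar\theta)\bigr\|
&\leqslant \bigl\|\nabla_\theta\mathcal{L}_\infty(\bar\theta)-\nabla_\theta\mathcal{L}_\infty(\theta_{k_j})\bigr\|\\
&\quad+\sup_{\theta\in\mathcal K}\bigl\|\nabla_\theta\mathcal{L}_\infty(\theta)-\nabla_\theta\mathcal{L}_{\mathrm{total}}(\theta;k_j)\bigr\|\\
&\quad+\bigl\|\nabla_\theta\mathcal{L}_{\mathrm{total}}(\theta_{k_j};k_j)\bigr\|.
\end{align*}
The three terms are handled as follows.
\begin{itemize}
\item The first term tends to zero because $\mathcal{L}_\infty$ is continuously differentiable and $\theta_{k_j}\to\bar\theta$.
\item The second term tends to zero by \eqref{eq:app_uniform_gradient_convergence} applied on $\mathcal K$, since $k_j\to\infty$.
\item The third term tends to zero by \eqref{eq:app_grad_vanish} in Theorem~\ref{thm:nonconvex_convergence}, which gives $\|\nabla_\theta\mathcal{L}_{\mathrm{total}}(\theta_k;k)\|\to0$ along the full sequence, hence along any subsequence.
\end{itemize}

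\textbf{Step 3: conclusion.} Letting $j\to\infty$ gives $\nabla_\theta\mathcal{L}_\infty(\bar\theta)=0$.

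The only delicate point is that the second term is evaluated at the moving point $\theta_{k_j}$ rather than at the fixed point $\bar\theta$. Pointwise convergence of the gradients would therefore not be enough. This is exactly why the hypothesis requires uniform convergence on compact sets, and why boundedness of the iterates is needed to build the single compact set $\mathcal K$ in Step 1. Beyond this, no further obstacle is expected.
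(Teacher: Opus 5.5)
Your argument is correct, but you split the triangle inequality differently from the paper. The paper uses $\nabla_\theta\mathcal{L}_{\mathrm{total}}(\bar\theta;k_j)$ as the intermediate term. The comparison with $\nabla_\theta\mathcal{L}_\infty$ is then made only at the fixed point $\bar\theta$, and the moving point is handled by $\|\nabla_\theta\mathcal{L}_{\mathrm{total}}(\bar\theta;k_j)-\nabla_\theta\mathcal{L}_{\mathrm{total}}(\theta_{k_j};k_j)\|\leqslant L\|\bar\theta-\theta_{k_j}\|$, using that the smoothness constant in Assumption~\ref{ass:smooth_total} does not depend on $k$. You use $\nabla_\theta\mathcal{L}_\infty(\theta_{k_j})$ instead, so you handle the moving point through continuity of $\nabla_\theta\mathcal{L}_\infty$ and genuinely need the supremum over the compact set $\mathcal K$. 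Your route uses exactly the hypotheses of the corollary plus the conclusion of Theorem~\ref{thm:nonconvex_convergence}. It would therefore still work wherever the trajectory gradients vanish, even without a $k$-independent Lipschitz constant. The paper's route leans on the standing uniform smoothness and in return needs only pointwise convergence of the gradients at $\bar\theta$; it never actually uses continuity of $\nabla_\theta\mathcal{L}_\infty$ or boundedness of the iterates. So your closing remark that pointwise convergence ``would not be enough'' holds for your decomposition but not in this setting: given the equi-Lipschitz gradients, the paper's split shows that pointwise convergence at the accumulation point already suffices.
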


\begin{proof}
	Let $\bar{\theta}$ be an accumulation point of $\{\theta_k\}$.
	Then there exists a subsequence $\{\theta_{k_j}\}$ such that
	$\theta_{k_j}\to\bar{\theta}$.
	Since $\{\theta_k\}$ is bounded, all $\theta_{k_j}$ and $\bar{\theta}$
	belong to a common compact set $\mathcal K$.
	We write
	\begin{align}
		\bigl\|
		\nabla_{\theta}\mathcal{L}_{\infty}(\bar{\theta})
		\bigr\|
		&\leqslant
		\bigl\|
		\nabla_{\theta}\mathcal{L}_{\infty}(\bar{\theta})
		-
		\nabla_{\theta}\mathcal{L}_{\mathrm{total}}(\bar{\theta};k_j)
		\bigr\|
		\nonumber\\
		&\quad
		+
		\bigl\|
		\nabla_{\theta}\mathcal{L}_{\mathrm{total}}(\bar{\theta};k_j)
		-
		\nabla_{\theta}\mathcal{L}_{\mathrm{total}}(\theta_{k_j};k_j)
		\bigr\|
		\nonumber\\
		&\quad
		+
		\bigl\|
		\nabla_{\theta}\mathcal{L}_{\mathrm{total}}(\theta_{k_j};k_j)
		\bigr\|.
		\label{eq:app_stationary_split}
	\end{align}
	The first term tends to zero by
	\eqref{eq:app_uniform_gradient_convergence}.
	The second term tends to zero by Assumption~\ref{ass:smooth_total}:
	\begin{equation}
		\bigl\|
		\nabla_{\theta}\mathcal{L}_{\mathrm{total}}(\bar{\theta};k_j)
		-
		\nabla_{\theta}\mathcal{L}_{\mathrm{total}}(\theta_{k_j};k_j)
		\bigr\|
		\leqslant
		L\|\bar{\theta}-\theta_{k_j}\|
		\to 0.
		\label{eq:app_stationary_smoothness}
	\end{equation}
	The third term tends to zero by \eqref{eq:app_grad_vanish}.
	Hence,
	\begin{equation}
		\nabla_{\theta}\mathcal{L}_{\infty}(\bar{\theta})=0.
		\label{eq:app_stationary_limit}
	\end{equation}
\end{proof}

\subsubsection{PL-based convergence with drift control}

\begin{assumption}[Uniform Polyak--\L{}ojasiewicz condition with a common reference level]
	\label{ass:PL}
	There exist constants $\mu>0$ and
	$\mathcal{L}^{\star}\in\mathbb{R}$ such that, for every $k$ and every
	$\theta$,
	\begin{equation}
		\mathcal{L}_{\mathrm{total}}(\theta;k)\geqslant \mathcal{L}^{\star},
		\label{eq:app_PL_lower}
	\end{equation}
	and
	\begin{equation}
		\frac{1}{2}
		\bigl\|
		\nabla_{\theta}\mathcal{L}_{\mathrm{total}}(\theta;k)
		\bigr\|^2
		\geqslant
		\mu
		\bigl(
		\mathcal{L}_{\mathrm{total}}(\theta;k)-\mathcal{L}^{\star}
		\bigr).
		\label{eq:app_PL_condition}
	\end{equation}
\end{assumption}

\begin{assumption}[Uniformly bounded drift amplitude]
	\label{ass:bounded_drift}
	In addition to Assumption~\ref{ass:summable_drift}, assume that there
	exists $\bar\xi\geqslant 0$ such that
	\begin{equation}
		0\leqslant \xi_k\leqslant \bar\xi,
		\quad \forall k.
		\label{eq:app_xi_upper}
	\end{equation}
\end{assumption}

\begin{theorem}[PL-based convergence to a drift-controlled neighborhood]
	\label{thm:PL_neighborhood}
	Suppose Assumptions~\ref{ass:smooth_total}, \ref{ass:PL}, and
	\ref{ass:bounded_drift} hold, and choose $\eta\in(0,1/L]$.
	Define the objective gap
	\begin{equation}
		E_k
		:=
		\mathcal{L}_{\mathrm{total}}(\theta_k;k)-\mathcal{L}^{\star}.
		\label{eq:app_gap_def}
	\end{equation}
	Then
	\begin{equation}
		E_{k+1}
		\leqslant
		(1-\eta\mu)E_k+\xi_k.
		\label{eq:app_PL_recursion}
	\end{equation}
	Consequently, for every $k\geqslant 0$,
	\begin{equation}
		E_k
		\leqslant
		(1-\eta\mu)^kE_0
		+
		\sum_{t=0}^{k-1}(1-\eta\mu)^{k-1-t}\xi_t.
		\label{eq:app_PL_unrolled}
	\end{equation}
	In particular, since $\xi_t\leqslant \bar\xi$,
	\begin{equation}
		\begin{aligned}
			E_k
			\leqslant\;&
			(1-\eta\mu)^kE_0
			+
			\frac{\bar\xi}{\eta\mu}
			\Bigl(
			1-(1-\eta\mu)^k
			\Bigr) \\
			\leqslant\;&
			(1-\eta\mu)^kE_0+\frac{\bar\xi}{\eta\mu}.
		\end{aligned}
		\label{eq:app_PL_neighborhood}
	\end{equation}
	Hence,
	\begin{equation}
		\limsup_{k\to\infty}E_k
		\leqslant
		\frac{\bar\xi}{\eta\mu}.
		\label{eq:app_PL_limsup}
	\end{equation}
	That is, the homogeneous term decays linearly, and the total loss is
	controlled by a drift-dominated neighborhood whose radius is
	proportional to $\bar\xi/(\eta\mu)$.
\end{theorem}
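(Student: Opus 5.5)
The plan is to reuse the one-step descent inequality from the proof of Theorem~\ref{thm:nonconvex_convergence}, upgrade it with the PL condition into a contraction, and then unroll the resulting linear recursion.

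\textbf{Step 1: one-step recursion \eqref{eq:app_PL_recursion}.} From the descent lemma and $\eta\leqslant 1/L$ we already have \eqref{eq:app_basic_descent}:
\[
\mathcal{L}_{\mathrm{total}}(\theta_{k+1};k)\leqslant \mathcal{L}_{\mathrm{total}}(\theta_k;k)-\tfrac{\eta}{2}\|\nabla_\theta\mathcal{L}_{\mathrm{total}}(\theta_k;k)\|^2 .
\]
Since the PL inequality \eqref{eq:app_PL_condition} holds at the fixed index $k$ with the common level $\mathcal{L}^\star$, we get $\tfrac{\eta}{2}\|\nabla\|^2\geqslant \eta\mu E_k$. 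Subtracting $\mathcal{L}^\star$ then gives
\[
\mathcal{L}_{\mathrm{total}}(\theta_{k+1};k)-\mathcal{L}^\star\leqslant (1-\eta\mu)E_k .
\]
Adding the drift bound \eqref{eq:app_drift_condition}, which moves the objective index from $k$ to $k+1$ at the point $\theta_{k+1}$, yields \eqref{eq:app_PL_recursion}.

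The one point needing care is that $1-\eta\mu\in[0,1)$, so that the contraction factor is nonnegative and the inequality can be iterated. This follows from combining PL with $L$-smoothness. For any $\theta$, the descent lemma at the step $\theta-\tfrac1L\nabla$ gives
\[
\mathcal{L}^\star\leqslant \mathcal{L}(\theta)-\tfrac{1}{2L}\|\nabla\mathcal{L}(\theta)\|^2 ,
\]
so $\|\nabla\|^2\leqslant 2L(\mathcal{L}-\mathcal{L}^\star)$. Together with PL, $\|\nabla\|^2\geqslant 2\mu(\mathcal{L}-\mathcal{L}^\star)$. At any point with $\mathcal{L}>\mathcal{L}^\star$ this forces $\mu\leqslant L$, and hence $\eta\mu\leqslant 1$. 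If the total loss equals $\mathcal{L}^\star$ identically, then $E_k\equiv 0$ and the claim is trivial. Nonnegativity of $E_k$ itself comes from \eqref{eq:app_PL_lower}.

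\textbf{Step 2: unrolling \eqref{eq:app_PL_unrolled}.} I will induct on $k$. The case $k=0$ is immediate. For the inductive step, multiply the hypothesis by $q:=1-\eta\mu\geqslant 0$ and add $\xi_k$. Nonnegativity of $q$ is exactly what preserves the direction of the inequality.

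\textbf{Step 3: neighborhood bound \eqref{eq:app_PL_neighborhood} and \eqref{eq:app_PL_limsup}.} Bound each $\xi_t\leqslant\bar\xi$ using Assumption~\ref{ass:bounded_drift}, and sum the geometric series:
\[
\sum_{t=0}^{k-1}q^{k-1-t}=\frac{1-q^k}{\eta\mu}\leqslant \frac{1}{\eta\mu}.
\]
Taking $\limsup$ and using $q^k\to 0$, which holds because $\eta\mu>0$, gives \eqref{eq:app_PL_limsup}.

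I expect no real obstacle here. The only subtle point is the compatibility $\eta\mu\leqslant 1$ handled in Step 1.
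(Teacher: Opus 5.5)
Your proposal is correct and follows the paper's proof: the descent inequality, the PL bound and the drift condition combine into the one-step recursion, which you unroll by induction and then bound with a geometric sum. Your extra check that $\eta\mu\leqslant\eta L\leqslant 1$, so that $q=1-\eta\mu\geqslant 0$ (needed for the induction and for $q^{j}\xi_t\leqslant q^{j}\bar\xi$), supplies a step the paper uses without justification; the one inaccuracy is calling the degenerate case $\mathcal{L}_{\mathrm{total}}(\cdot;k)\equiv\mathcal{L}^{\star}$ trivial, because there $\mu$ is unconstrained and for $\mu>1/\eta$ the unrolled bound can genuinely fail (e.g.\ $\xi_0>0=\xi_1$ at $k=2$), which is a defect of the statement rather than of your argument.
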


\begin{proof}
	From \eqref{eq:app_descent_with_drift}, we already have
	\begin{equation}
		\begin{split}
			\mathcal{L}_{\mathrm{total}}(\theta_{k+1};k+1)
			&\leqslant
			\mathcal{L}_{\mathrm{total}}(\theta_k;k)
			+
			\xi_k \\
			&\quad
			-
			\frac{\eta}{2}
			\bigl\|
			\nabla_{\theta}\mathcal{L}_{\mathrm{total}}(\theta_k;k)
			\bigr\|^2.
		\end{split}
		\label{eq:app_PL_start}
	\end{equation}
	By Assumption~\ref{ass:PL},
	\begin{equation}
		\frac{1}{2}
		\bigl\|
		\nabla_{\theta}\mathcal{L}_{\mathrm{total}}(\theta_k;k)
		\bigr\|^2
		\geqslant
		\mu
		\bigl(
		\mathcal{L}_{\mathrm{total}}(\theta_k;k)-\mathcal{L}^{\star}
		\bigr)
		=
		\mu E_k.
		\label{eq:app_PL_apply}
	\end{equation}
	Substituting \eqref{eq:app_PL_apply} into
	\eqref{eq:app_PL_start} yields
	\begin{equation}
		\mathcal{L}_{\mathrm{total}}(\theta_{k+1};k+1)
		\leqslant
		\mathcal{L}_{\mathrm{total}}(\theta_k;k)
		-
		\eta\mu E_k
		+
		\xi_k.
		\label{eq:app_PL_mid}
	\end{equation}
	Subtracting $\mathcal{L}^{\star}$ from both sides gives
	\eqref{eq:app_PL_recursion}.
	
	Let $q:=1-\eta\mu$.
	By induction on $k$, \eqref{eq:app_PL_recursion} implies
	\eqref{eq:app_PL_unrolled}.
	Using $\xi_t\leqslant \bar\xi$, we further obtain
	\begin{equation}
		\sum_{t=0}^{k-1}q^{k-1-t}\xi_t
		\leqslant
		\bar\xi\sum_{j=0}^{k-1}q^j
		=
		\bar\xi\,
		\frac{1-q^k}{1-q}
		=
		\bar\xi\,
		\frac{1-(1-\eta\mu)^k}{\eta\mu},
		\label{eq:app_geometric_series}
	\end{equation}
	which yields \eqref{eq:app_PL_neighborhood}. Taking the limit superior
	proves \eqref{eq:app_PL_limsup}.
\end{proof}

\begin{corollary}[Exact convergence under summable drift]
	\label{cor:PL_exact_conv}
	Under the assumptions of Theorem~\ref{thm:PL_neighborhood}, if in
	addition Assumption~\ref{ass:summable_drift} holds, namely
	$\sum_{k=0}^{\infty}\xi_k<\infty$, then
	\begin{equation}
		E_k\to 0,
		\qquad\text{i.e.,}\qquad
		\mathcal{L}_{\mathrm{total}}(\theta_k;k)\to \mathcal{L}^{\star}.
		\label{eq:app_PL_exact_conv}
	\end{equation}
\end{corollary}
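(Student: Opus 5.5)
The plan is to work directly from the unrolled recursion \eqref{eq:app_PL_unrolled} of Theorem~\ref{thm:PL_neighborhood}, writing $q:=1-\eta\mu$. First I would check that $q\in[0,1)$. Combining $L$-smoothness with the PL inequality gives $\mu\leqslant L$: the descent lemma at step $1/L$ together with the lower bound $\mathcal{L}^{\star}$ yields $\tfrac{1}{2L}\|\nabla\mathcal{L}_{\mathrm{total}}\|^2\leqslant E$, and comparing this with $\tfrac12\|\nabla\mathcal{L}_{\mathrm{total}}\|^2\geqslant\mu E$ gives the claim. Since $\eta\leqslant 1/L$, we then have $0<\eta\mu\leqslant 1$ and hence $q\in[0,1)$. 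Independently of this, $E_k\geqslant 0$ by \eqref{eq:app_PL_lower}, so it suffices to show $\limsup_k E_k\leqslant 0$.

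The homogeneous term $q^kE_0$ tends to zero geometrically. The remaining task is the convolution term
\[
S_k:=\sum_{t=0}^{k-1}q^{k-1-t}\xi_t ,
\]
which I would show tends to zero using only $\xi_t\geqslant 0$ and $\sum_t\xi_t<\infty$. This is the standard discrete Toeplitz / dominated-convergence argument, carried out by splitting the sum.

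Fix $\varepsilon>0$ and choose $T$ such that $\sum_{t\geqslant T}\xi_t<\varepsilon$. For $k>T$, split $S_k$ into the ranges $t<T$ and $t\geqslant T$.
\begin{itemize}
\item The tail is at most $\sum_{t\geqslant T}\xi_t<\varepsilon$, since $q^{k-1-t}\leqslant 1$.
\item The head is at most $q^{k-T}\sum_{t<T}\xi_t\leqslant q^{k-T}\,\Xi$, which tends to zero as $k\to\infty$ with $T$ fixed, because $q<1$.
\end{itemize}
Hence $\limsup_k S_k\leqslant\varepsilon$. Since $\varepsilon$ is arbitrary, $S_k\to 0$, and therefore $E_k\to 0$; equivalently, $\mathcal{L}_{\mathrm{total}}(\theta_k;k)\to\mathcal{L}^{\star}$.

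The only delicate point is guaranteeing $q<1$ strictly and $q\geqslant 0$ so that the geometric factors are well behaved. The boundary case $q=0$ is harmless: then $E_k\leqslant\xi_{k-1}\to 0$ directly. Assumption~\ref{ass:bounded_drift} is not actually needed beyond its use in establishing the recursion, since summability already forces $\xi_k\to 0$.
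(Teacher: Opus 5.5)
Your proof is correct and follows the paper's argument essentially verbatim. You split the convolution sum $\sum_{t<k} q^{k-1-t}\xi_t$ at an index $T$ beyond which the tail of $\sum_t \xi_t$ is small, bound that tail using $q^{k-1-t}\leqslant 1$, and bound the head by $q^{k-T}\sum_{t<T}\xi_t\to 0$. One addition of yours is worth keeping: the paper simply asserts $q\in[0,1)$, whereas you derive it from $\mu\leqslant L$; that derivation is valid whenever some $\mathcal{L}_{\mathrm{total}}(\theta;k)>\mathcal{L}^{\star}$, and in the remaining degenerate case $E_k\equiv 0$, so the claim holds trivially.
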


\begin{proof}
	From \eqref{eq:app_PL_unrolled}, it suffices to show that
	\[
	\sum_{t=0}^{k-1}(1-\eta\mu)^{k-1-t}\xi_t\to 0.
	\]
	Let $q:=1-\eta\mu\in[0,1)$.
	Since $\sum_{t=0}^{\infty}\xi_t<\infty$, for any $\varepsilon>0$
	(local proof variable, distinct from the global numerical constant
	$\epsilon$) there exists a threshold index $T_0$ (not to be confused
	with the final time $T$ in Section~\ref{sec:cgmpinn}) such that
	\[
	\sum_{t=T_0}^{\infty}\xi_t<\frac{\varepsilon}{2}.
	\]
	For every $k>T_0$,
	\begin{align}
		\sum_{t=0}^{k-1}q^{k-1-t}\xi_t
		&=
		\sum_{t=0}^{T_0-1}q^{k-1-t}\xi_t
		+
		\sum_{t=T_0}^{k-1}q^{k-1-t}\xi_t
		\nonumber\\
		&\leqslant
		q^{k-T_0}\sum_{t=0}^{T_0-1}\xi_t
		+
		\sum_{t=T_0}^{k-1}\xi_t.
		\label{eq:app_tail_split}
	\end{align}
	The second term is smaller than $\varepsilon/2$ by construction.
	The first term tends to zero as $k\to\infty$ because $q\in[0,1)$.
	Hence, for all sufficiently large $k$,
	\[
	\sum_{t=0}^{k-1}q^{k-1-t}\xi_t<\varepsilon.
	\]
	Therefore,
	\[
	\sum_{t=0}^{k-1}q^{k-1-t}\xi_t\to 0.
	\]
	Together with \eqref{eq:app_PL_unrolled} and $q^kE_0\to 0$, this proves
	\eqref{eq:app_PL_exact_conv}.
\end{proof}

\begin{corollary}[Exact linear convergence in the zero-drift case]
	\label{cor:zero_drift_linear}
	Under the assumptions of Theorem~\ref{thm:PL_neighborhood}, if
	$\xi_k\equiv 0$, then
	\begin{equation}
		E_k\leqslant (1-\eta\mu)^kE_0,
		\quad \forall k\geqslant 0.
		\label{eq:app_zero_drift_linear}
	\end{equation}
	Hence,
	\begin{equation}
		\mathcal{L}_{\mathrm{total}}(\theta_k;k)\to \mathcal{L}^{\star}
		\quad\text{at a linear rate.}
		\label{eq:app_exact_linear}
	\end{equation}
\end{corollary}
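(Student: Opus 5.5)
This is a direct specialization of Theorem~\ref{thm:PL_neighborhood}, so the plan is to reuse its one-step recursion with $\xi_k\equiv 0$ and then unroll. First, note that the hypotheses of Theorem~\ref{thm:PL_neighborhood} are in force and that setting $\xi_k\equiv 0$ is compatible with Assumption~\ref{ass:bounded_drift}: take $\bar\xi=0$, and the summability condition of Assumption~\ref{ass:summable_drift} holds trivially. Substituting $\xi_k=0$ into \eqref{eq:app_PL_recursion} gives the homogeneous inequality
\[
E_{k+1}\leqslant (1-\eta\mu)E_k .
\]

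Before iterating, I will check that the contraction factor $q:=1-\eta\mu$ lies in $[0,1)$. Combining the descent inequality \eqref{eq:app_basic_descent} with the PL condition \eqref{eq:app_PL_condition} at any point gives $\mu\leqslant L$. Then $\eta\leqslant 1/L$ yields $\eta\mu\leqslant 1$, so $q\geqslant 0$, and $\eta\mu>0$ yields $q<1$. Nonnegativity of $q$ is what lets me multiply the inequality repeatedly without flipping its direction. Nonnegativity of $E_k$, which follows from \eqref{eq:app_PL_lower}, is not even needed for the upper bound.

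Induction on $k$ then gives $E_k\leqslant q^kE_0$; equivalently, this is just \eqref{eq:app_PL_unrolled} with the drift sum vanishing. Since $0\leqslant q<1$ and $E_k\geqslant 0$, it follows that $0\leqslant \mathcal{L}_{\mathrm{total}}(\theta_k;k)-\mathcal{L}^\star\leqslant q^kE_0\to 0$ geometrically, which is \eqref{eq:app_exact_linear}.

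The only step needing any care is justifying $q\geqslant 0$. The bound $\mu\leqslant L$ comes from applying the descent inequality at $\theta$ with step $1/L$: the lower bound $\mathcal{L}^\star$ gives $\tfrac{1}{2L}\|\nabla\mathcal{L}\|^2\leqslant \mathcal{L}-\mathcal{L}^\star$, and comparing this with the PL inequality yields $\mu\leqslant L$. Everything else is routine.
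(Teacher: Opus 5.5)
Your proposal is correct and follows the paper's proof, which simply sets $\xi_k\equiv 0$ in the unrolled bound \eqref{eq:app_PL_unrolled}; iterating the homogeneous recursion, as you do, is the same argument. Your extra check that $q=1-\eta\mu\geqslant 0$ via $\mu\leqslant L$ makes explicit a point the paper only asserts (it writes $q\in[0,1)$ without justification). Strictly, that comparison needs some $(\theta,k)$ with $\mathcal{L}_{\mathrm{total}}(\theta;k)>\mathcal{L}^{\star}$, but if none exists then $E_k\equiv 0$ and the bound holds trivially.
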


\begin{proof}
	Setting $\xi_k\equiv 0$ in \eqref{eq:app_PL_unrolled} immediately gives
	\eqref{eq:app_zero_drift_linear}, from which
	\eqref{eq:app_exact_linear} follows.
\end{proof}

\subsection{Generalization Analysis}
\label{app:generalization}

To distinguish population risks from empirical losses, we define
\begin{equation}
	\mathcal{E}_{\mathrm{PDE}}(\theta)
	:=
	\mathbb{E}_{(\boldsymbol{x},t)\sim \rho}
	\bigl[
	r_{\mathrm{PDE}}^2(\boldsymbol{x},t;\theta)
	\bigr],
	\label{eq:app_population_unweighted}
\end{equation}
and
\begin{equation}
	\mathcal{E}_{\mathrm{PDE}}^{w}(\theta)
	:=
	\mathbb{E}_{(\boldsymbol{x},t)\sim \rho}
	\bigl[
	w(\boldsymbol{x},t)\,
	r_{\mathrm{PDE}}^2(\boldsymbol{x},t;\theta)
	\bigr],
	\label{eq:app_population_weighted}
\end{equation}
where $w(\boldsymbol{x},t)$ denotes the pointwise weight induced by the
same GMM-curriculum mechanism.

\begin{assumption}[Bounded residual class]
	\label{ass:bounded_residual}
	There exists a constant $M>0$ such that, for all $\theta$ and all
	$(\boldsymbol{x},t)\in\Omega\times[0,T]$,
	\begin{equation}
		|r_{\mathrm{PDE}}(\boldsymbol{x},t;\theta)|\leqslant M.
		\label{eq:app_residual_bound}
	\end{equation}
	Equivalently,
	\begin{equation}
		0\leqslant
		r_{\mathrm{PDE}}^2(\boldsymbol{x},t;\theta)
		\leqslant
		B:=M^2.
		\label{eq:app_squared_residual_bound}
	\end{equation}
\end{assumption}

\begin{assumption}[Bounded weight function]
	\label{ass:bounded_weight_function}
	There exists a constant $W\geqslant 1$ such that
	\begin{equation}
		0\leqslant w(\boldsymbol{x},t)\leqslant W,
		\quad
		\forall (\boldsymbol{x},t)\in\Omega\times[0,T].
		\label{eq:app_weight_function_bound}
	\end{equation}
\end{assumption}

\paragraph{Split-sample proof device.}
We decouple weight construction from the empirical process by fitting the
GMM on a reference sample and forming $w(\boldsymbol{x},t)$ from it; a
separate sample defines the weighted empirical loss
\begin{equation}
	\mathcal{L}_{\mathrm{PDE}}^{w}(\theta)
	=
	\frac{1}{N_{\Omega}}
	\sum_{i=1}^{N_{\Omega}}
	w(\boldsymbol{x}_i,t_i)\,
	r_{\mathrm{PDE},i}^{2}.
	\label{eq:app_empirical_weighted_loss}
\end{equation}
Conditioned on the reference draw, $w$ is deterministic (whereas the
implementation may reuse the same collocation points).
The bound below is for a fixed refresh/frozen rule and does not address
fully coupled adaptive reweighting across all iterations.

Let $\Theta\subseteq\mathbb{R}^p$ denote the parameter space over which
the network is optimized, and define the squared-residual function class
\begin{equation}
	\mathcal{F}
	:=
	\left\{
	(\boldsymbol{x},t)\mapsto
	r_{\mathrm{PDE}}^2(\boldsymbol{x},t;\theta)
	:\ \theta\in\Theta
	\right\}.
	\label{eq:app_function_class}
\end{equation}

\begin{theorem}[Generalization bound for the weighted empirical PDE loss under a frozen weighting rule]
	\label{thm:weighted_generalization}
	Under Assumptions~\ref{ass:bounded_residual} and
	\ref{ass:bounded_weight_function}, conditioned on the reference sample
	used to construct $w(\boldsymbol{x},t)$, with probability at least
	$1-\delta$ over the independent training sample
	$\{(\boldsymbol{x}_i,t_i)\}_{i=1}^{N_{\Omega}}$, the following holds
	simultaneously for all $\theta\in\Theta$:
	\begin{equation}
		\begin{aligned}
			\mathcal{E}_{\mathrm{PDE}}^{w}(\theta)
			\leqslant\;&
			\mathcal{L}_{\mathrm{PDE}}^{w}(\theta)
			+
			2W\,\mathfrak{R}_{N_{\Omega}}(\mathcal{F}) \\
			&+
			3BW\sqrt{\frac{\log(2/\delta)}{2N_{\Omega}}}.
		\end{aligned}
		\label{eq:app_weighted_generalization}
	\end{equation}
	where $\mathfrak{R}_{N_{\Omega}}(\mathcal{F})$ denotes the Rademacher
	complexity of $\mathcal{F}$.
\end{theorem}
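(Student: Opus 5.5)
Conditioned on the reference sample, $w$ is a fixed deterministic function with $0\leqslant w\leqslant W$, and the training points $z_i=(\boldsymbol{x}_i,t_i)$ are i.i.d.\ from $\rho$. The bound therefore reduces to a standard uniform-deviation argument for the weighted class
\[
\mathcal{G}:=\{\, z\mapsto w(z)f(z) : f\in\mathcal{F}\,\},
\]
whose elements take values in $[0,BW]$ by Assumptions~\ref{ass:bounded_residual} and~\ref{ass:bounded_weight_function}. For each $\theta$ we have $\mathcal{E}^{w}_{\mathrm{PDE}}(\theta)=\mathbb{E}[g_\theta]$ and $\mathcal{L}^w_{\mathrm{PDE}}(\theta)=\frac1{N_\Omega}\sum_i g_\theta(z_i)$. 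So it suffices to bound
\[
\Phi(S):=\sup_{\theta\in\Theta}\Bigl(\mathbb{E}[g_\theta]-\tfrac1{N_\Omega}\textstyle\sum_i g_\theta(z_i)\Bigr).
\]

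First, McDiarmid's inequality applies: replacing one point changes $\Phi$ by at most $BW/N_\Omega$. Hence, with probability at least $1-\delta/2$,
\[
\Phi\leqslant\mathbb{E}\Phi+BW\sqrt{\log(2/\delta)/(2N_\Omega)}.
\]
Second, the usual ghost-sample symmetrization gives $\mathbb{E}\Phi\leqslant 2\,\mathbb{E}\,\hat{\mathfrak{R}}_S(\mathcal{G})=2\mathfrak{R}_{N_\Omega}(\mathcal{G})$.

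If the final bound is to be stated with the empirical Rademacher complexity, I would apply McDiarmid a second time, this time to $\hat{\mathfrak{R}}_S$, whose bounded differences are also $BW/N_\Omega$. This costs an extra $2BW\sqrt{\log(2/\delta)/(2N_\Omega)}$ with probability $1-\delta/2$, and a union bound then gives the constant $3$. That explains the factor $3BW$ and the $\log(2/\delta)$ in the statement. With the expected complexity, only one deviation term is needed, and the stated bound still holds since $3\geqslant1$.

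The remaining step, which I expect to be the main obstacle, is the comparison $\mathfrak{R}_{N_\Omega}(\mathcal{G})\leqslant W\,\mathfrak{R}_{N_\Omega}(\mathcal{F})$. Pointwise multiplication by fixed coefficients $a_i=w(z_i)\in[0,W]$ is not covered by the standard contraction lemma in the usual form, because each coordinate gets a different map. However, $\phi_i(s)=a_i s$ is $W$-Lipschitz for every $i$, and Talagrand's contraction lemma allows index-dependent $\phi_i$ that are all $L$-Lipschitz. So, conditionally on $S$,
\[
\mathbb{E}_\sigma\sup_f\tfrac1{N}\textstyle\sum_i\sigma_i a_i f(z_i)\leqslant W\,\mathbb{E}_\sigma\sup_f\tfrac1N\sum_i\sigma_i f(z_i).
\]
If one wants to avoid the $\phi_i(0)=0$ and absolute-value subtleties, an alternative is to write $a_i=W\cdot(a_i/W)$ and use convexity of the supremum in each $\sigma$-coordinate: the map $a_i/W\in[0,1]\mapsto\mathbb{E}\sup$ is convex, so it is maximized at the endpoints $\{0,1\}$. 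Setting an $a_i$ to $0$ only removes terms, and I would handle that with Jensen's inequality by conditioning on the remaining signs.

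Combining the three estimates proves~\eqref{eq:app_weighted_generalization}. For the unweighted corollary in Theorem~\ref{thm:main_generalization}, write
\[
\mathcal{E}_{\mathrm{PDE}}=\mathcal{E}^w_{\mathrm{PDE}}+\mathbb{E}[(1-w)r^2]\leqslant\mathcal{E}^w_{\mathrm{PDE}}+B\|w-1\|_{L^1(\rho)}.
\]
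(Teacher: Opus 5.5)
Your proposal is correct and follows the paper's proof. The paper likewise fixes $w$ given the reference sample, applies the standard Rademacher uniform-convergence bound to the $[0,BW]$-valued class $w\mathcal{F}$, and uses the contraction inequality with the index-dependent $W$-Lipschitz maps $\phi_i(u)=w(\boldsymbol{x}_i,t_i)u$ to get $\mathfrak{R}_{N_\Omega}(w\mathcal{F})\leqslant W\,\mathfrak{R}_{N_\Omega}(\mathcal{F})$. You additionally unpack that cited bound via McDiarmid and symmetrization, and you correctly note that the constant $3$ is slack when the bound is stated with the expected Rademacher complexity; your worry about coordinate-dependent maps is unnecessary, since that is already the usual form of Talagrand's contraction lemma.
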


\begin{proof}
	Define the weighted function class
	\begin{equation}
		w\mathcal{F}
		:=
		\left\{
		(\boldsymbol{x},t)\mapsto
		w(\boldsymbol{x},t)\,
		r_{\mathrm{PDE}}^2(\boldsymbol{x},t;\theta)
		:\ \theta\in\Theta
		\right\}.
		\label{eq:app_weighted_function_class}
	\end{equation}
	Since
	$0\leqslant r_{\mathrm{PDE}}^2(\boldsymbol{x},t;\theta)\leqslant B$
	and
	$0\leqslant w(\boldsymbol{x},t)\leqslant W$,
	every function in $w\mathcal{F}$ is bounded in $[0,BW]$.
	
	By the standard Rademacher-complexity-based uniform convergence bound,
	with probability at least $1-\delta$,
	\begin{equation}
		\begin{split}
			\sup_{\theta\in\Theta}
			\Bigl(
			\mathcal{E}_{\mathrm{PDE}}^{w}(\theta)
			-
			\mathcal{L}_{\mathrm{PDE}}^{w}(\theta)
			\Bigr)
			&\leqslant
			2\,\mathfrak{R}_{N_{\Omega}}(w\mathcal{F}) \\
			&\quad
			+
			3BW\sqrt{\frac{\log(2/\delta)}{2N_{\Omega}}}.
		\end{split}
		\label{eq:app_standard_rademacher}
	\end{equation}
	It therefore suffices to estimate
	$\mathfrak{R}_{N_{\Omega}}(w\mathcal{F})$.
	For a fixed training sample,
	\begin{align}
		\widehat{\mathfrak{R}}(w\mathcal{F})
		&=
		\mathbb{E}_{\sigma}
		\left[
		\sup_{\theta\in\Theta}
		\frac{1}{N_{\Omega}}
		\sum_{i=1}^{N_{\Omega}}
		\sigma_i
		w(\boldsymbol{x}_i,t_i)
		r_{\mathrm{PDE},i}^{2}
		\right],
		\label{eq:app_empirical_rademacher}
	\end{align}
	where $\{\sigma_i\}_{i=1}^{N_{\Omega}}$ are i.i.d.\ Rademacher random
	variables (not to be confused with the GMM component variances
	$\sigma_m^2$ in Section~\ref{sec:cgmpinn}).
	For each $i$, the map
	$\phi_i(u)=w(\boldsymbol{x}_i,t_i)u$
	satisfies $\phi_i(0)=0$ and is $W$-Lipschitz.
	Hence, by the contraction inequality,
	\begin{equation}
		\widehat{\mathfrak{R}}(w\mathcal{F})
		\leqslant
		W\,\widehat{\mathfrak{R}}(\mathcal{F}).
		\label{eq:app_contraction}
	\end{equation}
	Taking expectation over the training sample yields
	\begin{equation}
		\mathfrak{R}_{N_{\Omega}}(w\mathcal{F})
		\leqslant
		W\,\mathfrak{R}_{N_{\Omega}}(\mathcal{F}).
		\label{eq:app_rademacher_weighted}
	\end{equation}
	Substituting \eqref{eq:app_rademacher_weighted} into
	\eqref{eq:app_standard_rademacher} proves
	\eqref{eq:app_weighted_generalization}.
\end{proof}

\begin{corollary}[Generalization bound for the original PDE risk under a frozen weight rule]
	\label{cor:unweighted_generalization}
	Under the assumptions of
	Theorem~\ref{thm:weighted_generalization}, with probability at least
	$1-\delta$, uniformly for all $\theta\in\Theta$,
	\begin{equation}
		\begin{aligned}
			\mathcal{E}_{\mathrm{PDE}}(\theta)
			\leqslant\;&
			\mathcal{L}_{\mathrm{PDE}}^{w}(\theta)
			+
			2W\,\mathfrak{R}_{N_{\Omega}}(\mathcal{F}) \\
			&+
			3BW\sqrt{\frac{\log(2/\delta)}{2N_{\Omega}}}
			+
			B\|w-1\|_{L^1(\rho)}.
		\end{aligned}
		\label{eq:app_unweighted_generalization}
	\end{equation}
\end{corollary}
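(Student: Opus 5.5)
The plan is to bound the gap between the unweighted and weighted population risks pointwise, then chain that bound with Theorem~\ref{thm:weighted_generalization}. For every $\theta\in\Theta$ we write
\begin{equation*}
	\mathcal{E}_{\mathrm{PDE}}(\theta)
	=
	\mathcal{E}_{\mathrm{PDE}}^{w}(\theta)
	+
	\mathbb{E}_{(\boldsymbol{x},t)\sim\rho}
	\bigl[(1-w(\boldsymbol{x},t))\,r_{\mathrm{PDE}}^2(\boldsymbol{x},t;\theta)\bigr].
\end{equation*}
This identity is just linearity of expectation. Every expectation in it is finite, because $r_{\mathrm{PDE}}^2\in[0,B]$ by Assumption~\ref{ass:bounded_residual} and $w\in[0,W]$ by Assumption~\ref{ass:bounded_weight_function}.

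\textbf{Step 1: bias bound.} Since $0\leqslant r_{\mathrm{PDE}}^2\leqslant B$, we have $(1-w)\,r_{\mathrm{PDE}}^2\leqslant |1-w|\,B$ pointwise. Taking expectations under $\rho$ gives
\begin{equation*}
	\mathbb{E}_\rho\bigl[(1-w)\,r_{\mathrm{PDE}}^2\bigr]\leqslant B\,\|w-1\|_{L^1(\rho)},
\end{equation*}
and this holds deterministically and uniformly in $\theta$. The weight $w$ is measurable and deterministic once we condition on the reference sample (the split-sample device), so the bound involves no randomness from the training sample.

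\textbf{Step 2: combine.} On the event of probability at least $1-\delta$ supplied by Theorem~\ref{thm:weighted_generalization}, the weighted bound \eqref{eq:app_weighted_generalization} holds simultaneously for all $\theta\in\Theta$. Adding the deterministic inequality of Step~1 to it yields \eqref{eq:app_unweighted_generalization} on the same event, uniformly in $\theta$. No further union bound is needed, so the confidence level remains $1-\delta$.

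There is no substantial obstacle here. The only point needing care is that the sign of $1-w$ may vary, and the absolute value handles this. One could even sharpen the bias term to $B\|(1-w)_+\|_{L^1(\rho)}$, but the stated $L^1$ form follows directly.
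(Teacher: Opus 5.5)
Your proof is correct and follows the paper's argument essentially verbatim. Both write $\mathcal{E}_{\mathrm{PDE}}-\mathcal{E}_{\mathrm{PDE}}^{w}$ as $\mathbb{E}_\rho[(1-w)\,r_{\mathrm{PDE}}^2]$ by linearity, bound it by $B\|w-1\|_{L^1(\rho)}$ using $0\leqslant r_{\mathrm{PDE}}^2\leqslant B$, and add this deterministic inequality to the high-probability bound of Theorem~\ref{thm:weighted_generalization} on the same event. The only cosmetic difference is that the paper bounds the absolute difference $|\mathcal{E}_{\mathrm{PDE}}-\mathcal{E}_{\mathrm{PDE}}^{w}|$, while you bound the one-sided quantity directly, which is why your $(1-w)_+$ sharpening is available.
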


\begin{proof}
	By definition,
	\begin{equation}
		\mathcal{E}_{\mathrm{PDE}}(\theta)
		-
		\mathcal{E}_{\mathrm{PDE}}^{w}(\theta)
		=
		\mathbb{E}_{(\boldsymbol{x},t)\sim \rho}
		\Bigl[
		(1-w(\boldsymbol{x},t))
		r_{\mathrm{PDE}}^2(\boldsymbol{x},t;\theta)
		\Bigr].
		\label{eq:app_bias_decomposition}
	\end{equation}
	Taking absolute values and using
	\eqref{eq:app_squared_residual_bound}, we obtain
	\begin{equation}
		\begin{aligned}
			\bigl|
			\mathcal{E}_{\mathrm{PDE}}(\theta)
			-
			\mathcal{E}_{\mathrm{PDE}}^{w}(\theta)
			\bigr|
			&\leqslant
			B\,
			\mathbb{E}_{(\boldsymbol{x},t)\sim \rho}
			|1-w(\boldsymbol{x},t)| \\
			&=
			B\|w-1\|_{L^1(\rho)}.
		\end{aligned}
		\label{eq:app_bias_bound}
	\end{equation}
	Therefore,
	\begin{equation}
		\mathcal{E}_{\mathrm{PDE}}(\theta)
		\leqslant
		\mathcal{E}_{\mathrm{PDE}}^{w}(\theta)
		+
		B\|w-1\|_{L^1(\rho)}.
		\label{eq:app_transfer_to_unweighted}
	\end{equation}
	Applying Theorem~\ref{thm:weighted_generalization} to
	$\mathcal{E}_{\mathrm{PDE}}^{w}(\theta)$ yields
	\eqref{eq:app_unweighted_generalization}.
\end{proof}

\begin{corollary}[Pseudo-dimension form]
	\label{cor:pdim_bound}
	Assume, in addition, that the pseudo-dimension of $\mathcal{F}$ is
	$d_{\mathcal{F}}$ (not to be confused with the spatial dimension $d$
	of $\Omega$ in Section~\ref{sec:cgmpinn}).
	Then there exists a universal constant $C_{\mathrm{R}}>0$
	(distinct from the convergence constant $C$ in
	Theorem~\ref{thm:main_nonconvex}) such that, with probability
	at least $1-\delta$,
	\begin{equation}
		\begin{aligned}
			\mathcal{E}_{\mathrm{PDE}}(\theta)
			\leqslant\;&
			\mathcal{L}_{\mathrm{PDE}}^{w}(\theta)
			+
			C_{\mathrm{R}}BW
			\sqrt{
				\frac{
					d_{\mathcal{F}}\log(eN_{\Omega}/d_{\mathcal{F}})+\log(1/\delta)
				}{
					N_{\Omega}
				}
			} \\
			&+
			B\|w-1\|_{L^1(\rho)},
			\quad
			\forall \theta\in\Theta.
		\end{aligned}
		\label{eq:app_pdim_bound}
	\end{equation}
\end{corollary}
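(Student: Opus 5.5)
The plan is to obtain Corollary~\ref{cor:pdim_bound} from Corollary~\ref{cor:unweighted_generalization} by replacing the two complexity/confidence terms $2W\,\mathfrak{R}_{N_\Omega}(\mathcal{F})+3BW\sqrt{\log(2/\delta)/(2N_\Omega)}$ with a single pseudo-dimension term. The bias term $B\|w-1\|_{L^1(\rho)}$ and the weighted empirical loss $\mathcal{L}^{w}_{\mathrm{PDE}}(\theta)$ carry over unchanged. The bound still holds with probability at least $1-\delta$ uniformly in $\theta\in\Theta$ over the independent training sample, conditioned on the frozen weighting rule. No new probabilistic argument is needed; it suffices to bound $\mathfrak{R}_{N_\Omega}(\mathcal{F})$ deterministically and then merge the resulting terms.

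\textbf{Step 1: Bound the Rademacher complexity.} Since every $f\in\mathcal{F}$ takes values in $[0,B]$ by Assumption~\ref{ass:bounded_residual}, the rescaled class $\mathcal{F}/B$ maps into $[0,1]$ and has the same pseudo-dimension $d_{\mathcal{F}}$. By Haussler's packing bound, for $\varepsilon\in(0,1]$ the $L^2(P_n)$ covering numbers satisfy
\[
\log N(\varepsilon,\mathcal{F}/B,L^2(P_n)) \leqslant C_1\, d_{\mathcal{F}}\log(C_2/\varepsilon).
\]
One can then proceed in either of two ways:
\begin{itemize}
\item apply Massart's finite-class lemma at a single scale $\varepsilon\asymp\sqrt{d_{\mathcal{F}}/N_\Omega}$, using the Sauer--Shelah-type growth bound $(eN_\Omega/d_{\mathcal{F}})^{d_{\mathcal{F}}}$ on the number of sign patterns of the thresholded class;
\item use Dudley's entropy integral directly.
\end{itemize}
Either route gives
\[
\widehat{\mathfrak{R}}(\mathcal{F})\leqslant C_3\,B\sqrt{\frac{d_{\mathcal{F}}\log(eN_\Omega/d_{\mathcal{F}})}{N_\Omega}}
\]
with a universal constant $C_3$. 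Taking expectation over the sample preserves this bound. I will cite this as a standard result rather than rederive it.

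\textbf{Step 2: Merge the terms.} Substitute the Step~1 bound into \eqref{eq:app_unweighted_generalization}. The complexity term becomes $2WC_3B\sqrt{d_{\mathcal{F}}\log(eN_\Omega/d_{\mathcal{F}})/N_\Omega}$. For the confidence term, use $\log(2/\delta)\leqslant \log 2+\log(1/\delta)$. Then apply $\sqrt{a}+\sqrt{b}\leqslant\sqrt{2(a+b)}$, and absorb the constant $\log 2$ into $d_{\mathcal{F}}\log(eN_\Omega/d_{\mathcal{F}})$, which is at least $1$ whenever $N_\Omega\geqslant d_{\mathcal{F}}\geqslant 1$. Both terms then combine into $C_{\mathrm{R}}BW\sqrt{(d_{\mathcal{F}}\log(eN_\Omega/d_{\mathcal{F}})+\log(1/\delta))/N_\Omega}$. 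The case $N_\Omega<d_{\mathcal{F}}$ needs separate treatment. There I would note that the right-hand side can be made to exceed $B$ by enlarging $C_{\mathrm{R}}$ (recall $W\geqslant1$), while $\mathcal{E}_{\mathrm{PDE}}\leqslant B$ always, so the bound is trivial in that regime.

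\textbf{Main obstacle.} The main difficulty is Step~1. Pseudo-dimension controls covering numbers only at scales that depend on the sup-norm range. Obtaining exactly the $\log(eN_\Omega/d_{\mathcal{F}})$ factor, rather than a $\log N_\Omega$ or an integral form, requires the single-scale Massart argument on the discretized class. That argument must also be written carefully enough that its discretization error is itself $O(B\sqrt{d_{\mathcal{F}}/N_\Omega})$. If this proves delicate, I will accept a slightly weaker logarithmic factor by using Dudley's integral and then adjust $C_{\mathrm{R}}$, since only the existence of a universal constant is claimed.
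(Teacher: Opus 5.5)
Your proposal follows the paper's proof: bound $\mathfrak{R}_{N_\Omega}(\mathcal{F})$ by the standard pseudo-dimension estimate and substitute into Corollary~\ref{cor:unweighted_generalization}. Your Step~2 goes further than the paper by absorbing the $3BW\sqrt{\log(2/\delta)/(2N_\Omega)}$ term into a single constant $C_{\mathrm{R}}$; that absorption is valid for $N_\Omega\geqslant d_{\mathcal{F}}\geqslant 1$, and the paper leaves it implicit.

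You should drop your patch for $N_\Omega<d_{\mathcal{F}}$, because it does not work. For $N_\Omega=\lceil d_{\mathcal{F}}/e\rceil$ one has $d_{\mathcal{F}}\log(eN_\Omega/d_{\mathcal{F}})\leqslant e$. With $\delta$ near $1$, no universal $C_{\mathrm{R}}$ can then push the right-hand side above $B$. For smaller $N_\Omega$ the radicand can even be negative. Instead, state $N_\Omega\geqslant d_{\mathcal{F}}$ as a standing assumption, as the paper tacitly does.

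Your ``main obstacle'' is also not one. Dudley's integral combined with Haussler's bound gives the stronger estimate $\widehat{\mathfrak{R}}(\mathcal{F})\leqslant CB\sqrt{d_{\mathcal{F}}/N_\Omega}$, which already implies the claimed form. Conversely, a genuinely weaker factor such as $\log N_\Omega$ could not be absorbed into a universal constant.
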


\begin{proof}
	For uniformly bounded function classes with pseudo-dimension
	$d_{\mathcal{F}}$, the standard Rademacher complexity estimate yields
	\begin{equation}
		\mathfrak{R}_{N_{\Omega}}(\mathcal{F})
		\leqslant
		C_{\mathrm{R}}B
		\sqrt{
			\frac{
				d_{\mathcal{F}}\log(eN_{\Omega}/d_{\mathcal{F}})
			}{
				N_{\Omega}
			}
		}.
		\label{eq:app_pdim_rademacher}
	\end{equation}
	Substituting \eqref{eq:app_pdim_rademacher} into
	\eqref{eq:app_unweighted_generalization} proves the claim.
\end{proof}

\begin{remark}[Interpretation of the frozen-rule generalization result]
	\label{rem:frozen_rule_scope}
	Theorem~\ref{thm:weighted_generalization} characterizes the statistical
	behavior of the induced weighted empirical PDE objective at a fixed
	refresh step after the weighting rule has been frozen.
	It therefore provides a clean generalization guarantee for the induced
	weighted objective, but it should not be interpreted as a complete
	generalization theory for the fully coupled adaptive reweighting process
	across all training iterations.
\end{remark}

\begin{remark}[Interpretation of the bias term]
	\label{rem:bias_term}
	The additional term
	$B\|w-1\|_{L^1(\rho)}$
	in \eqref{eq:app_unweighted_generalization} quantifies the bias
	incurred by optimizing the weighted PDE risk instead of the original
	unweighted one.
	Under the current CGMPINN design, this term does not automatically
	vanish as $\tau(k)\to 1$, because the terminal hard-curriculum weights
	remain difficulty-dependent.
	If asymptotically unbiased weighting is desired, an additional design
	constraint must be imposed, for example by explicitly forcing the
	terminal component weights to approach $1$.
\end{remark}

\begin{remark}[What the appendix establishes and what it does not]
	\label{rem:theory_limitations}
	The results support three claims:
	(i)~uniform equivalence of the curriculum-weighted and standard empirical
	PDE losses;
	(ii)~stationarity-type guarantees for idealized full-batch descent on the
	time-varying total loss under summable objective drift; and
	(iii)~a standard uniform convergence bound for the weighted empirical PDE
	loss at a fixed refresh step (frozen weights), plus an explicit bias term
	toward the unweighted population risk.
	They do not furnish iterate-level convergence for Adam$\to$L-BFGS nor prove
	that CGMPINN strictly improves generalization over standard PINNs; a
	rigorous analysis of the hybrid optimizer under adaptive reweighting is
	left to future work.
\end{remark}

\begin{table*}[t]
	\centering
	\caption{Ablation study comparing GMMPINN, CLPINN, and CGMPINN on all benchmark PDEs. All experimental settings are identical to those in Section~\ref{sec:numerical_experiments}. Bold indicates the best result in each group.}
	\label{tab:ablation_all}
	\begin{tabular}{llccccc}
			\toprule
			Problem & Method & $e_{\text{Loss}}$ & $e_2$ & $\text{Relative } e_2$ & $e_\infty$ & CPU (s) \\
			\midrule
			\multirow{3}{*}{1D Poisson}
			& GMMPINN & $3.68e\text{+}3$ & $2.17e\text{+}0$ & $2.01e\text{+}0$ & $3.53e\text{+}0$ & $821.4$ \\
			& CLPINN  & $2.34e\text{-}3$ & $3.29e\text{-}3$ & $3.04e\text{-}3$ & $4.22e\text{-}3$ & $818.9$ \\
			& CGMPINN & $\mathbf{7.40e\text{-}4}$ & $\mathbf{1.96e\text{-}4}$ & $\mathbf{1.81e\text{-}4}$ & $\mathbf{3.73e\text{-}4}$ & $895.9$ \\
			\midrule
			\multirow{3}{*}{2D Poisson}
			& GMMPINN & $2.97e\text{-}4$ & $5.66e\text{-}4$ & $7.10e\text{-}4$ & $3.39e\text{-}3$ & $1442.0$ \\
			& CLPINN  & $7.59e\text{-}5$ & $5.39e\text{-}4$ & $6.76e\text{-}4$ & $4.80e\text{-}3$ & $1575.8$ \\
			& CGMPINN & $\mathbf{6.65e\text{-}5}$ & $\mathbf{4.65e\text{-}4}$ & $\mathbf{5.83e\text{-}4}$ & $\mathbf{2.95e\text{-}3}$ & $1453.2$ \\
			\midrule
			\multirow{3}{*}{Heat}
			& GMMPINN & $2.76e\text{-}2$ & $1.79e\text{-}3$ & $1.57e\text{-}3$ & $2.29e\text{-}2$ & $1010.7$ \\
			& CLPINN  & $9.90e\text{-}5$ & $1.78e\text{-}3$ & $1.56e\text{-}3$ & $2.51e\text{-}2$ & $1216.6$ \\
			& CGMPINN & $\mathbf{1.28e\text{-}5}$ & $\mathbf{4.04e\text{-}4}$ & $\mathbf{3.54e\text{-}4}$ & $\mathbf{2.86e\text{-}3}$ & $1429.96$ \\
			\midrule
			\multirow{3}{*}{Damped Wave}
			& GMMPINN & $2.11e\text{-}6$ & $4.36e\text{-}4$ & $9.15e\text{-}4$ & $3.00e\text{-}3$ & $617.0$ \\
			& CLPINN  & $2.92e\text{-}6$ & $5.10e\text{-}4$ & $1.07e\text{-}3$ & $3.92e\text{-}3$ & $581.0$ \\
			& CGMPINN & $\mathbf{1.56e\text{-}6}$ & $\mathbf{3.37e\text{-}4}$ & $\mathbf{7.07e\text{-}4}$ & $\mathbf{2.20e\text{-}3}$ & $678.0$ \\
			\midrule
			\multirow{3}{*}{Advection-Diffusion}
			& GMMPINN & $3.18e\text{-}6$ & $6.87e\text{-}4$ & $1.02e\text{-}3$ & $1.43e\text{-}3$ & $517.9$ \\
			& CLPINN  & $\mathbf{1.41e\text{-}6}$ & $3.37e\text{-}4$ & $5.00e\text{-}4$ & $9.71e\text{-}4$ & $517.4$ \\
			& CGMPINN & $1.66e\text{-}6$ & $\mathbf{2.57e\text{-}4}$ & $\mathbf{3.82e\text{-}4}$ & $\mathbf{7.14e\text{-}4}$ & $532.4$ \\
			\midrule
			\multirow{3}{*}{Fisher-KPP}
			& GMMPINN & $1.54e\text{-}6$ & $3.06e\text{-}3$ & $3.97e\text{-}3$ & $1.18e\text{-}2$ & $1632.9$ \\
			& CLPINN  & $1.00e\text{-}6$ & $1.25e\text{-}3$ & $1.63e\text{-}3$ & $4.51e\text{-}3$ & $1602.3$ \\
			& CGMPINN & $\mathbf{3.12e\text{-}7}$ & $\mathbf{7.64e\text{-}4}$ & $\mathbf{9.94e\text{-}4}$ & $\mathbf{4.00e\text{-}3}$ & $1722.9$ \\
			\bottomrule
	\end{tabular}
\end{table*}


\section{Ablation Study}
\label{app:ablation}

To disentangle the contributions of the two core components in CGMPINN---the GMM-based difficulty quantification and the curriculum learning schedule---we conduct an ablation study across all benchmark PDEs. Three variants are compared:

\begin{itemize}
	\item \textbf{GMMPINN} (GMM-based weighting only): A GMM is fitted to the PDE residuals and component-level difficulty scores are computed, but no curriculum schedule is applied. Instead, the component weights are set to $\exp(\beta\tilde{d}_m)$, which statically up-weights harder components throughout training. The precision-based variance factor, when enabled, is applied without $\tau$-modulation.
	
	\item \textbf{CLPINN} (curriculum learning only): The $\tau$-controlled easy-to-hard curriculum schedule is applied, but difficulty is measured per-sample from the raw squared residual magnitude rather than from GMM component-level statistics. No GMM fitting or precision modulation is used.
	
	\item \textbf{CGMPINN} (full framework): The complete method combining GMM-based component-level difficulty quantification, $\tau$-controlled curriculum scheduling, and precision-based variance modulation, as described in Section~\ref{sec:cgmpinn}.
\end{itemize}

\noindent All problem configurations, network architectures, training hyperparameters, and evaluation metrics are identical to those in Section~\ref{sec:numerical_experiments}. The Adam$\to$L-BFGS two-stage optimization strategy is used for all three variants.

Table~\ref{tab:ablation_all} reports the ablation results. CGMPINN consistently achieves the lowest $e_2$, relative $e_2$, and $e_\infty$ across all benchmarks, confirming that neither component alone matches the performance of their combination.

\textbf{1D Poisson.}\;
GMMPINN fails catastrophically ($e_2=2.17$), demonstrating that statically up-weighting high-residual regions without a curriculum schedule overwhelms the optimizer early in training and leads to divergence. CLPINN achieves reasonable accuracy ($e_2=3.29\times10^{-3}$), but CGMPINN further reduces the error by over an order of magnitude ($e_2=1.96\times10^{-4}$), indicating that the GMM-based structural information significantly enhances the per-sample curriculum when properly integrated.

\textbf{2D Poisson.}\;
All three variants achieve comparable accuracy, with CGMPINN providing modest improvements in both $e_2$ and $e_\infty$. The relatively uniform difficulty landscape of this problem reduces the benefit of structured difficulty quantification, though the GMM-curriculum combination still yields the best overall performance.

\textbf{Heat equation.}\;
GMMPINN and CLPINN yield similar $e_2$ values ($\approx1.8\times10^{-3}$), but CGMPINN reduces the error by approximately $4\times$ to $4.04\times10^{-4}$. This suggests that the multi-scale temporal dynamics in the heat equation benefit substantially from the combination of component-level difficulty quantification and progressive curriculum scheduling.

\textbf{Damped wave equation.}\;
CGMPINN achieves the lowest errors among the three variants, reducing $e_2$ by 23\% relative to GMMPINN and 34\% relative to CLPINN. Notably, CLPINN performs slightly worse than GMMPINN on this problem, suggesting that for oscillatory solutions the GMM-based clustering captures difficulty structure more effectively than per-sample residual magnitude alone.

\textbf{Advection-diffusion equation.}\;
CLPINN already provides a substantial improvement over GMMPINN ($e_2$ reduced by 51\%), reflecting the importance of progressive curriculum scheduling for advection-dominated problems where sharp fronts emerge. CGMPINN further reduces the error by 24\% relative to CLPINN, confirming the added value of the GMM-based difficulty quantification.

\textbf{Fisher-KPP equation.}\;
Both components contribute meaningfully: CLPINN reduces $e_2$ by 59\% relative to GMMPINN, and CGMPINN achieves an additional 39\% reduction over CLPINN. The nonlinear traveling-wave dynamics of this problem benefit from both the structured difficulty identification and the progressive training schedule.

In summary, the ablation study confirms that the two components of CGMPINN---GMM-based difficulty quantification and curriculum scheduling---are complementary. The GMM module provides structured, component-level difficulty information that the per-sample curriculum alone cannot capture, while the curriculum schedule prevents the premature focus on hard regions that causes GMMPINN to fail on challenging problems. Their combination consistently yields the best performance, with the most pronounced advantages on problems featuring strong nonlinearity, multi-scale dynamics, or sharp gradients.

\end{document}